\theoremstyle{definition}
\newtheorem{myDef}{Definition}
\theoremstyle{plain}
\newtheorem{myTheorem}{Theorem}
\theoremstyle{plain}
\newtheorem{myLemma}{Lemma}
\theoremstyle{plain}
\newtheorem{myCorollary}{Corollary}
\theoremstyle{definition}
\theoremstyle{remark}
\newtheorem{myExample}{Example}
\newenvironment{myExampleCont1}{
   \addtocounter{myExample}{-1} \begin{myExample}[continued]}{
   \end{myExample}
   }
\newenvironment{myExampleCont2}{
   \addtocounter{myExample}{-2} \begin{myExample}[continued]}{
   \end{myExample}
   }
\newcommand*\circled[1]{\tikz[baseline=(char.base)]{
  \node[shape=circle,draw,inner sep=0pt] (char) {#1};}}
\newcommand\independent{\protect\mathpalette{\protect\independenT}{\perp}}
\def\independenT#1#2{\mathrel{\rlap{$#1#2$}\mkern2mu{#1#2}}}
\journal{arXiv}
\begin{document}
\begin{frontmatter}



\title{An $N$ Time-Slice Dynamic Chain Event Graph}

\author[label1]{Rodrigo A. Collazo \corref{cor1}}
\ead{Collazo@marinha.mil.br}

\author[label2]{Jim Q. Smith}
\ead{J.Q.Smith@warwick.ac.uk}

\address[label1]{Department of Systems Engineering, Naval Systems Analysis Centre, Rio de Janeiro 20091-000, Brazil }

\address[label2]{Department of Statistics, University of Warwick, Coventry CV4 7AL, United Kingdom }

\cortext[cor1]{Corresponding author}

\begin{abstract}
The Dynamic Chain Event Graph (DCEG) is able to depict many classes of discrete random processes exhibiting asymmetries in their developments and context-specific conditional probabilities structures.  However, paradoxically, this very generality has so far frustrated its wide application. So in this paper we develop an object-oriented method to fully analyse a particularly useful and feasibly implementable new subclass of these graphical models called the $N$~Time-Slice~DCEG (${N\text{T-DCEG}}$). After demonstrating a close relationship between an $N$T-DCEG and a specific class of Markov processes, we discuss how graphical modellers can exploit this connection to gain a deep understanding of their processes. We also show how to read from the topology of this graph context-specific independence statements that can then be checked by domain experts. Our methods are illustrated throughout using examples of dynamic multivariate processes describing inmate radicalisation in a prison.  
\end{abstract}

\begin{keyword}
chain event graph \sep event tree \sep dynamic Bayesian network \sep Markov process \sep dynamic model \sep multivariate time series \sep graphical model \sep conditional independence




\end{keyword}

\end{frontmatter}


\section{Introduction}
\label{sec:Introduction}

Discrete multivariate dynamic models have been studied widely. A new model class for discrete longitudinal data, called Dynamic Chain Event Graph (DCEG) \cite{Barclay.etal.2015}, has now been added to these modelling tools. A DCEG is a natural counterpart of a Chain Event Graph (CEG) \cite{Smith.Anderson.2008,Collazo.etal.2018}, which has proved to be useful for modelling discrete processes on populations exhibiting context-specific conditional independences. In this paper, we further advance the foundation of the DCEG model class in discrete time. We focus on the following objectives:
\begin{enumerate}[nosep]
\item We revisit the definition of DCEG models to circumvent the topological ambiguities when using the definition given by~\cite{Barclay.etal.2015}. 
\item We introduce a new model class of practical DCEGs called $N$~Time-Slice Dynamic Chain Event Graph ($N\text{T-DCEG}$). 
\item We develop an effective methodology to guide the construction and use of $N\text{T-DCEGs}$ based on objects that are components networked together through an infinite tree.
\end{enumerate}

In~\cite{Barclay.etal.2015} we defined a very general dynamic class of DCEG models that directly extends the CEG finite tree-based semantics to infinite trees without formally defining the probabilistic infinite tree-based model. This definition was originally designed around a graphical representation for models corresponding to simple semi-Markov processes. We subsequently found that many dynamic frameworks found in practice -- like the ones we use in the illustrations -- are most compellingly described in terms of properties of a Markov and not a semi-Markov process. Then the graphical model we originally built was not adequate to fully and unambiguously represent and interrogate the sorts of hypotheses we might have about such dynamic processes. As showed in~\cite{Collazo.2017, Collazo.Smith.2017}, these inadequacies are often associated with some topological ambiguities that prevent us from developing a rigorous framework for modelling the target process.

Here we therefore reset the DCEG framework and then define a new class of graphical model -- the $N\text{T-DCEG}$ -- specifically for modelling time-homogeneous Markov processes. An~$N\text{T-DCEG}$ enables us to construct a large-scale model for the whole infinite process based on a finite set of objects representing finite sub-processes. It directly supports learning and reasoning, and can be interpreted causally~\cite{Collazo.2017,Collazo.Smith.2018}. 

In addition, it is shown in~\cite{Collazo.2017,Collazo.Smith.2018}
	 that the class of Dynamic Bayesian Networks (DBNs) \cite{Dean.Kanazawa.1989,Nicholson.1992,Kjaerulff.1992}, a provenly useful class of dynamic models \cite{Dabrowski.Villiers.2015,Rubio.etal.2014,Marini.etal.2015,Li.etal.2014,Wu.etal.2015,Sun.Sun.2015,Khakzad.2015}, is a simple special case of this new DCEG subclass. Recall that a DBN corresponds to an extension of a BN for modelling and reasoning within dynamic systems whose progress is recorded over a sequence of discrete time steps. One consequence of this property is that a DBN can be refined using this subclass of DCEGs containing better fitting but still interpretable models to explain a given data set. For an example of an analogous link between Bayesian Networks (BNs) and CEGs in a non-dynamic setting, see~\cite{Barclay.etal.2013a}. 

Despite its flexibility, because it is based on a Directed Acyclic Graph a DBN and a BN share some well-known limitations \cite{Poole.Zhang.2003}. For example, processes are always defined using a preassigned collection of random vectors. So in any context where it is artificial to define a process directly through a set of conditional probabilities between the given components of a multivariate time series then its representation using a DBN can be restrictive and is often not appropriate. This is especially true when a process is characterised by context-specific conditional independences and highly asymmetric developments described in terms of observed events rather than random variables. Even if we attempt to embellish the BN/DBN models with context-specific information \cite{Boutilier.etal.1996, Poole.Zhang.2003, McAllester.etal.2008} or objects \cite{Koller.Pfeffer.1997,Bangso.Wullemin.2000} these embellishments will remain hidden in the conditional probability tables and not expressed graphically. 

One alternative class of graphs to the BN/DBN which is able to at least depict structural asymmetries directly is an event tree~\cite{Shafer.1996}. Building on it a CEG/DCEG model is obtained in the following three major steps: the representation of the qualitative structure of the process using an event tree~$\mathcal{T}$, a finite one for a CEG and an infinite one for a DCEG; the embellishment of~$\mathcal{T}$ with a probability measure~$\mathcal{P}$ through colours to obtain the staged tree~$\mathcal{ST}$; and eventually the transformation of~$\mathcal{ST}$ into the CEG/DCEG graph according to some simple graphical rules that collapse every repetitive structure of the staged tree into a single vertex or edge.

A CEG/DCEG model provides a more compact representation of the process than its corresponding staged tree although both graphs encapsulate the same information represented by a pair $(\mathcal{T},\mathcal{P})$. The elimination of redundant structure is particularly important in the dynamic setting where the process is supported by an infinite event tree but its DCEG model may nevertheless be a finite direct cyclic graph under some usual assumptions that will be introduced here. This graphical synthesis contrasts to other tree-based models such as a classification tree~\cite{Breiman.etal.1984} and a decision tree~\cite{Quinlan.Rivest.1989, Friedman.Goldszmidt.1998, Koller.Friedman.2009}.

We have noted that decision trees \cite{Friedman.Goldszmidt.1998} were used to explore context-specific local structures in BNs. A decision tree is constructed for each variable with which there are context-specific conditional independences associated in a BN. Thus, if a BN has context-specific independences corresponding to two or more variables, then it will be necessary to draw multiple decision trees to depict them. In contrast to a CEG/DCEG depicts all context-specific conditional independences using a single tree-based graph that represents the whole process. By not demanding to define a process using a pre-defined set of random variables, a CEG/DCEG captures broader local structures and provides us with a more flexible frame to concatenate the context-specific statements. It also provides graphical support for embodying logical constraints and managing sparse conditional probability tables without requiring additional dummy or degenerate variables. Example~\ref{ex:Radicalisation_Static_3variables} illustrates the advantages of this three-step framework using a CEG to model a very simple non-dynamic process of radicalisation of inmates.

\begin{myExample}
\label{ex:Radicalisation_Static_3variables}

\emph{
	The process by which a male prisoner might be radicalised is often explained through his social networks and how the population develops in time~\cite{Hannah.etal.2008,Neumann.2010,Silke.2011}. Although this example could be elaborated into a much larger and more realistic class, for the purpose of illustrating some methodological concepts in this paper we consider a simplification of this highly challenging process. We start constructing a vanilla CEG model for a non-dynamic setting that enables us to incorporate some dynamic structures in the subsequent sections.}

\emph{
	The physical movements and social interactions of prisoners are constantly being monitored and recorded in prisons. This helps experts to measure the frequency that a ``standard" prisoner  is able to socially interact with other prisoners who are identified as potential recruiters to radicalisation. Here this measure can take one of the following three levels: $\text{$s$- sporadic}$, $f$- frequent, $i$- intense. Experts also need to classify a prisoner into one of the following three categories with respect to his degree of radicalisation: resilient to ($r$), vulnerable to ($v$) or adopting ($a$) radicalisation. In each cycle of observations experts usually perform this classification based on two social-psychological drivers named social alienation and motivation to violence.} 
	
\emph{
	In a non-dynamic model, it is also important to understand how social interactions and the degree of radicalisation impact the probability of an inmate to remain in the prison (event~$n$) or to be transferred to another prison (event~$t$). Our focus will not be on developing methods for tracking a given prisoner or examining policy impacts across many prisons. Instead our single objective will be to help explain and predict how the population of a given prison might become radicalised.}

\emph{
	To represent this process using a CEG we first need to construct its corresponding event tree (Figure~\ref{fig:FiniteEventTree_Radicalisation}) that provides a snapshot of the multiple ways that the different events may unfold. The tree supports a modelling representation that allows domain experts to describe the qualitative aspects of the process using the observed events rather than random variables.  
}

\vspace{0pt}
\begin{figure}[t]
	\begin{center}
		\includegraphics[scale=0.35,angle=-90,origin=c,trim=0 135 0 0] {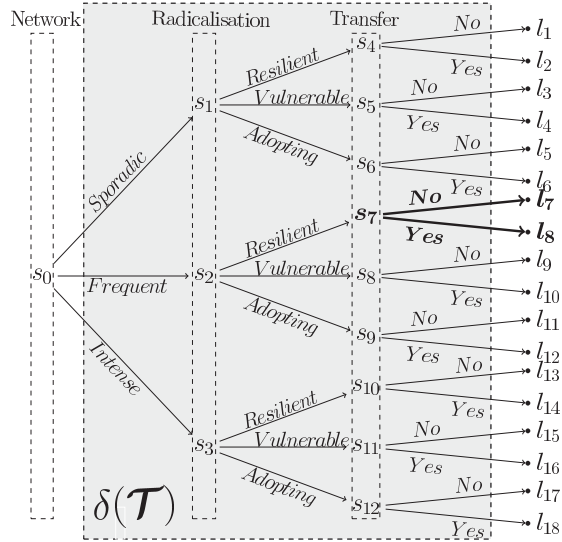}
	\end{center}
	\vspace{15pt}
	\caption{Event Tree~$\mathcal{T}$ \label{fig:FiniteEventTree_Radicalisation}}
	\vspace{0pt}
\end{figure} 

\emph{
	Experts can now use the event tree to elicit the conditional independence structures that may be embedded into the process. For instance, assume that a prison manager tells us that it is only practical to routinely transfer a very small number of inmates at any given time since he has to follow a rigorous and time-consuming bureaucratic procedure. He explains that in this context his focus tends to be on inmates who are suspect to be recently radicalised. For this reason, he believes that the probability to transfer an inmate given his degree of radicalisation does not depend upon his level of socialisation with well-known recruiters. Moreover, he also thinks that all those prisoners who have not adopted radicalisation are equally likely to be transferred. Figure~\ref{fig:FiniteStagedTree_Radicalisation} depicts the corresponding staged tree of this process after embellishing the event tree with a probability measure depicted by colours. To obtain the related CEG (Figure~\ref{fig:CEG_Radicalisation}), a modeller only needs to merge all vertices coloured the same into a single vertex and gathering all vertex~$l_i$, $i=1,\ldots,18$, into a single vertex~$w_\infty$. } 

\emph{
	Even without a deep mathematical understanding of the semantics of a staged tree or a CEG, we can intuitively read from these graphs that the conditional probability of transfer is identical given that an inmate is resilient or vulnerable. Observe that a CEG wraps all similar information in a common graphical structure, providing domain experts with a dense instructive summary of the whole process. Despite not being the case in this example, if an event has probability zero, then this can be expressed graphically by simply omitting its corresponding edge in the CEG. Such structural features play an important role in common dynamic processes whose probability conditional tables tend to contain many structural zeros.   
} 

\begin{figure}[t]
	\begin{center}
		\begin{subfigure}{.5\textwidth}
			\begin{flushleft}
				$\;\,$\includegraphics[scale=0.31,angle=-90,origin=c,trim=130 75 0 0] {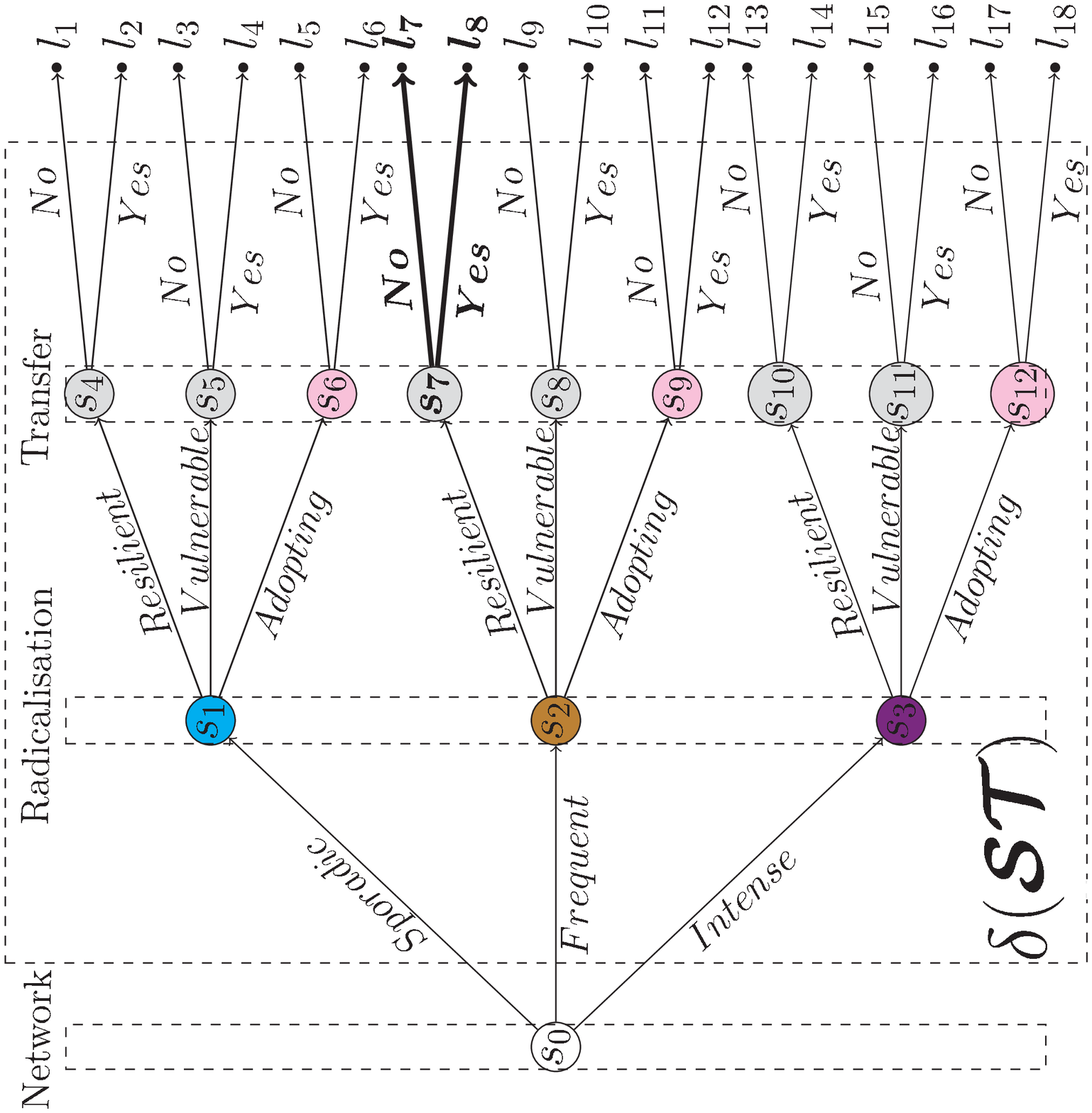}
			\end{flushleft}
			\vspace{15pt}
			\caption{Staged Tree~$\mathcal{T}$ \label{fig:FiniteStagedTree_Radicalisation}}
		\end{subfigure}%
		\begin{subfigure}{.395\textwidth}
			\begin{flushright}
				\includegraphics[scale=0.215,angle=-90,origin=c,trim=100 10 0 -90]
				{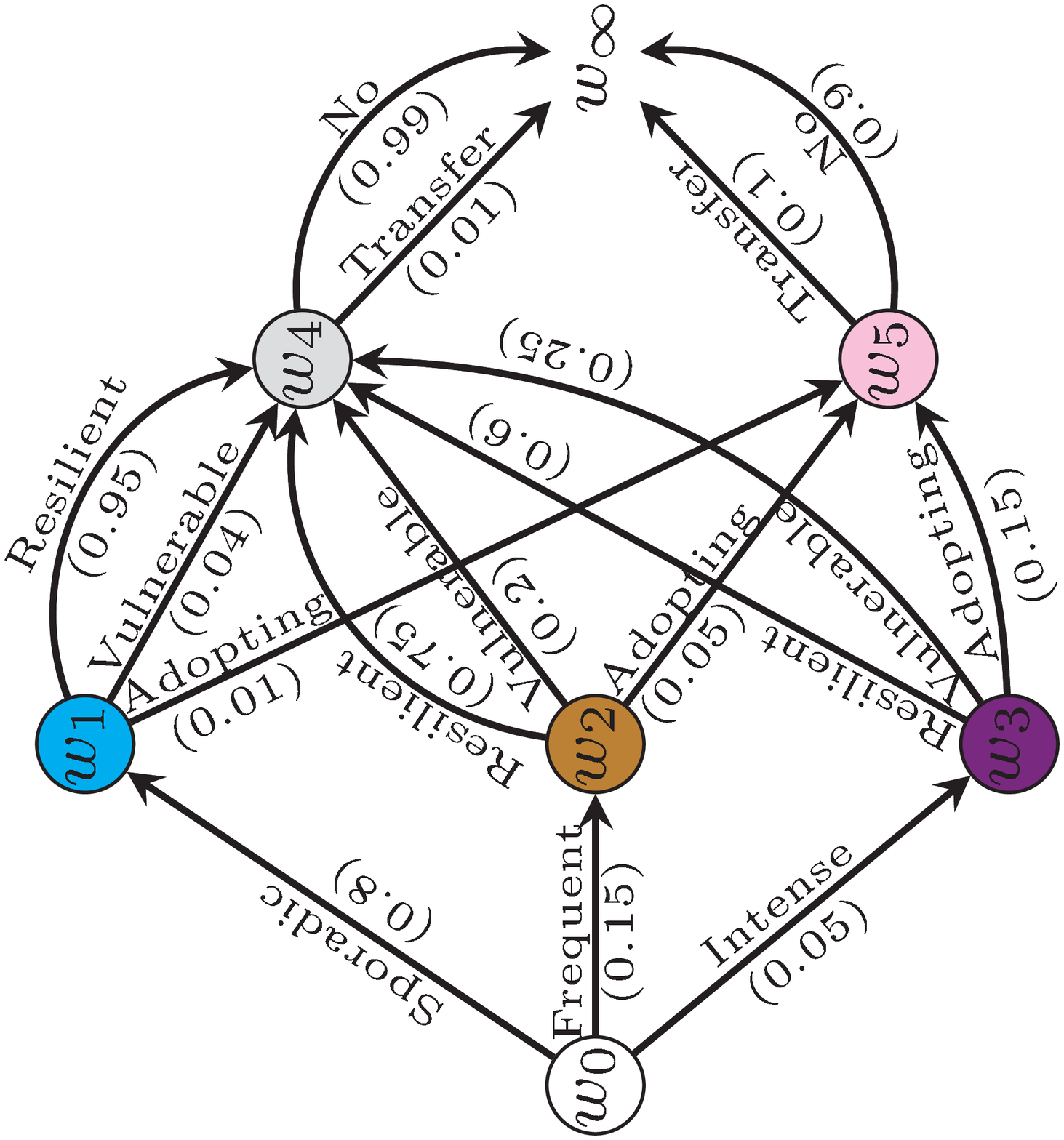}
			\end{flushright}
			\vspace{20pt}
			\caption{CEG~$\mathbb{C}$\label{fig:CEG_Radicalisation} $\!\!\!\!\!\!\!\!\!\!\!\!\!\!\!\!\!\!$}
		\end{subfigure} 
	\end{center}
	\vspace{-17pt}
	\caption{Staged Tree and CEG associated with Example~\ref{ex:Radicalisation_Static_3variables}. The conditional probability of an event given that an inmate is in a position corresponding to a particular vertex in the CEG is shown in parentheses. \label{fig:ST_CEG_Radicalisation}}
\end{figure}

\emph{Finally, a CEG model requires domain experts to elicit the conditional probabilities of an event that may immediately happen to an unit given his actual position in the process. In the CEG depicted in Figure~\ref{fig:CEG_Radicalisation},  domain experts need to quantify six conditional probabilities, one for each vertex~$w_i$, $i=0,\ldots,5$. For example, they may define the conditional probability of an inmate been transferred given that he is in the position~$w_5$ as~$0.1$. Note that the vertex~$w_5$ represents inmates who are adopting radicalisation regardless of their levels of social interaction with recruiters.}

\emph{This CEG model translates into a BN model. We first have to identify a set of suitable random variables, which is not necessarily unique. From the CEG (Figure~\ref{fig:CEG_Radicalisation}) we can discern three categorical variables:  variable~Network~($N$) describing the categories~$s$, $f$ and~$i$; variable~Radicalisation ($R$) signalising the levels~$r$, $v$ and~$a$; and variable~Transfer ($T$) distinguishing between the classes~$n$ and~$t$. A method to construct random variables from the CEG topology is formally presented in~\cite{Smith.Anderson.2008} and~\cite{Collazo.etal.2018}. The CEG model then enables us to read two conditional independence statements: $T$ is independent of~$N$ given~$R$; and $T$ is independent of~$N$ given that $R$~does not assume value~$a$. Figure~\ref{fig:BN_Radicalisation} depicts a possible BN to represent this process. Note that without introducing new random variables this BN cannot represent graphically the context-specific hypothesis above associated with the variable~T given $(R\neq a)$. This kind of asymmetric conditional independences can only be expressed inside its conditional probability tables or through some supplementary semantics with further context-specific information, such one provided by context-specific BNs.}

\vspace{0pt}
\begin{figure}[t]
	\begin{center}
		\includegraphics[scale=0.25,angle=-90,origin=c,trim=30 130 400 0]{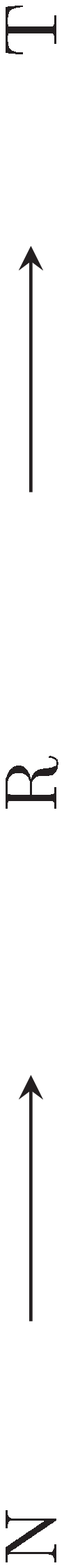}
	\end{center}
	\caption{The BN associated with Example \ref{ex:Radicalisation_Static_3variables} \label{fig:BN_Radicalisation}}
	\vspace{0pt}
\end{figure} 
\end{myExample}

There has been previous interest in developing dynamic classes of CEGs. In~\cite{Freeman.Smith.2011b} the authors proposed a CEG multi-process model where different cohorts of units entering the system at different discrete time points are observed during only one time interval. Although trees are identical across all the cohorts the transition probabilities are allowed to shift dynamically. In contrast, a DCEG is designed for a different purpose. It describe how a single cohort of units who arrive in the system simultaneously might evolve over successive time points.

To pursue our objectives, in Section~\ref{sec:Factorisation_Tree} we develop a new object approach to elicit a process using an infinite tree. An object compactly depicts an event tree that is defined according to the temporal structure and domain information associated with the corresponding  process. This approach enables us to incorporate time-invariant information in a DCEG model and to split the modelling task between different domain expert teams~\cite{Collazo.Smith.2018}. By doing this we ensure the consistency of the composite model. The importance of this last issue is discussed, for example, in \cite{Leonelli.Smith.2015, Smith.etal.2015}.  Here we will also introduce a formal framework to embed a probabilistic map on an infinite tree and the tree-based objects. This development draws on earlier converted technical advances concerning rather different structures called probabilistic automata~\cite{Segala.1995,Stoelinga.2002}.

After discussing some topological concepts on trees, we proceed to give a general representation of a finite DCEG in terms of a particular graphical periodicity and time-homogeneity in Section~\ref{sec:DCEG_class}. This characterisation coupled with tree-based objects stimulates us to introduce the $N$~Time-Slice Dynamic Chain Event Graph~($N\text{T-DCEG}$). We then formalise a link between a Markov state-transition diagram and an~$N\text{T-DCEG}$. In Section~\ref{sec:CEG_NTDCEG} we explain how a particular set of CEGs can be associated with a DCEG model. This connection enables us to propose an algorithm to construct $N$T-DCEG models using the tree-based objects. In Section~\ref{sec:conditional_independence} we show how the implicit conditional independence relationships encoded in an $N\text{T-DCEG}$ can be read from its representation. We conclude the paper with a short discussion. 

\section{Describing a process using a Tree}
\label{sec:Factorisation_Tree}

In this section, we present the topological concepts associated with the first two of those steps taken to obtain a DCEG: the elicitation of an event tree and its subsequent embellishment into a staged tree. This motivates us to propose a new way to construct an event tree and a staged tree based on objects. We also explore periodicity in event trees and time-homogeneity in staged trees. These concepts are very important because they play a key role in the construction of a finite DCEG graph as we will discuss in Section~\ref{sec:DCEG_class}. We are then able to formally articulate the objects with these two ideas and time-invariant information. 

\subsection{An Event Tree}
\label{subsec:Event_Tree}

Recall that in a direct graph~$\mathbb{G}=(V_{\mathbb{G}},E_{\mathbb{G}})$ a \emph{walk} of length~$L$ is a sequence of vertices $(v_{i_0},\ldots,v_{i_L})$ such that every edge $(v_{i_k},v_{i_{k+1}})$, ${k=0,\ldots,L\!-\!1}$, pertains to $E_{\mathbb{G}}$. A walk~$(v_{i_0},\ldots,v_{i_L})$ that all vertices~$v_{i_j}$ in~$\{v_{i_k}, k=0,\ldots,L\}$ are distinct is called a \emph{path$\!$}~\cite{Cowell.etal.2007,Diestel.2006}. Let the parent set ${pa(v)\!=\!\{v'\!\!\in\! V_{\mathbb{G}}; (v'\!,v) \!\in\! E_{\mathbb{G}}\}}$ be the set of vertices from which a vertex~$v$ unfolds. Also let the child set~$ch(v)=\{v' \in V_{\mathbb{G}}; (v,v') \in E_{\mathbb{G}}\}$ be the set of vertices that unfold from~$v$.
An event tree \citep{Shafer.1996} is formally defined below. 

\begin{myDef}
	\label{def:event_tree}
An \textbf{event tree} $\mathcal{T}=(V,E)$ is a rooted directed tree where all edges are labelled and where the directionality of the tree guarantees that all non-root vertices has only one parent  and descend from the root vertex.  If $V$ is a finite set the event tree is said to be \emph{finite}, otherwise it is said to be \emph{infinite}. In this paper, every vertex in V has a finite set of children.	
\end{myDef}

By focusing on the qualitative description of a process, an event tree is an important tool because it provides a framework around which a technical expert can explain the process that needs to be statistically modelled. For example, each path in the tree describes the various possible sequences of events a particular unit can experience along the process under analysis. 

There may be two types of vertices in an event tree~$\mathcal{T}=(V,E)$, a situation and a leaf. A \emph{situation}~$s_i$ is associated with a vertex~$v_i$ in~$V$ that has at least one child and a \emph{leaf}~$l_j$ corresponds to a vertex~$v_j$ with no child. The labelled edges in~$E$ that emanate from a vertex~$v_i$ in $V$ characterise events that can happen to a unit in a particular situation $s_i$ along the process. So, a situation~$s_i$ represents a transitional state. In contrast, a leaf~$l_j$ corresponds to a possible terminating state of the process. However, a situation~$s_i$ and a leaf~$l_j$ are both characterised as the result of a sequence of events depicted on the labelled edges of the path from the root vertex to its corresponding vertex $v_i$ and $v_j$, respectively. 

A \emph{floret} $\mathcal{F}l(s_i)=(V(s_i),E(s_i))$ is a directed subgraph of~$\mathcal{T}$, which is rooted at the situation~$s_i$ and whose other vertices are the children of $s_i$ in $\mathcal{T}$. Formally, $V(s_i)=\{s_i\} \cup ch(s_i)$ and $E(s_i)=\{e \in E; e=(s_i,v), v\in ch(s_i)\}$. This summarises the possible events that can unfold immediately from $s_i$. It therefore depicts the states that a unit can reach in one step once it has arrived at $s_i$. These concepts are illustrated in the example below.

\begin{myExampleCont1}
\emph{
In Figure \ref{fig:FiniteEventTree_Radicalisation} if a prisoner is at situation $s_7$, the tree represents the hypothesis that he is unlikely to be radicalised despite holding frequent social interactions with recruiters. The floret $\mathcal{F}l(s_7)$ associated with situation $s_7$ is depicted in bold and describes graphically how this process may unfold from $s_7$: here that the individual may be transferred or not.}
\end{myExampleCont1}

Unfortunately, an event tree may quickly become large, both in width and depth. In practice to keep it tractable and useful, it is therefore important to develop a systematic framework enabling us to represent it more compactly. This is particularly important when we need to handle processes supported by infinite trees as it is the focus of this paper. To obtain some insights about how to do it, consider first the example below.

\begin{myExampleCont1}
\emph{Recall the radicalisation process described in Figure~\ref{fig:FiniteEventTree_Radicalisation}. Note that the multiple states of this process can be graphically hidden using the big rectangle~$\delta(\mathcal{T})$ in Figure~\ref{fig:FiniteEventTree_Radicalisation} that contains all situations between the root vertex and the set of leaf vertices. In doing this we are able to represent an event tree as a special tree object~$\Delta(\mathcal{T})$. This emphasises only the starting situation of the process (situation~$s_0$) and its possible outcomes (leaves~$l_i$, $i=1,\ldots,18$). To illustrate this convention the event tree of Figure~\ref{fig:FiniteEventTree_Radicalisation} is schematically depicted by the \emph{event tree object}~$\Delta(\mathcal{T})$ in Figure~\ref{fig:Symbol_FiniteEventTree_Radicalisation}. In this case the rectangular vertex~$\delta(\mathcal{T})$ summarily represents a forest graph whose components are the florets associated with positions~$s_1$, $s_2$ and~$s_3$.}
\end{myExampleCont1}

\begin{figure}[h!]
\centering
$\qquad \; \;$
\begin{minipage}{.3\textwidth}
  \centering
  \includegraphics[scale=0.23,angle=-90,origin=c,trim=0 150 -70 -50] {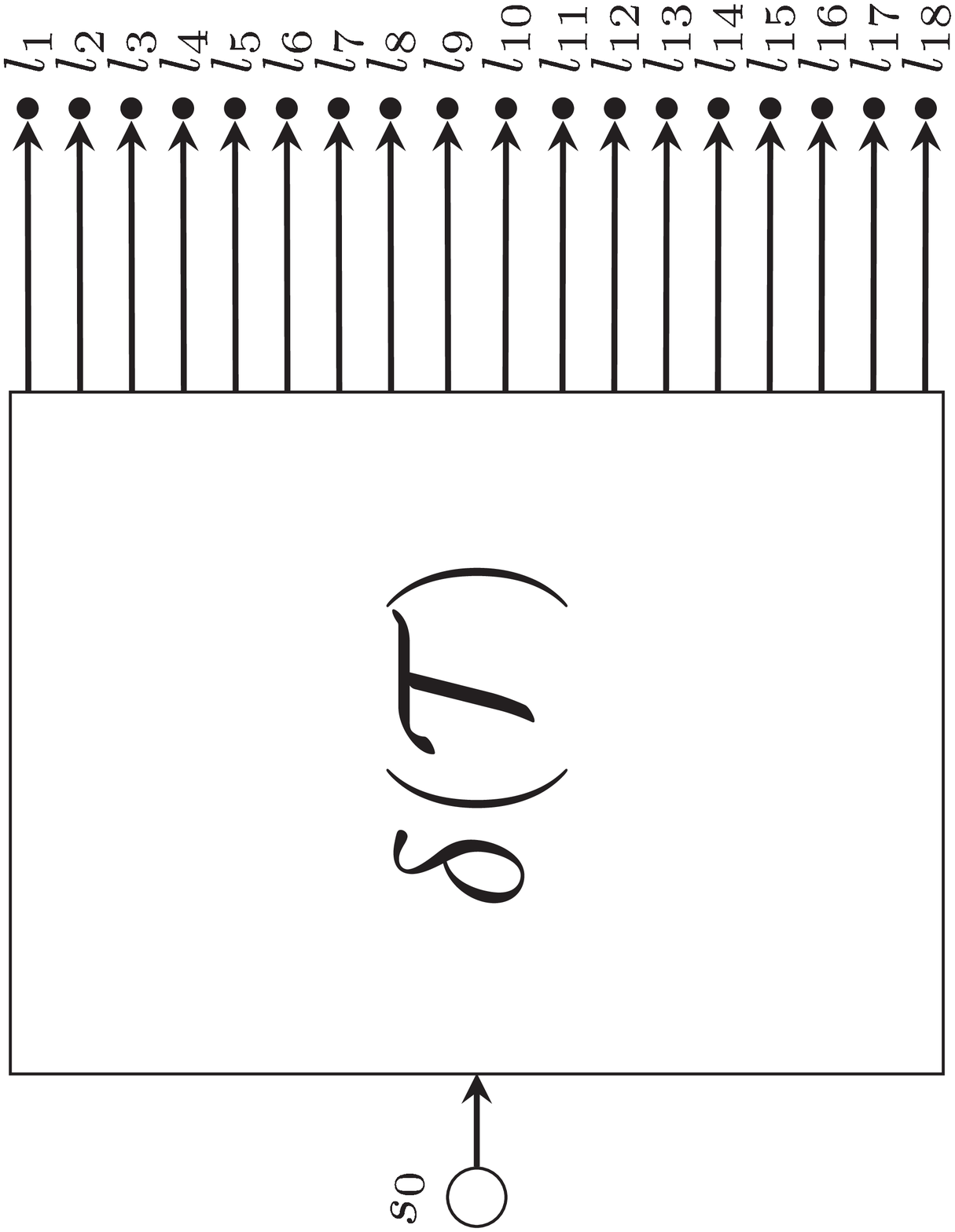}
 \caption{The \emph{event tree object} $\Delta(\mathcal{T})$ that symbolises the Event Tree associated with Example \ref{ex:Radicalisation_Static_3variables}. \label{fig:Symbol_FiniteEventTree_Radicalisation}}
\end{minipage}%
\begin{minipage}{.1\textwidth}
$\qquad$
\end{minipage}
\begin{minipage}{.5\textwidth}
  \centering
  \includegraphics[scale=0.315,angle=-90,origin=c,trim=0 150 -70 0] {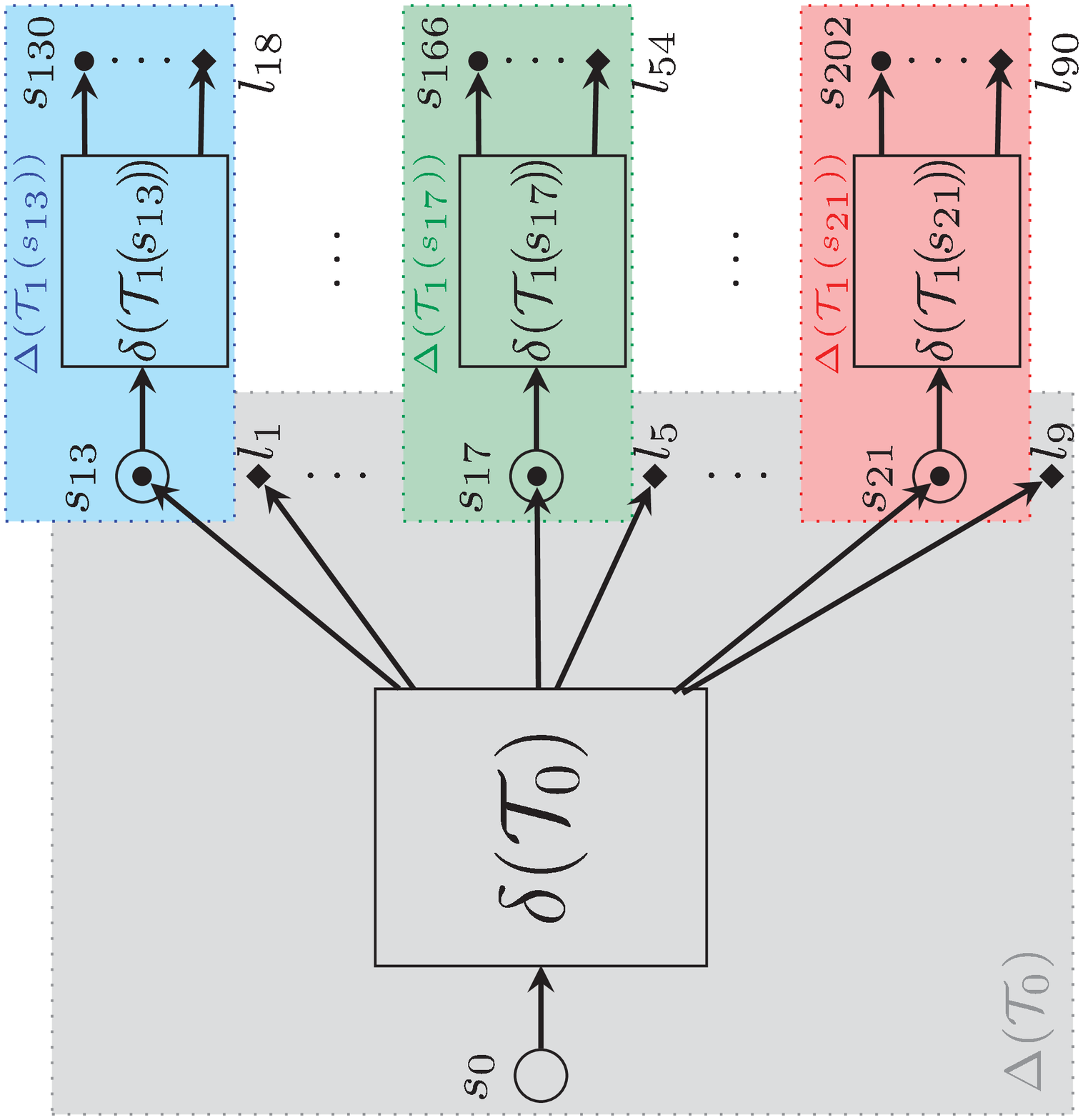}
 \caption{The Event Tree~$\mathcal{T}_1$ associated with Example~\ref{ex:Radicalisation_Dynamic_3variables} that is depicted using event tree objects, each of which is depicted by a colourful rectangle. \label{fig:ObjectOriented_InfiniteEventTree_Radicalisation_3var}}
\end{minipage}
\end{figure}

Let $l(\mathcal{T})$ be the set of leaf nodes of an event tree~$\mathcal{T}$ and $V(\Delta)$ be the interface set constituted by the root and leaf nodes of $\mathcal{T}$. Drawing some analogies with Object-Oriented BNs \cite{Koller.Pfeffer.1997,Bangso.Wullemin.2000} in terms of encapsulating information using objects that can be connected among themselves only via interface nodes we now define a finite event tree object as in Definition \ref{def:tree_object}. 

\begin{myDef}
\label{def:tree_object}
A \textbf{finite event tree object} $\Delta(\mathcal{T})$ is composed by a single input state~$s_0, s_0 \in \mathcal{T}$, and a set of output states~$l(\mathcal{T})$. It compactly depicts a finite event tree~$\mathcal{T}$ by a rectangular vertex~$\delta(\mathcal{T})$, the point vertex set $V(\Delta)$ and a set of direct edges $E(\Delta)=\{(s_0,\delta)\} \cup \{(\delta,s);s\in l(\mathcal{T})\}$, such that:  
\begin{enumerate}[nosep]
\item the initial node $s_0, s_0 \in \mathcal{T}$, is upstream of the rectangle $\delta(\mathcal{T})$;
\item all leaf nodes in $l(\mathcal{T})$ are downstream of the rectangle $\delta(\mathcal{T})$; and
\item the rectangular vertex~$\delta(\mathcal{T})$ represents a forest graph \cite{Diestel.2006} whose components are the event subtrees of~$\mathcal{T}(s_i)$ that unfold from each situation~$s_i$, ${s_i \in ch(s_0)}$, until reaching a subset of situations in
\vspace{-9pt}
$$\{s_j;s_j \in pa(l_i) \text{ for some } l_i \in l(\mathcal{T})\}.\vspace{-5pt}$$
\vspace{-19pt}
\end{enumerate}
If $\mathcal{T}$ is a tree whose vertex set is composed only by a root vertex and the leaf vertices, then the rectangular vertex~$\delta(\mathcal{T})$ represents an empty graph.
\end{myDef}

An event tree object is a process-driven object. Analogous to a compact graphical representation of a BN object \cite{Bangso.Wullemin.2000}, it hides the unfolding of the process and keeps visible only the interface set $V(\Delta)$ which represents the initial state and the possible final states  of the process. The interface set enables us to combine together sub-processes that are components of an ongoing process. 

By doing this it is possible to unfold the same event tree object~$\Delta(\mathcal{T})$ from different leaf vertices~$l_i'$ of a given event tree~$\mathcal{T}'$. When we do this, the node~$s_0$ in~$\Delta(\mathcal{T})$ plays the role of an input vertex  that assumes the value corresponding to the state of a particular unit at a leaf~$l_i' \in l(\mathcal{T}')$. The leaf nodes~$l_i$ in~$\Delta(\mathcal{T})$ are the set of output states that a unit at~$l_i'$ can arrive at after developing according to the local process depicted by~$\Delta(\mathcal{T})$, i.e. the logical concatenation of event propositions representing by the states~$l_i'$ and~$l_i$. In this sense, the rectangular vertex~$\delta(\mathcal{T})$ can be interpreted as a symbolic transformation of states associated with a particular local process. 

To model a longitudinal process, a useful strategy is to just explore particular types of domain information over time. It is straightforward to capture this structure by constructing an event tree using process-driven objects defined according to the time-slices and the unfolding paths of the event tree. Our idea is to construct subtrees corresponding to a subprocess at time-slice~ $t$ given a particular path in the event subtree that describes the development up to the current time point. Using such a representation, parallel panels of experts can take part in the construction of the whole event tree~\cite{Collazo.Smith.2018}. This composite then integrate the domain beliefs coherently and will be compactly presented in terms of objects. For technical consistency, henceforth we assume that only a finite number of events may happen over each time-slice associated with the development of a process.

So, take a situation $s_i$ at any time-slice before the beginning of interval~${t+1}$. Let $\mathcal{T}_{t}(s_i)$ denote the finite tree that unfolds from $s_i$ and stops at the end of interval~$t$. Now write $\mathcal{T}_t \equiv \mathcal{T}_t(s_0)$ and let $\mathcal{T}(s_i)$ be the whole event tree that unfolds from $s_i$. When $\mathcal{T}(s_i)$ is an infinite tree, we sometimes write $\mathcal{T}_\infty(s_i)$ to highlight this fact. The unfolding process in time-slice $t$ is then represented by a collection of event subtrees $\mathcal{F}\!o_t=\{\mathcal{T}_t(s_i); s_i \in l(\mathcal{T}_{t-1})\}$. This collection then constitutes a forest graph whose components are given by $\mathcal{T}_t(s_i)$. Here we use the convention that in an event tree a terminating event is indicated by a diamond shape vertex. This framework is illustrated in the example below. 

\begin{myExample}
	\label{ex:Radicalisation_Dynamic_3variables}
	\emph{
		Radicalisation has a psychosocial dynamic, which is naturally modelled as a process developing over time \cite{Hannah.etal.2008, Neumann.2010, Silke.2011,Christmann.2012,Moghaddam.2005,Silber.Bhatt.2007,Precht.2007}.
		In this setting, a dynamic model enables modellers and domain experts to construct a more accurate and well-defined representation of the process.
		Here suppose that the prison manager tells us that the counts of social interactions, degrees of radicalisation and prison transfers described in Example~\ref{ex:Radicalisation_Static_3variables} are recorded weekly. Being an isolated environment, he thinks that a change in the underlying mechanisms of this process only happens rarely in the short to medium term: the very recent events are the main psychosocial drivers of the prison population. In this scenario, it is plausible that the time-homogeneous and 1-Markov conditions might hold. Hence, as we will formally show in Section~\ref{sec:CEG_NTDCEG}, it is not necessary to elicit the infinite event tree but only the event tree associated with the first two time-slices of this dynamic.
	}
	
	\emph{
		In Figure~\ref{fig:ObjectOriented_InfiniteEventTree_Radicalisation_3var} the event tree $\mathcal{T}_1$ depicts the whole set of events that can unfold from the initial situation $s_0$ to the end of time-slice~$1$. Note that the process terminates at leaves~$l_i, i=1,\ldots,90$, because a prisoner is transferred. Also observe that in this particular example, although this is not a necessary feature in general, every process-driven object happens to be topologically identical to the one depicted by the finite tree $\mathcal{T}$ in Figure~\ref{fig:FiniteEventTree_Radicalisation}.
	}

	\emph{
		The finite tree~$\mathcal{T}_1(s_{13})$ summarises what can happen at time-slice~$1$ to a prisoner who keeps sporadic social contacts with identified extremist recruiters, is resilient to the radicalisation process and has been not transferred at the initial interval. In contrast, the infinite event tree $\mathcal{T}_\infty(s_{13})$ rooted at situation $s_{13}$ describes all possible events that can happen to this type of prisoners. The forest
		\vspace{-5pt}
		$$\mathcal{F}\!o_1=\{\mathcal{T}_{1}(s_i); i=13,\ldots,21\} \cup \emptyset \vspace{-5pt}$$
		then depicts how radicalisation, transfer and network events associated with a prisoner can unfold at time-slice $1$. Observe that the empty set is included in $\mathcal{F}\!o_1$ to stress that the process terminates at leaves ${l_i, i=1,\ldots,9}$.
	}
\end{myExample}

Definition~\ref{def:infinite_tree_object} introduces a formal object that can provide a compact and formal depiction of the type of infinite event tree we need. This is particularly useful when analysts and experts adopt a top-down approach to model a dynamic process with time-invariant aspects using an event tree. In this case, an infinite event tree object stimulates them to explore alternative ways of encapsulating information and working in parallel with different collaborative groups.        

\begin{myDef}
	\label{def:infinite_tree_object} 
	An \textbf{infinite event tree object} $\Delta(\mathcal{T}_\infty)$ represents an infinite tree. An infinite tree object differentiates graphically from that defined for a finite tree in two aspects. First, the leaf vertices of a finite tree are replaced by a single vertex labelled by a symbol $\infty$. Second, this vertex $\infty$ is connected to the rectangular vertex~$\delta(\mathcal{T}_\infty(s_i))$ by a dashed line.
\end{myDef}

In fact, as when building objects elicited during the construction of an object-oriented BNs~\cite{Bangso.Wullemin.2000,Neil.etal.2000}, \emph{tree} objects can also be defined using different domain aspects other than the overarching time-slice division. This property can be especially useful for incorporating time-invariant aspects of units observed in a system into our models. Henceforward, let  $\mathcal{T}_{-1}$ be a finite event tree that represents this time-invariant information, where each root-to-leaf path in~$\mathcal{T}_{-1}$ characterises a particular type of units. To model a process that needs to distinguish between different types of units, we first need to elicit an object $\Delta(\mathcal{T}_{-1})$ that encapsulates this aspect. Only afterwards do we plug-in the event tree objects corresponding to processes that unfold from each leaf of~$\mathcal{T}_{-1}$ as presented above.  This framework is illustrated in the example~\ref{ex:Radicalisation_Dynamic_Covariate}.

\begin{myExample}[Extended Dynamic Radicalisation Process]
	\label{ex:Radicalisation_Dynamic_Covariate} 
	\emph{
		Suppose that Example~\ref{ex:Radicalisation_Dynamic_3variables} refers to a British prison. Assume that all previous conditions continue to hold. We would like now to control the process in this prison for prisoners' previous conviction (Yes or No) and nationality (British or Foreign). To represent this dynamic using an event tree is straightforward as showed in Figure~\ref{fig:ObjectOriented_InfiniteEventTree_Radicalisation_Covariate}.}
	
	\begin{figure}[ht]
		\begin{center}
			\includegraphics[scale=0.35,angle=-90,origin=c,trim=0 30 0 -130]
			{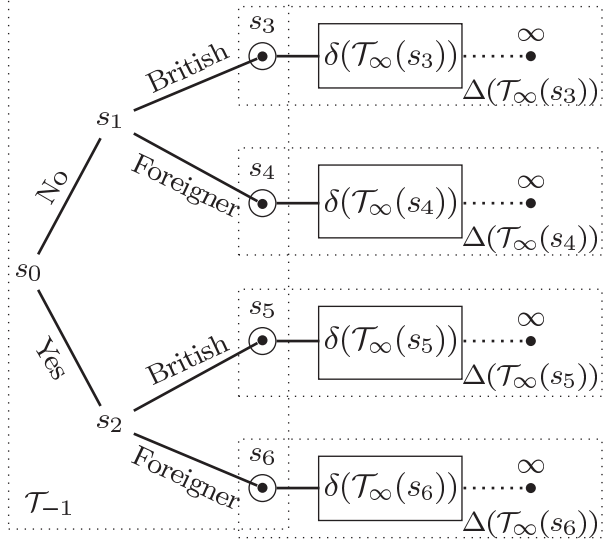}
		\end{center}
		\caption[Infinite tree with time-invariant variables]{The infinite Event Tree associated with the dynamic radicalisation process controlled by two time-invariant variables: Previous Convection and Nationality. The situation $s_0$ corresponds to the variable Previous Conviction and the situations $s_1$ and~$s_2$ are associated with the variable Nationality. The objects~$\Delta(\mathcal{T}_\infty(s_i))$, ${i=3,\ldots,6}$, represent identical infinite event trees.  A dotted rectangle establishes the limit of a particular graph. \label{fig:ObjectOriented_InfiniteEventTree_Radicalisation_Covariate}}
		\vspace{-15pt}
	\end{figure}
	
	\emph{
		Note that the leaves of~$\mathcal{T}_{-1}$ characterise four type of inmates: ${s_1\text{ -- British}}$ non-convicted inmates; $s_2$ -- Foreigner non-convicted inmates; $s_3$ -- British~convicted inmates; and $s_4$ -- Foreigner convicted inmate. The objects $\Delta(\mathcal{T}_\infty(s_i))$, ${i=3,4,5,6}$, represent infinite event trees. It is assumed here that these infinite trees are identical although this assumption could be relaxed.}
	
	\emph{
		In more complex real-world scenarios, this framework also enables us to conduct a distributed model construction that composes coherently the domain information~\cite{Collazo.Smith.2018}. For instance, we could organise distinct teams that would be in charge of modelling the corresponding processes associated with prisoners who has previous criminal charges or not. Subsequently the results could be unified using these objects that could be further split to allow the identification of finer commonalities between the processes ($\mathcal{T}_\infty(s_1)$ and $\mathcal{T}_\infty(s_2)$) in these two prison sub-populations.}
\end{myExample} 

Although some care is needed it is nevertheless important to formalise mathematically the object-recursive approach developed above. So, take a finite event tree~$\mathcal{T}$ and encapsulate it into an event tree object $\Delta(\mathcal{T})$. Denote by~${\Gamma=\{\Delta(\mathcal{T}_i)\}}$ a set of input event tree objects that unfold from~$\mathcal{T}$ in a observed process. To connect these event tree objects using their interface sets of nodes, we have to partition the leaf nodes of~$\Delta(\mathcal{T})$ according to the output states provided by its corresponding subprocess. For this purpose, let ${\Upsilon(\mathcal{T})=\{\Upsilon_k; k=1,\ldots,n\}}$, where $\Upsilon_k=\{l_{k_i}; i=1,\ldots,n_k\}$ is a partition of the leaf vertex set of $\mathcal{T}$. Note that each set~$\Upsilon_k$ needs to have a different meaning in terms of the subsequent unfolding process dynamic that is identical between the output states clustered in~$\Upsilon_k$.

For example, using the event tree object in Figure~\ref{fig:Symbol_FiniteEventTree_Radicalisation} we can  choose
\vspace{-5pt}
$$\Upsilon\!(\mathcal{T}) \!=\! \{\Upsilon_1 \!=\! \{l_{1},l_{2},\ldots,l_{6}\},\! \Upsilon_2 \!=\! \{l_{7},l_{9},\ldots,l_{17}\},\! \Upsilon_3 \!=\! \{l_{8},l_{10},\ldots,l_{18}\}\}. \vspace{-5pt}$$
Note that each partition identifies prisoners according to the particular set of unfolding events. Remember that $\delta(\mathcal{T})$~in~$\Delta(\mathcal{T})$ is depicted in Figure~\ref{fig:FiniteEventTree_Radicalisation}. So, $\Upsilon_1$~characterizes  prisoners who have few social contacts with extremist recruiters, whilst $\Upsilon_3$ and $\Upsilon_2$ are  defined by prisoners with frequent or intense social contact with extremist ideologists and, respectively, were transferred or not. Prison managers would validate this partition if and only if they identified different subprocesses unfolding from states clustered by each set~$\Upsilon_k,$ ${k=1,2,3}$. Proceed to define the merging operation $\Delta(\mathcal{T}) \uplus_{h} \Gamma$ between a finite event tree object $\Delta(\mathcal{T})$ and a set of event tree objects $\Gamma$ according to a map $h: \Upsilon(\mathcal{T}) \to \Gamma$.

\begin{myDef}
\label{def:merging_operation}
The \textbf{merging operation} $\boldsymbol{\Delta(\mathcal{T}) \uplus_{h} \Gamma}$ gives as its output an event tree object $\Delta(\mathcal{T}_+)$. This unfolds the event tree object $h(\Upsilon_k)$ from each leaf node $l_i \in \Delta(\mathcal{T})$ such that $l_i \in \Upsilon_k$. If $\mathcal{T}=\emptyset$, then $\Delta(\mathcal{T})=\emptyset$ and $\Gamma$ must be singleton. In this case, $\Delta(\mathcal{T}) \uplus_{h} \Gamma= \Delta(\mathcal{T}_*)$, where $h:\emptyset \to \Gamma=\{\Delta(\mathcal{T}_*)\}$.
\end{myDef}

Now for every time-slice $t$ take a partition $\Upsilon(\mathcal{T}_t)$ associated with the finite event tree object $\Delta(\mathcal{T}_t)$ together with a set of finite event tree objects
$\Gamma_t=\{\Delta(\mathcal{T}_{t_i});$ ${i=1,\ldots,n_t}\}$.
Next define any map $h_t: \Upsilon(\mathcal{T}_t) \to \Gamma_t$. Note that it is possible that $\mathcal{T}_{t_i}=\emptyset$. Then, for every time-slice $t=0,1,\ldots$, we have that
\begin{equation}
\label{eq:recursive_structure_tree}
\Delta(\mathcal{T}_{t+1})=\Delta(\mathcal{T}_{t})\uplus_{h_{t}} \Gamma_{t},
\end{equation}
where $\Delta(\mathcal{T}_{-1})=\emptyset$ and $h_{-1}:\emptyset \to \Gamma_0=\{\Delta(\mathcal{T}_0)\}$ whenever there is no time-invariant information. Observe that this merging operation corresponds to adding the process-driven objects $\Delta(\mathcal{T}_1(s_i)), i=13, 14,\ldots,21$, to the leaf vertices of $\Delta(\mathcal{T}_0)$ in the way depicted in Figure~\ref{fig:ObjectOriented_InfiniteEventTree_Radicalisation_3var}.

More formally we define an infinite event tree object~$\Delta(\mathcal{T}_\infty)$ as the direct limit~\cite{Ash.2007}
\begin{equation}
\Delta(\mathcal{T}_\infty)=\varinjlim \Delta(\mathcal{T}_t)
\label{eq:infinite_tree}
\end{equation}
of the system $\{\Gamma,f(i,j);i,j\!=\!-1,0,1,\ldots\}$, where $\Gamma\!=\!\{\Delta(\mathcal{T}_t); t\!=\!-1,0,1,\ldots\}$ and the morphism $f: \Delta(\mathcal{T}_i) \to \Delta(\mathcal{T}_j), j \geq i$,  is such that
\begin{equation}
f(\Delta(\mathcal{T}_j))=\Delta(\mathcal{T}_{i})\uplus_{h_{i}}\Gamma_{i}\ldots \uplus_{h_{j-1}} \Gamma_{j-1}.
\label{eq:morhism}
\end{equation}

An important type of infinite event trees called the Periodic Event Tree is defined below. This family of event trees support processes commonly found in real-world applications and enable us to embed various time-homogeneity assumptions. An infinite event tree object whose underlying event tree is periodic is defined below. Its construction requires us to define only two finite event tree objects. In Section~\ref{subsec:NT-DCEG} this infinite tree object family is used to define a new class of DCEGs. For a more extensive discussion of periodicity in probabilistic trees, see \cite{Peres.1999}.

Let $\Lambda(\mathcal{T})\!=\!\{\lambda \subset \mathcal{T}; \lambda \text{ is a root-to-leaf or an infinite path}\}$ denote the set of paths of a tree $\mathcal{T}$. Next let $s(t)$ denote a situation that happens in time-slice~$t$. Finally, let~$\boldsymbol{\tau}(\lambda)=(\tau_i(\lambda))_{i}$ denote the ordered sequence of time-slices, such that $\tau_i(\lambda)$ is the time-slice associated with the $i$th event that happens along the path~$\lambda$. For instance, define $\lambda$ as the $s_0 $-to-$s_{130}$ path  in Figure~\ref{fig:ObjectOriented_InfiniteEventTree_Radicalisation_3var}. It then follows that $\boldsymbol{\tau}(\lambda)=(0,0,0,1,1,1)$ since the first three events happen in time-slice~$0$ and the subsequent three events happen in time-slice~$1$.

\begin{myDef} 
\label{def:periodic_event_tree}
An infinite event tree is a \textbf{Periodic Event Tree after time}~$\boldsymbol{T}$ ($\text{PET-}T$), $T=0,1,\ldots$, if and only if for every situation~$s_a(t_a)$, ${t_a= T+1, T+2, \ldots}$, there is a situation~$s_b(t_b), {t_b= 0,1,\ldots,T}$, such that there exists a bijection
\vspace{-9pt}
\begin{equation}
\label{eq:bijection_periodicity_tree}
\phi(s_a,s_b):\Lambda(\mathcal{T}_\infty(s_a)) \to \Lambda(\mathcal{T}_\infty(s_b)),
\vspace{-7pt}
\end{equation}
satisfying the following three conditions:
\begin{enumerate}[nosep]
\item \emph{Global condition} - the ordered sequence of events in a path $\lambda \in \Lambda(\mathcal{T}(s_a))$ equals the ordered sequence of events in the path $\lambda'=\phi(s_a,s_b)(\lambda)$, $\lambda' \in \Lambda(\mathcal{T}(s_b))$.
\item \emph{Local condition} - for every path $\lambda \in \Lambda(\mathcal{T}(s_a))$ we have that\vspace{-7pt}
$${\tau_i(\lambda)=\tau_i(\lambda')+(t_a-t_b)}, {i\in \mathcal{I}(\lambda)}, \vspace{-5pt}$$ where $\lambda'=\phi(s_a,s_b)(\lambda), \lambda' \in \Lambda(\mathcal{T}(s_b))$.
\item \emph{Time-invariant condition} - if $\mathcal{T}_{-1} \ne \emptyset$, then $s_a$ and $s_b$ must unfold from the same leaf node of $\mathcal{T}_{-1}$.
\end{enumerate}
\end{myDef}

\begin{myDef} 
	\label{def:periodic_event_tree_generated_by_T}
An infinite event tree object~$\Delta(\mathcal{T}_\infty)$ is said to be a \textbf{Tree Object Generated by finite event trees}~$\boldsymbol{\mathcal{T}_{-1}}$ and~$\boldsymbol{\mathcal{T}}$ (TOG($\mathcal{T}_{-1},\mathcal{T}$)) if and only if we can construct it using equation~\ref{eq:infinite_tree} given that 
$\Upsilon(\mathcal{T}_{-1})=\{\Upsilon_{-1,1}\}$, ${\Gamma_{-1}=\{\Delta(\mathcal{T})\}}$ and ${h_t:\Upsilon(\mathcal{T}_{-1}) \to \Gamma_{-1}}, h_{-1}(\Upsilon_{-1,1})=\Delta({\mathcal{T}})$, for any $\mathcal{T}_{-1}$, and one of the following types of periodic time-slice structure: 
\begin{enumerate}[nosep]
	\item \emph{Type A} - for all $t=0,1,\ldots$, we have that $\Upsilon(\mathcal{T}_t)=\{\Upsilon_{t,1}\}$, ${\Gamma_t=\{\Delta(\mathcal{T})\}}$ and ${h_t:\Upsilon(\mathcal{T}_t) \to \Gamma_t}$, $h_t(\Upsilon_{t,1})=\Delta({\mathcal{T}})$; or
	
	\item \emph{Type B} - for all $t=0,1,\ldots$, we have that
	$\Upsilon(\mathcal{T}_t)=\{\Upsilon_{t,1}\ne\emptyset,\Upsilon_{t,2}\ne\emptyset\}$, ${\Gamma_t=\{\Delta(\mathcal{T}),\emptyset\}}$ and $h_t:\Upsilon(\mathcal{T}_t) \to \Gamma_t, h_t(\Upsilon_{t,1})=\Delta({\mathcal{T}})$ and $h_t(\Upsilon_{t,2})=\emptyset$.
	In this case, $\Upsilon_{t,1}$ is the set of leaf nodes of $\Delta(\mathcal{T}_t)$ whose associated sequence of events at time~$t$ corresponds to some leaf node of $\Delta(\mathcal{T}_0)$ in $\Upsilon_{0,1}$, and ${\Upsilon_{t,2}= l(\Delta(\mathcal{T}_t))-\Upsilon_{t,1}}$.
\end{enumerate}
If we stop the construction of a TOG($\mathcal{T}_{-1},\mathcal{T}$) at time-slice~$t$ we obtain a finite event tree object~$\Delta(\mathcal{T}_t)$ that is said to be a \textbf{Tree Object Generated by finite event trees}~$\boldsymbol{\mathcal{T}_{-1}}$ and~$\boldsymbol{\mathcal{T}}$ \textbf{until time-slice}~$\boldsymbol{t}$ (TOG($\mathcal{T}_{-1},\mathcal{T},t$)).
\end{myDef}

In a TOG($\mathcal{T}_{-1},\mathcal{T}$) type~A there is not a leaf node. Conversely, a process represented by a TOG($\mathcal{T}_{-1},\mathcal{T}$) type B has some of its branches missing at any time-slice and so a unit at the beginning of any time-slice can take a path that leads it to a leaf node. Note that every TOG($\mathcal{T}_{-1},\mathcal{T}$) is supported by a PET-${0}$. For example, looking at~$\mathcal{T}_1$ showed in Figure~\ref{fig:ObjectOriented_InfiniteEventTree_Radicalisation_3var}, it is straightforward to see that the event tree~$\mathcal{T}_\infty$ corresponding to Example~\ref{ex:Radicalisation_Dynamic_3variables} is a ${\text{PET-}0}$ and can be expressed as a TOG($\mathcal{T}_{-1},\mathcal{T}$) classe B, where $\mathcal{T}_{-1}=\emptyset$ and~$\mathcal{T}$ is depicted in Figure \ref{fig:FiniteEventTree_Radicalisation}.

\subsection[Staged Tree]{A Probability Space and a Staged Tree}
\label{sec:Staged_Tree}

A staged tree is obtained when analysts associate a given event tree with a probability space. To do this, take a sample space defined as the set of paths $\Lambda(\mathcal{T}_\infty)$. As for a finite tree \cite{Shafer.1996}, a situation~$s_i$ in an infinite tree can be associated with a random variable $X(s_i)$ that describes the possible next developments of a process once a unit has arrived at~$s_i$. The state space $\mathbb{X}(s_i)$ of $X(s_i)$ corresponds to the set of labels associated with the emanating edges of $s_i$. Let $\gamma_{ij}$ be the label corresponding to the edge $(s_i,v_j), v_j \in ch(s_i)$. Now, for each situation $s_i \in \mathcal{T}_\infty$, define the primitive probabilities
\begin{equation}
\label{eq:primitive_probability}
\pi(v_j|s_i)=P(X(s_i)=\gamma_{ij}|s_i), v_j \in ch(s_i).
\end{equation}

Let the path-cylinder $\Lambda(v_i)$ be the set of all paths in $\Lambda(\mathcal{T}_\infty)$ that pass through a situation $v_i$ or terminate at a leaf~$v_i$. Also let $\Psi(v_i)$ be the time-ordered concatenation of situations along the root-to-$pa(v_i)$ path. Let $\psi(s,v_i), s \in \Psi(v_i)$, denote the child vertex of $s$ along the root-to-$v_i$ path. From the usual rules of conditioning, we then have that 

\begin{equation}
\label{eq:probability_measure_cilinder}
P(\Lambda(v_i))=
\prod_{s \in \Psi(v_i)} \pi(\psi(s,v_i)|s).
\end{equation}
Using the Extension Theorem (\cite{Feller.1971}, p. 119), we are then able to uniquely extend the probability measure defined in Equation~\ref{eq:probability_measure_cilinder} to the smallest $\sigma\text{-algebra}$ of $\{\Lambda(v_i);v_i\in \mathcal{T}_\infty\}$, the so called path-cylinder $\sigma$-algebra; see also \cite{Segala.1995,Stoelinga.2002}. So our probabilistic model is well specified by a pair $(\mathcal{T}_\infty,\Pi)$, where
\begin{equation}
\label{eq:set_primitive_probability}
\Pi=\{\pi(v|s_i);v \in ch(s_i),s_i \in \mathcal{T}_\infty)\}
\end{equation} 
is the set of all primitive probabilities defined over an infinite event tree $\mathcal{T}_\infty$.

It is possible to define a stage partition slightly differently from the way it is defined in~\cite{Barclay.etal.2015} and~\cite{Smith.Anderson.2008}. A stage continues being a partition set of situations that collects together in a single cluster all those situations whose 1-step unfoldings are exchangeable conditional on the event that a unit is at a situation contained in this stage. However, here there is a further external condition that the bijection defined in equation~\ref{eq:bijection_stage} has to be meaningful in terms of real-world applications. This does not only enable analysts to embellish their models with important domain knowledge but also helpfully restricts the set of valid bijections. Of course, this condition can be ignored if there is not background information that needs to be enforced.

\begin{myDef}  
	\label{def:stage}
	Two situations $s_a$ and $s_b$ are said to be in the same \textbf{stage}~$\boldsymbol{u}$ if and only if the random variables $X(s_a)$ and $X(s_b)$ have the same probability distribution under a bijection
	\begin{equation}
	\label{eq:bijection_stage}
	\phi_u(s_a,s_b):\mathbb{X}(s_a) \to \mathbb{X}(s_b),
	\end{equation}
	where the labels~$\gamma_{ai}$ and~$\gamma_{bj}$, such that $\gamma_{bj}=\phi_u(s_a,s_b)(\gamma_{ai})$, can be assigned the same context sense in natural language. Every stage has a unique colour that differentiates it from others. An \textbf{infinite staged tree} ${\mathcal{ST}_\infty}$ is then obtained when its corresponding event tree $\mathcal{T}_\infty$ is embellished with those colours such that there are infinite many situations that are visited with non-zero probabilities. The partition yielded by stages over the set of situations in an staged tree $\mathcal{ST}_\infty$ is called \textbf{stage structure}.  
\end{myDef}

Definition~\ref{def:staged_tree_object} extends straightforwardly the concept of event tree objects to staged trees. Remember that a stage structure represents a collection of context-specific conditional independence statements that restrict the space of valid probability measures without selecting one. It follows that it is not necessary to have a probability measure in order to construct a staged tree object. This enables modellers and decision makers to focus on the main structural characteristics that drive the process at hand before populating numerically the model with probability distributions. This is somewhat analogous to construct a BN graph without having to elicit its corresponding probability distributions.  

\begin{myDef} 
	\label{def:staged_tree_object}
	A \textbf{staged tree object} is obtained when an event tree object is associated with a valid stage structure. If an event tree object is associated with a valid probabilistic measure, we then obtained a \textbf{probabilistic tree object}.
\end{myDef}

\begin{myExampleCont2}
	\emph{
		Return to the radicalisation process described in Example~\ref{ex:Radicalisation_Dynamic_3variables} and depicted for the first two time-slices in Figure~\ref{fig:ObjectOriented_InfiniteEventTree_Radicalisation_3var}. Now we can show this event tree to the prison manager and use it to elicit the conditional probability statements. 
		Suppose that he then tells us that the hypotheses in Example~\ref{ex:Radicalisation_Static_3variables} are still valid for each time-slice if a prisoner remains in prison. Next he reports that the level of socialisation of an inmate with a potential recruiter at time~$t\!+\!1$ depends only on its previous value observed at time~$t$ given that an inmate remains in prison. He also believes that if we know the level of socialisation of an inmate at time~$t\!+\!1$ and his the degree of radicalisation at time~$t$, then other past events do not bring any relevant information to infer his degree of radicalisation at time~$t\!+\!1$. Furthermore, he explains that the probability of transferring a prisoner at time~$t\!+\!1$ is independent of all set of past events given that the prisoner remains in prison and his degree of radicalisation is known at time~$t\!+\!1$. 
	}

	\emph{
		Finally, assume that the prison manager confirms two generally accepted views on the radicalisation dynamic observed in prisons. One is that a prisoner with an extreme political or religious ideology typically constructs social networks that are unlikely to moderate him and might well reinforce his current beliefs. According to this description, the prisoner's extremist values and judgements implicitly guides his social contacts. This translates into the statement that the degree of an inmate's radicalisation  at time~$t\!+\!1$ is independent of his social networks at time~$t\!+\!1$ given that he adopted radicalisation and remains in prison at previous time~$t$. On the other hand, the conversion of a non-radical prisoner to an extremist ideology could well be driven by his social contacts within the prison regardless of whether he is resilient or vulnerable. Under this reasoning the social network might act on the prisoner's belief system but not the other way around. This supports the hypothesis that the degree of a prisoner's radicalisation  at time~$t\!+\!1$ is independent of its previous value at time~$t$ given that he did not adopt radicalisation and was not transferred at time $t$, and his level of socialisation with potential recruiters is known at time~$t\!+\!1$. 
	}

	\emph{
		Figures \ref{fig:ST_1_s15} and \ref{fig:ST_1_s21} display the staged subtrees for time-slice~$1$ with respect to vulnerable non-transferred prisoners who keep, respectively, sporadic (situation~$s_{14}$) and frequent (situation~$s_{17}$) social contact with extremist recruiters during the initial time-slice. Based on expert's judgement, assume that the primitive probabilities of a prisoner at situations $s_{14}$ and $s_{17}$ are given by: $P(X(s_{14})=s|s_{14})=0.7$, ${P(X(s_{14})=f|s_{14})=0.2}$, ${P(X(s_{14})=i|s_{14})=0.1}$, $P(X(s_{17})=s|s_{17})=0.2$, $P(X(s_{17})=f|s_{17})=0.7$, ${P(X(s_{17})=i|s_{17})=0.1}$.
	}

\begin{figure}[t]
	\begin{subfigure}{.5\textwidth}
		\begin{center}
			\includegraphics[scale=0.3,angle=0,origin=c,trim=50 0 0 0]{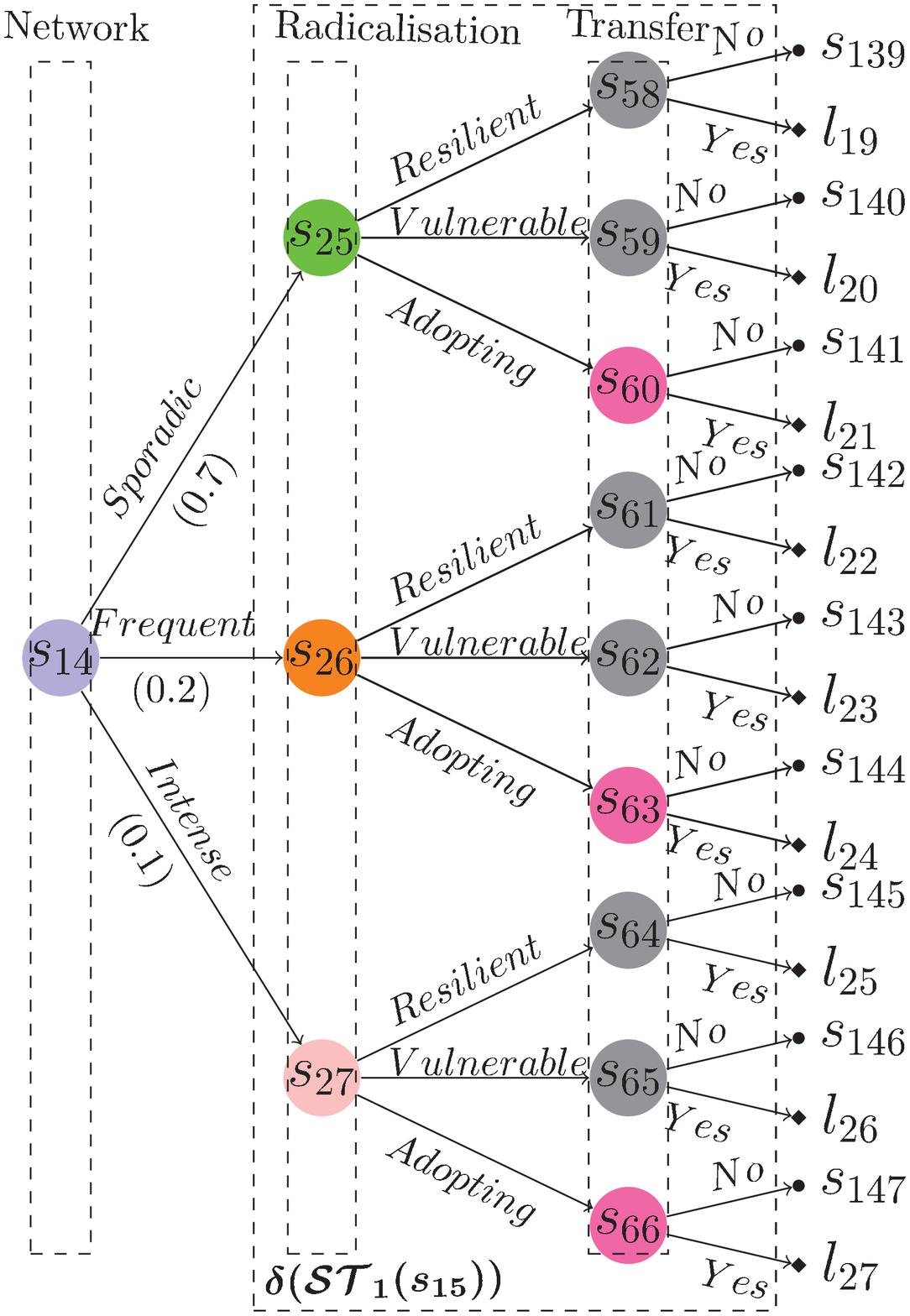}
		\end{center}
		\vspace{-10pt}
		\caption{Staged Subtree $\mathcal{ST}_1(s_{14}) \!\!\!\!\!\!\!\!\!\!\!\!\!\!\!$ \label{fig:ST_1_s15}}
		\vspace{0pt}
	\end{subfigure}%
	\begin{subfigure}{.5\textwidth}
		\begin{center}
			\includegraphics[scale=0.3,angle=0,origin=c,trim=50 0 0 0]{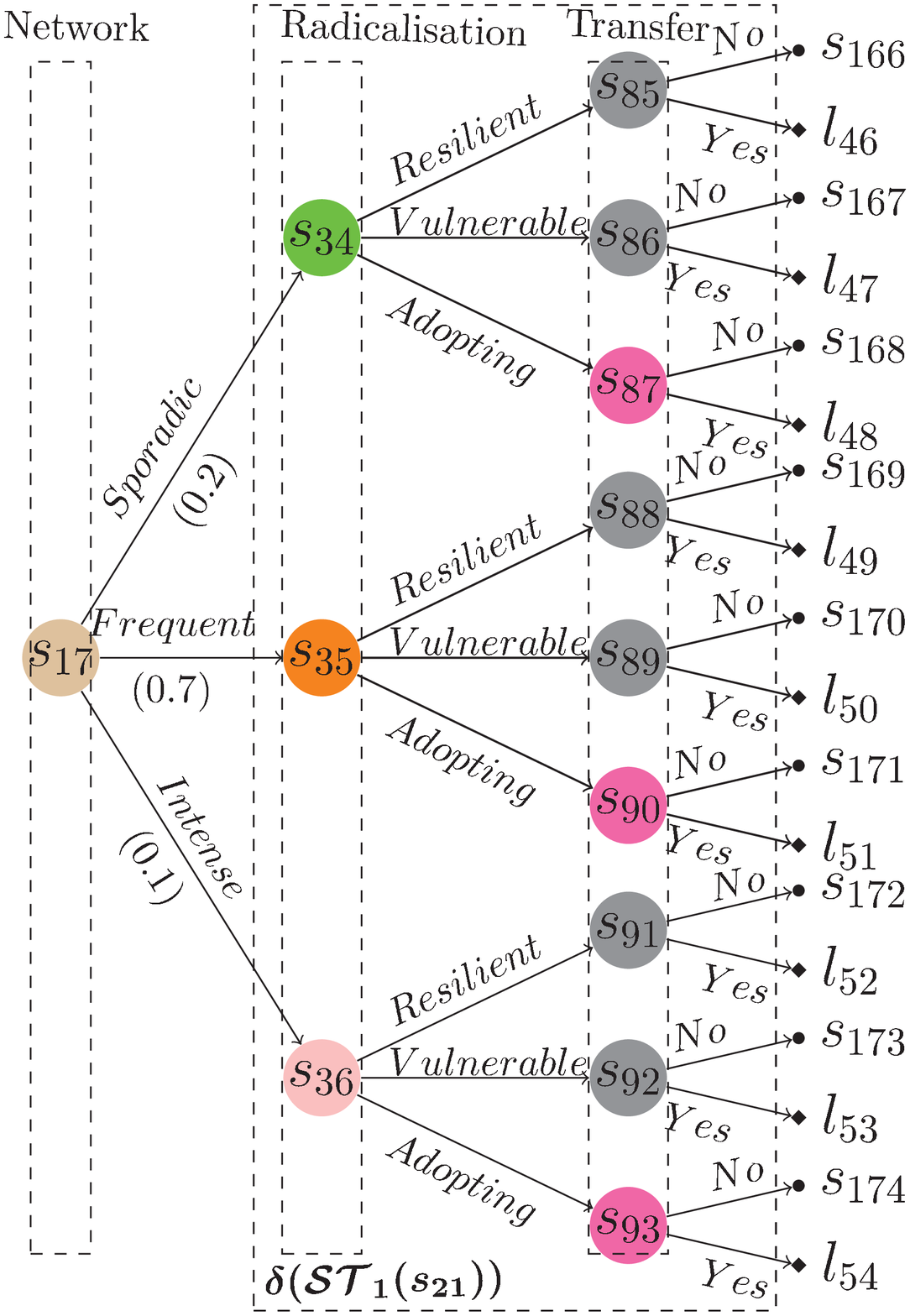}
		\end{center}
		\vspace{-10pt}
		\caption{Staged Subtree $\mathcal{ST}_1(s_{17}) \!\!\!\!\!\!\!\!\!\!\!\!\!\!\!$ \label{fig:ST_1_s21}}
		\vspace{0pt}
	\end{subfigure} %
	\caption{Two Staged Subtrees of time-slice~$1$ corresponding to the dynamic radicalisation process described in Example \ref{ex:Radicalisation_Dynamic_3variables} and partially depicted in Figure~\ref{fig:ObjectOriented_InfiniteEventTree_Radicalisation_3var}.
		The primitive probabilities associated with situations~$s_{14}$ and~$s_{17}$ are shown in parentheses.
		\label{fig:ST_1_Radicalisation_Dynamic_3variables}}
	\vspace{-13pt}
\end{figure}

	\emph{
		Here, for example, since the probability of transfer is conditionally independent of social interactions and does only change for prisoners currently adopting radicalisation, the situations associated with transfer can be coloured using only two colours, grey and pink. It then follows that this model has only two different stages associated with the set of situations corresponding to transfer events. Observe that it is straightforward to construct a bijection that implies the same probability distribution for $X(s_{14})$ and $X(s_{17})$. However these situations are at different stages since they have different colours. In this case, there is an implicit external condition that a valid bijection cannot be obtained by the permutation of labels the stage would require. This is a domain specific information that is embedded in this model. Of course, if a plausible sense could be discovered which would justify this association, then the external condition could be reframed.
	}

	\emph{
		We are able now to represent directly in the staged tree the context-specific statement corresponding to the transfer dynamic. Observe that by retaining the colour consistency between the process-driven objects we are able to analyse different branches of the process and compare them without having to draw the whole Staged Tree $\mathcal{ST}_\infty$. Also note that from these coloured trees we can see that the only difference between the two processes in time-slice~$1$ with respect to prisoners at situations $s_{14}$ and $s_{17}$ is in terms of the membership of their networks. This is because their corresponding situations are the only ones that have different colours (blue and brown).
	}

\end{myExampleCont2}

A staged tree can easily handle time-homogeneity condition enforced over the primitive probabilities in a sense formally defined below. For this purpose, let $\xi(s,k)$ be the time-ordered concatenation of events 
that precedes a situation $s$ in the time-invariant tree~$\mathcal{T}_{-1}$ and in the last $k$ time-slices. So, for example, in Figure~\ref{fig:ST_1_s15} (see also Figure~\ref{fig:ObjectOriented_InfiniteEventTree_Radicalisation_3var}) since there is no time-invariant event tree we have that $\xi(s_{59},0)=(Sporadic,Vulnerable)$ and  $\xi(s_{59},1)=(Sporadic,Vulnerable,No,Sporadic,Vulnerable)$.

\begin{myDef}
\label{def:TH-STT}
An $\boldsymbol{N}$ \textbf{Time-Homogeneous Staged Tree after time} $\boldsymbol{T}$ ({$N\text{THST-}T$}), $N \in \{0,\ldots,T\}$, is a $\mathcal{ST}_\infty$ whose corresponding event tree is  periodic after time~$T$ and for every situations $s_a$ and $s_b$, such that $s_a$ and $s_b$ are, respectively, situations in time-slice ${t_a = T, T+1,\ldots}$ and ${t_b = T,T+1,\ldots}$, we have that
\begin{equation}
\label{eq:Stationary_Staged_Tree}
\xi(s_a,N)=\xi(s_b,N) \Rightarrow \pi(s|s_a)=\pi(s|s_b).
\end{equation}
If $T$ is equal to $N$, then the staged tree is simply called an $N$~Time-Homoge-neous Staged Tree ($N$THST).  A Time-Homogeneous Staged Tree after time~$T$ (THST-$T$) is an $N$THST-$T$ for some $N, N \in \{0,\ldots,T\}$.
\end{myDef}

A THST-$T$ based on a TOG($\mathcal{T}_{-1},\mathcal{T}$) is an important type of staged tree because it can be implicitly constructed using a finite number of finite staged tree objects. Time-homogeneity and periodicity yielded by $\mathcal{T}$ imply that at the end of every time-slice ${t, t=T, T+1, \ldots}$, there is only one finite set of staged tree objects $\hat{\Gamma}_t$ that can unfold in the next time-slice $t+1$. The map ${h_t:\Upsilon(\mathcal{ST}_t) \to \hat{\Gamma}_t}$ that specifies which finite staged subtree unfolds from each leaf node of~$\Delta(\mathcal{ST}_t)$ has also a repeating structure determined by those two conditions. Formally, for all ${t, t=T,T+1,\ldots}$, there is a bijection ${\phi_t:\Upsilon(\mathcal{ST}_t)\to\Upsilon(\mathcal{ST}_T)}$ such as $h_t=h_T \circ \phi_t$. 

This implies that the number of stages is finite and so the number of~colours that is required to identify them in the THST-$T$ based on a TOG($\mathcal{T}_{-1},\mathcal{T}$). Of course, if an event tree has many different stages, modellers can then combine colours with different shape forms of vertices to depict the stages. Highly complex models may even require to replace colours by numbers for this purpose. However, because of its complexity a large event tree is often not a transparent means of communication and explanation.  In these cases, modellers and experts tend to focus on smaller subtrees using a hierarchical approach supported by event tree objects. We have therefore found that the coarse classification of parts of the tree using a limited number of colours to denote classes helps faithful communication with the expert.

Finally, we introduce a new concept that identifies situations whose unfolding processes in an infinite staged tree are probabilistically identical. This will enable us to embed the infinite staged tree into a more compact and easy-to-handle graphical model called Dynamic CEG using some simple graphical transformation rules.

\begin{myDef}
\label{def:position}
	Two situations $s_a$ and $s_b$ are said to be in the same \textbf{position}~$\boldsymbol{w}$ in an infinite staged tree if and only if besides the global and local conditions described in Definition~\ref{def:periodic_event_tree}, the bijection 
	\vspace{-7pt}
	$$\phi_w(s_a,s_b):\Lambda(\mathcal{ST}(s_a)) \to \Lambda(\mathcal{ST}(s_b)) \vspace{-7pt}$$
	satisfies the probabilistic condition that the ordered sequence of colours in a path $\lambda \in \Lambda(\mathcal{ST}(s_a))$ equals the ordered sequence of colours in the path ${\lambda'=\phi(s_a,s_b)(\lambda) \in \Lambda(\mathcal{ST}(s_b))}$.
	The partition yielded by positions over the set of situations in an staged tree $\mathcal{ST}_\infty$ is called \textbf{position structure}. 
\end{myDef}

Note that two processes developing from situations in the same position must be equivalent for the whole set of subsequent unfoldings along the staged tree. However processes evolving from situations in the same stage only have to be identified across the next step in their evolutions. Therefore, the set of positions constitutes a finer partition of the staged structure since situations in the same position are also in the same stage but the converse does not necessarily hold.

\section{A Dynamic Chain Event Graph}
\label{sec:DCEG_class}

A DCEG (Definition~\ref{def:dceg}) is a labelled directed multi-graph \cite{Diestel.2006} obtained by a graphical transformation of a staged tree. Every staged tree then spans a unique DCEG by vertex contraction operations over the set of situations. The main objective of this transformation is to encapsulate an infinite staged tree in a very compact graphical representation that facilitates the interpretation and reasoning of the corresponding model.

\begin{myDef}
	\label{def:dceg}
	A \textbf{Dynamic Chain Event Graph} (DCEG) is a directed graph obtained from a staged tree by merging all situations in the same position into a single vertex and then gathering, if they exist, all leaf vertices into a single position~$w_\infty$. A \textbf{DCEG model} associated with a DCEG~$\mathbb{C}$ is a graphical model whose sample space is represented by the supporting event tree of $\mathbb{C}$ and whose probability measure respects the set of conditional independence statements depicted by~$\mathbb{C}$.  
\end{myDef}

Observe that in a DCEG each vertex corresponds to a position and so retains the colour associated with its corresponding stage in the staged tree. In many instances, for the sake of economy of colours and clarity of the DCEG graph, it may be agreed that stages with a single position are all showed without colour; otherwise, it keeps the colour of its stage in the staged tree. For an example of this colour simplification in CEGs, see~\cite{Collazo.Smith.2016}. 

As observed in \cite{Barclay.etal.2015} a DCEG may have directed loops that allow it to have a finite number of vertices. According to Theorem~\ref{the:periodicity_finite_DCEG}, an event tree needs to be periodic in order to support a finite DCEG. Of course, this condition is not sufficient. Otherwise, the mapping of an infinite staged tree into a finite DCEG would be independent of the probability measure corresponding to the model. Theorem~\ref{the:finite_DCEG} below tells us that a necessary and sufficient condition is that there exists a time-slice~$T$ such that every situation whose parent is at time-slice~$T\!-\!1$ is in the same position as a situation that happened in any previous time-slice~$t$, $t=0,\ldots,T\!-\!1$. This can be a quite difficult condition to verify in practice given a staged tree. However, Theorem~\ref{the:THST_finite_DCEG} asserts that time-homogeneity suffices to satisfy this condition. 

\begin{myTheorem}
	\label{the:periodicity_finite_DCEG}
	If a DCEG is finite, then its supporting event tree is periodic after some time~T. 
\end{myTheorem}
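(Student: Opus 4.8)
The plan is to argue by contraposition: assuming the supporting event tree $\mathcal{T}_\infty$ is \emph{not} periodic after any time $T$, I would show that the associated DCEG must have infinitely many vertices. The key is to translate the combinatorial content of Definition~\ref{def:periodic_event_tree} — that every sufficiently deep situation $s_a(t_a)$ is matched by an ``early'' situation $s_b(t_b)$, $t_b \le T$, via a structure-preserving bijection $\phi(s_a,s_b)$ between their path sets — into the statement about positions, and then to use the fact (stated just before Definition~\ref{def:position}) that the position partition refines the stage partition and that positions record the entire ordered sequence of colours along all subsequent unfoldings.

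First I would fix notation: let $\mathbb{C}$ be the finite DCEG obtained from the staged tree $\mathcal{ST}_\infty$ on $\mathcal{T}_\infty$, and recall that the vertices of $\mathbb{C}$ are exactly the positions (plus possibly $w_\infty$). Since $\mathbb{C}$ is finite, there are only finitely many positions, say $w_1,\dots,w_m$. Because the tree has infinitely many situations (every time-slice contributes at least one new situation, as only finitely many events occur per time-slice and the process does not terminate everywhere — or, in the terminating case, the relevant subtrees are still infinite), by pigeonhole there is a position $w$ containing situations occurring in infinitely many distinct time-slices; in fact one can say more: there must be a time $T$ such that every situation occurring strictly after time $T$ lies in the same position as some situation occurring at or before time $T$. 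This last claim is precisely the hypothesis of Theorem~\ref{the:finite_DCEG} referenced in the text, but for the present (weaker) theorem I only need the qualitative consequence.

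Next, the main step: I would show that ``$s_a(t_a)$ is in the same position as $s_b(t_b)$ with $t_b \le T$'' supplies exactly the bijection required by Definition~\ref{def:periodic_event_tree}. By Definition~\ref{def:position}, being in the same position gives a bijection $\phi_w(s_a,s_b):\Lambda(\mathcal{ST}(s_a)) \to \Lambda(\mathcal{ST}(s_b))$ satisfying the global and local conditions of Definition~\ref{def:periodic_event_tree} (and matching colour sequences, which we do not even need here). Restricting this bijection to the underlying paths gives $\phi(s_a,s_b):\Lambda(\mathcal{T}_\infty(s_a)) \to \Lambda(\mathcal{T}_\infty(s_b))$ with the global condition (same ordered event sequences) and the local condition (time-slice indices shifted by $t_a - t_b$); the time-invariant condition is inherited because positions, being a refinement of stages, respect the $\mathcal{T}_{-1}$ structure built into the staged tree. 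Hence $\mathcal{T}_\infty$ is a PET-$T$, i.e. periodic after time $T$, which is the conclusion.

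The hard part will be the bookkeeping around the quantifier ``every situation whose parent is at time-slice $T-1$'': I need to upgrade ``some deep situation repeats a position'' to ``\emph{every} deep situation repeats a position from a bounded initial segment'', uniformly in the depth, and then to check that the time-slice offsets $t_a - t_b$ appearing in the local condition are handled correctly when $s_b$ itself lies in an intermediate slice rather than slice $0$ — this is why the definition allows $t_b \in \{0,1,\dots,T\}$ rather than insisting $t_b = 0$. I expect this to reduce, via finiteness of the position set and a standard argument about eventually-periodic behaviour of the level-by-level position labelling, to choosing $T$ large enough that all positions that will ever appear have already appeared by time $T$; once $T$ is so chosen, the matching is automatic and the three conditions of Definition~\ref{def:periodic_event_tree} follow from Definition~\ref{def:position} as above.
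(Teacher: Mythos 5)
Your overall strategy coincides with the paper's: exploit finiteness of the position set to find a time $T$ by which every position that will ever occur has already occurred, and then observe that Definition~\ref{def:position} builds the global and local conditions of Definition~\ref{def:periodic_event_tree} directly into the notion of position, so two situations sharing a position automatically supply the bijection $\phi(s_a,s_b)$ needed for periodicity. That part is sound (the paper phrases it as a contradiction on the number of positions, you phrase it as a direct construction of $T$; these are the same argument), and your choice of $T$ as the maximum over positions of the earliest time-slice at which each position appears does give the required uniformity over ``every'' deep situation.

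The genuine gap is your treatment of the time-invariant condition. You assert it ``is inherited because positions, being a refinement of stages, respect the $\mathcal{T}_{-1}$ structure built into the staged tree.'' This is not justified: Definition~\ref{def:position} imposes only the global, local and colour-sequence conditions, and nothing prevents two situations descending from \emph{different} leaves of $\mathcal{T}_{-1}$ from having isomorphic coloured subtrees and hence lying in the same position (being a refinement of the stage partition does not help, since stages can also merge situations across $\mathcal{T}_{-1}$ branches). So the match produced by your pigeonhole step may pair $s_a(t_a)$ with an early situation $s_b(t_b)$ in the wrong branch of $\mathcal{T}_{-1}$, which fails the third condition of Definition~\ref{def:periodic_event_tree} and so does not witness PET-$T$. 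The paper devotes Lemma~\ref{lem:PET} precisely to this point: it upgrades satisfiability of the global and local conditions at some $T'$ to a matching that also respects the $\mathcal{T}_{-1}$ leaves, after possibly enlarging $T$, via a finiteness argument on the vertex sets of the subtrees $\mathcal{T}_{T'-1}(s_{l_m})$. Your argument is repairable in the same spirit without the lemma: since $\mathcal{T}_{-1}$ is finite, take $T$ to be the maximum, over all pairs consisting of a position $w$ and a leaf $l$ of $\mathcal{T}_{-1}$ such that $w$ contains a situation descending from $l$, of the earliest time-slice at which such a situation occurs; then every deep situation is matched to an early one in the same position \emph{and} the same $\mathcal{T}_{-1}$ branch. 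But as written the step is wrong, and it is exactly the step the paper isolates as the technical content of the proof.
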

\begin{proof}
	See \ref{app:theorem_periodicity_finite_DCEG}.
\end{proof}

\begin{myTheorem}
\label{the:finite_DCEG}
A DCEG supported by a staged tree $\mathcal{ST}_{\!\infty}$ is finite if and only if, for some time-slice~$t_b$, every situation~$s_b$ in~$l(\mathcal{ST}_{\!t_b})$ is in the same position as a situation~$s_a$ in~$\mathcal{ST}_{\!t_a}$, $t_a=0,\ldots,{t_b}$. 
\end{myTheorem}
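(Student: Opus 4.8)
The plan is to restate ``the DCEG is finite'' as ``$\mathcal{ST}_\infty$ has finitely many positions''. This reformulation is safe: by Definition~\ref{def:dceg} the vertices of the DCEG are exactly the positions of $\mathcal{ST}_\infty$, together with at most the one extra vertex $w_\infty$, and since all situations in a common position have isomorphic florets, finitely many positions already force finitely many edges. For a situation $s$ I will write $t(s)$ for its time-slice; each $t(s)$ is finite because $s$ is reached from the root by a finite path, so strong induction on $t(s)$ is available.

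For the implication ``finite $\Rightarrow$ the stated condition'' I would argue as follows. Let $\mathcal{W}$ be the finite set of positions and, for each $k\ge 0$, let $A_k\subseteq\mathcal{W}$ be the set of positions occupied by at least one situation $s$ with $t(s)\le k$. The $A_k$ form an increasing chain inside the finite set $\mathcal{W}$, so the chain stabilises: $A_k=A_K$ for all $k\ge K$ and some $K$. I then take $t_b=K$. Any $s_b\in l(\mathcal{ST}_{t_b})$ occurs at time-slice $t_b+1$, so its position lies in $A_{t_b+1}=A_{t_b}$; hence there is a situation $s_a$ with $t(s_a)\le t_b$, i.e. $s_a\in\mathcal{ST}_{t_a}$ with $t_a=t(s_a)\in\{0,\dots,t_b\}$, lying in the same position as $s_b$. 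This is precisely the asserted condition.

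For the converse I would first isolate the following inheritance property of positions, which does the real work. If $s_a$ and $s_b$ lie in the same position, so that the bijection $\phi_w(s_a,s_b)$ of Definition~\ref{def:position} matches edge-label sequences (global condition), matches time-slice sequences up to a constant additive shift (local condition), and matches colour sequences, then for every situation $s$ in $\mathcal{ST}_\infty(s_b)$ the bijection sends the paths through $s$ onto the paths through a uniquely determined situation $s'$ of $\mathcal{ST}_\infty(s_a)$ --- namely the endpoint of the path out of $s_a$ carrying the same edge-label sequence as the path from $s_b$ to $s$ --- and its restriction to these families is a bijection $\Lambda(\mathcal{ST}(s'))\to\Lambda(\mathcal{ST}(s))$ that again matches edge-label, time-slice and colour sequences; hence $s'$ and $s$ lie in the same position. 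Moreover the time-slice shift is uniform, so that $t(s')-t(s)=t(s_a)-t(s_b)$. Granting this, I would prove that every situation $s$ of $\mathcal{ST}_\infty$ is in the same position as some situation of time-slice $\le t_b$, by strong induction on $t(s)$: if $t(s)\le t_b$ take $s$ itself; if $t(s)>t_b$ let $s_b$ be the unique ancestor of $s$ lying in $l(\mathcal{ST}_{t_b})$ (so $t(s_b)=t_b+1$), use the hypothesis to pick $s_a$ in the same position as $s_b$ with $t(s_a)\le t_b$, and apply the inheritance property to obtain $s'$ in $\mathcal{ST}_\infty(s_a)$, in the same position as $s$, with $t(s')=t(s)+t(s_a)-(t_b+1)\le t(s)-1$; the induction hypothesis then applies to $s'$. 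Finally, $\mathcal{ST}_{t_b}$ is finite --- finite branching (Definition~\ref{def:event_tree}) together with the standing assumption that only finitely many events occur within each time-slice --- so it contains finitely many situations and hence finitely many positions; by the claim these exhaust the positions of $\mathcal{ST}_\infty$, and the DCEG is finite.

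I expect the inheritance property in the third paragraph to be the only real obstacle: giving a clean definition of ``the situation $s'$ corresponding to $s$'', checking carefully that $\phi_w$ restricts to a genuine bijection of the path-cylinder structures of the two sub-staged-trees, and tracking the uniform time-slice offset so that the induction is strictly decreasing (this is where it matters that $s_b$ sits at time-slice $t_b+1$ while $s_a$ sits no later than $t_b$). Everything else should reduce to the stabilisation of an increasing chain in a finite lattice and the read-off of the DCEG's vertices and edges from Definition~\ref{def:dceg}.
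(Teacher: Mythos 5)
Your proposal is correct and follows essentially the same route as the paper's proof in \ref{app:theorem_finite_DCEG}: your forward direction is the paper's pigeonhole argument (you phrase it as stabilisation of the increasing chain $A_k$ inside the finite set of positions, where the paper argues by contradiction that otherwise a fresh position appears after every time-slice), and your backward direction rests on the same key inheritance property --- that the position relation between $s_b\in l(\mathcal{ST}_{t_b})$ and an earlier $s_a$ transports each descendant of $s_b$ to a same-position situation with a strictly negative time offset --- which the paper invokes as the graphical/probabilistic isomorphism between $\mathcal{ST}_{\!\infty}(a(s_c,t_b))$ and $\mathcal{ST}_{\!\infty}(s_a)$ and applies to a minimal counterexample where you instead run a strong induction on the time-slice. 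The differences are purely presentational (direct induction versus minimal-counterexample contradiction), and your explicit isolation of the inheritance lemma as the step needing care is, if anything, more scrupulous than the paper's one-line assertion of it.
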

\begin{proof}
See \ref{app:theorem_finite_DCEG}.
\end{proof}

\begin{myTheorem}
\label{the:THST_finite_DCEG}
Every time-homogeneous staged tree after time $T$ has an associated DCEG with a finite graph.
\end{myTheorem}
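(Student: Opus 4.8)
The plan is to reduce the claim to Theorem~\ref{the:finite_DCEG}. By Definition~\ref{def:TH-STT}, a time-homogeneous staged tree after time~$T$ is an $N$THST-$T$ for some $N\in\{0,\ldots,T\}$: its event tree is periodic after time~$T$, and by equation~\ref{eq:Stationary_Staged_Tree}, for situations from time-slice~$T$ onward equality of $\xi(\cdot,N)$ forces equality of the one-step probability vectors. Since a DCEG has one vertex per position of its supporting staged tree (plus $w_\infty$ when leaves occur), it is finite exactly when the criterion of Theorem~\ref{the:finite_DCEG} holds; so it suffices to exhibit a time-slice $t_b$ such that every $s_b\in l(\mathcal{ST}_{t_b})$ --- every situation sitting at the start of time-slice $t_b+1$ --- lies in the same position as some situation occurring at a time-slice $t_a\le t_b$.

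First I would take $t_b$ large enough that $t_b-N>T$, and, given $s_b\in l(\mathcal{ST}_{t_b})$, match it with a situation $s_a$ occurring at an earlier time-slice $t_a$ with $T<t_a\le t_b$ that (i) lies above the same leaf of $\mathcal{T}_{-1}$ as $s_b$ and (ii) has the same last-$N$-time-slice history as $s_b$; the periodic structure of the event tree past time~$T$ is what lets us find such an earlier time-slice of the same ``shape'' as $s_b$'s. Both $s_b$ and $s_a$ then lie well inside the time-homogeneous regime. Periodicity after time~$T$ (Definition~\ref{def:periodic_event_tree}) supplies a bijection $\phi(s_b,s_a)\colon\Lambda(\mathcal{T}_\infty(s_b))\to\Lambda(\mathcal{T}_\infty(s_a))$ satisfying the global, local and time-invariant conditions --- exactly the non-probabilistic part of the definition of a position (Definition~\ref{def:position}) --- so it only remains to verify the colour condition: that corresponding situations along corresponding paths lie in the same stage.

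This last step is the heart of the proof and the only place time-homogeneity is used. Along a path $\lambda\in\Lambda(\mathcal{ST}(s_b))$ and its image $\lambda'=\phi(s_b,s_a)(\lambda)$, the global and local conditions force the $i$-th situations $\sigma_i$ and $\sigma_i'$ to be reached by the same ordered sequence of events and to sit at time-slices differing by the fixed shift $(t_b+1)-t_a$; together with (i) and (ii) this gives $\xi(\sigma_i,N)=\xi(\sigma_i',N)$, and by the choice of $t_b$ both $\sigma_i$ and $\sigma_i'$ occur after time~$T$. Equation~\ref{eq:Stationary_Staged_Tree} makes their one-step probability vectors equal; and if one also knows that in an $N$THST-$T$ the \emph{stage} of a situation past time~$T$ --- not merely its probability vector --- is a function of the pair (leaf of $\mathcal{T}_{-1}$, $\xi(\cdot,N)$), one concludes that $\sigma_i$ and $\sigma_i'$ carry the same colour. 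Holding for every $i$ and every $\lambda$, this shows $s_b$ and $s_a$ are in the same position, so Theorem~\ref{the:finite_DCEG} yields a finite DCEG.

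Two points carry the technical weight. The first is the parenthetical above: Definition~\ref{def:TH-STT} ties $\xi(\cdot,N)$ only to the primitive \emph{probabilities}, so one must argue separately that the colouring of an $N$THST-$T$ is itself ``period one, memory~$N$'' from time~$T$ on --- intuitively because, as in the discussion following Definition~\ref{def:TH-STT}, the staged structure past time~$T$ is assembled from one recurring finite block of staged tree objects, so the same sub-process with the same stage labels is replayed each time-slice. The second is the existence of the matching partner $s_a$ at an earlier time-slice with the prescribed $\mathcal{T}_{-1}$-origin and last-$N$-time-slice history; this is transparent for the TOG-generated trees the construction is built around (where every time-slice past time~$T$ has the same shape), uses $N\le T$ so that any realizable such history is already available early in the homogeneous regime, and in general rests on the periodic structure of the event tree after time~$T$. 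Making these two claims precise, and fixing the threshold for ``$t_b$ large'', is the book-keeping content of the proof.
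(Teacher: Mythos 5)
Your proposal is correct and follows essentially the same route as the paper's own proof: reduce to Theorem~\ref{the:finite_DCEG}, use periodicity after time~$T$ to produce, for every situation at a sufficiently late time-slice boundary (the paper fixes $t_b=2T+1$), a graphically isomorphic partner at an earlier time-slice, and invoke time-homogeneity to upgrade the graph isomorphism to a probabilistic one so the two situations share a position. You are in fact slightly more careful than the paper, which also passes silently over the point you flag --- that Definition~\ref{def:TH-STT} constrains only the primitive probabilities, so one must additionally assume (or argue) that the stage colouring past time~$T$ is itself determined by $\xi(\cdot,N)$ before the colour-matching condition in Definition~\ref{def:position} is met.
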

\begin{proof}
See \ref{app:THST_finite_DCEG}.
\end{proof}

As pointed out in Section~\ref{sec:Introduction}, the definition of a DCEG in~\cite{Barclay.etal.2015} may lead us to some topological inconsistencies when transferred to a Markov setting like the one we describe here. This type of problems arises because the direct extension of the concept of position from a CEG to a DCEG does not enforce a bijective map between the infinite staged tree and its corresponding DCEG graph. Therefore, the definition of position in~\cite{Barclay.etal.2015} does not always preserve the time-slice structure that is essential to the type of processes we analyse in this work. This is important because to obtain a finite DCEG graph it is necessary to propose a graphical semantic that enforces a loop over time-slices and not within a time-slice. As we are working with an infinite tree if we do not add some further structure to the idea of position we can end with a loop within a time-slice. In this case, the readers of a DCEG model cannot determine when a unit will get out of a loop. This additional structure is represented here by the local condition of the bijection in Equation~\ref{eq:bijection_periodicity_tree} which enforces a correspondence of ordered sequences of events and colours locally (within each time-slice) and not only globally (in the whole path) as in~\cite{Barclay.etal.2015}. For further discussion through an example, see~\cite{Collazo.2017, Collazo.Smith.2017}. 

\subsection{An $N$ Time-Slice Dynamic Chain Event Graph}
\label{subsec:NT-DCEG}

Definition \ref{def:T_position} below introduces the concept of a $T$-position. This demands a further constraint on the definition of a regular position. So situations in the same $T$-position are always in the same position but the converse is not necessarily valid. 

\begin{myDef}
\label{def:T_position}
Two situations $s_a(t_a)$ and $s_b(t_b)$ are in the same $\boldsymbol{T}$\textbf{-position} if and only if they are in the same position, and one of the following conditions hold: $t_a,t_b \in \{T,T+1,\ldots\}$ or $t_a=t_b=t$, $t \in\{-1,0,\ldots,T-1\}$. They are said to be in the same $\boldsymbol{\infty}$\textbf{-position} if and only if they are in the same position and happens at the same time-slice~$(t_a=t_b)$.
\end{myDef}

A T-position avoids cycles before a time-slice $T$ whilst preserving all other characteristics of a standard DCEG. Using this construction we can demand that a finite DCEG has all its loops rooted at situations that happen at the same time-slice if its staged tree is time-homogeneous after some time $T$. Based on $T$-positions, we can now define a useful DCEG class, called the $N$~Time-Slice Dynamic Chain Event Graph ($N$T-DCEG).

\begin{myDef}
\label{def:NTDCEG}
Take a time-homogeneous staged tree after time $N\!-\!1$ whose supporting event tree can be elicited as a {$\text{TOG}(\mathcal{T}_{-1},\mathcal{T}$)}.
An $\boldsymbol{N}$~\textbf{Time-Slice Dynamic Chain Event Graph} (\text{$N$T-DCEG}), $N=2,3,\ldots$, is a directed coloured graph obtained from  this type of event tree by:
	\begin{enumerate}[nosep]
		\item merging all situations in the same $(N\!-\!1)$-position into a single vertex;
		\item diverting all leaf vertices at time-slice~$t$, ${t=-1,0,\ldots,N-2}$, if they exist, to a single sink vertex~$w_\infty^t$; and
		\item gathering all leaf vertices from time-slice~$N\!-\!1$ on, if they exist,  into a single sink vertex~$w_\infty$.
	\end{enumerate}
An $\boldsymbol{N}$\textbf{T-DCEG model} corresponding to an $N$T-DCEG~$\mathbb{C}$ is a graphical model whose sample space is represented by the supporting event tree of~$\mathbb{C}$ and whose probability measure respects the set of conditional independence statements depicted by~$\mathbb{C}$. Henceforth we will denote the set of all sink vertices in an \text{$N$T-DCEG} graph as $\mathcal{W}_{\!\infty}$. We will also define $\mathcal{W}_{\!\infty}^T$, ${T=-1,0,\ldots,N\!-\!2}$, as the set of all sink vertices associated with time-slices~$t$, $t \in \{-1,0,\ldots,T\}$.
\end{myDef}

An $N$T-DCEG has a unique periodic graphical structure over all time-slices and its primitive probabilities are all time-homogeneous for time-slices~$t$, $t = N\!-\!1,N,\ldots$.  Theorem~\ref{the:THST_finite_DCEG} guarantees that an $N$T-DCEG is also a finite graph. Note that in many real-world applications a time-slice~$T$ might exist with the property that it is possible to obtain the same graphical model regardless of whether the nodes of the graph represented a position or a $T\text{-position}$. In Example \ref{ex:Radicalisation_Dynamic_3variables} this is actually the case if we adopt $T\!=\!0$ or $T\!=\!1$. The standard DCEG and a 2T-DCEG (Figure~\ref{fig:2T-DCEG_Radicalisation_3var}) that each represent the radicalisation process will then be identical. Note that to draw an uncluttered graph without any loss some of its edges are dashed and grey.

\vspace{0pt}
\begin{figure}[t]
	\begin{center}
		\includegraphics[scale=0.475,angle=-90,origin=c,trim=0 0 0 -150]{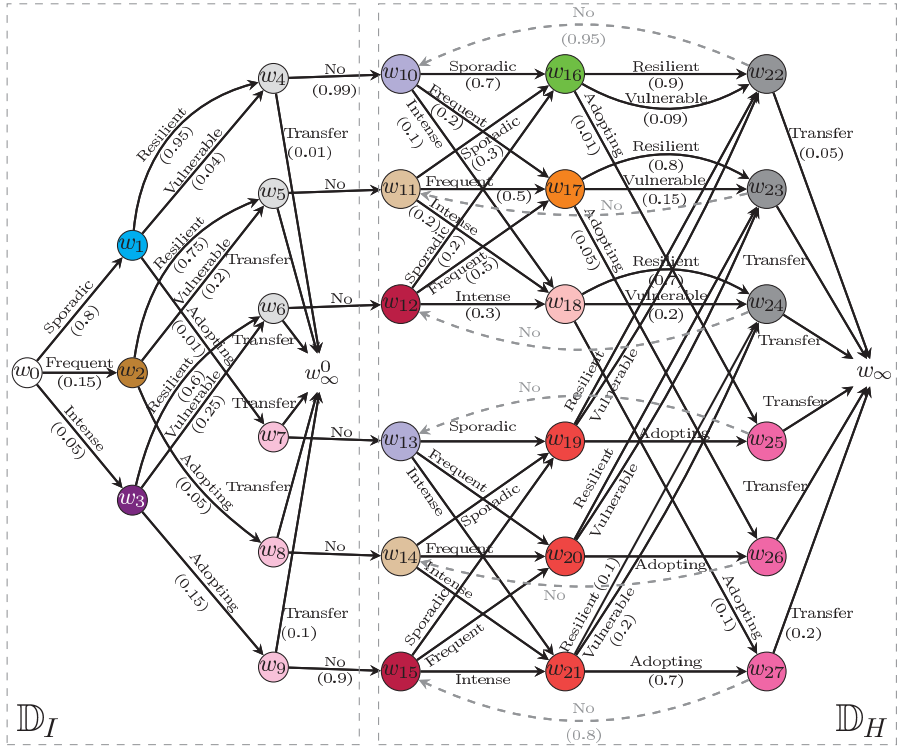}
	\end{center}
	\vspace{-27pt}
	\caption{The 2T-DCEG associated with Example \ref{ex:Radicalisation_Dynamic_3variables}. The stage structure is given by the following partition: $u_0=\{w_0\}$, $u_1=\{w_1\}$, $u_2=\{w_2\}$, $u_3=\{w_3\}$, $u_4=\{w_4,w_5,w_6\}$, $u_5\!\!=\!\!\{w_7,w_8,w_9\}$, $u_6\!\!=\!\!\{w_{10},w_{13}\}$, $u_7\!\!=\!\!\{w_{11},w_{14}\}$, $u_8\!=\!\{w_{12},w_{15}\}$, $u_9\!=\!\{w_{16}\}$, ${u_{10}\!=\!\{w_{17}\}}$, $u_{11}=\{w_{18}\}$, $u_{12}=\{w_{19},w_{20},w_{21}\}$,  $u_{13}=\{w_{22},w_{23},w_{24}\}$, $u_{14}=\{w_{25},w_{26},w_{27}\}$. The conditional probability of an event associated with a particular stage is shown in parentheses. \label{fig:2T-DCEG_Radicalisation_3var}}
	\vspace{-7pt}
\end{figure}

Recall that an event tree $\mathcal{T}_{-1}$ is associated with time-invariant information and a finite event tree $\mathcal{T}$ fully characterises every time-slice of a model obtained from a {\text{TOG}$(\mathcal{T}_{-1},\mathcal{T}$) (see Section \ref{subsec:Event_Tree}). Therefore, an ${N\text{T-DCEG}}$ requires us to elicit only two finite process-driven objects: $\mathcal{T}_{-1}$ and $\mathcal{T}$. To obtain a staged tree, because of the time-homogeneity condition it is necessary to define explicitly only those primitive probabilities associated with the first $N$ time-slices.
	
From a graphical point of view the use of an $N\!-\!1$-position structure enables us to enforce loops only from time-slice $N\!-\!1$ on. This is important when we need to merge DCEGs that are spanned by different branches of the same event tree. For example, based on an event-tree that splits the process according to some time-invariant attributes a distributed model construction can compose the domain information coherently. In this case, every subprocess~i has its own particular $N_i$T-DCEG model. To merge these $N_i$T-DCEGs and so to stress common periodic characteristics that may be shared between them, it is helpful to demand that all loops must be rooted at situations that happen at the same time-slice~$N$, where $N=\max_i N_i$. Since this condition does not constrain the model space, it is not strictly necessary. However, restricting the graphical depiction of the statistical model does facilitate the readability of the final $N$T-DCEG and the design of efficient algorithmic structures. For further discussion using an example, see \cite{Collazo.Smith.2018,Collazo.2017}.

It is useful at this stage to introduce the definition of temporal edge in an $N$T-DCEG~$\mathbb{C}$. Note that a temporal edge associated with time $N\!-\!1$ will also be a temporal edge associated with every time~$t, t=N,N\!+\!1,\ldots$. This happens because of the time-homogeneity condition required from every $N$T-DCEG. These temporal edges $-$~called cyclical temporal edges below~$-$ enable us to represent a time-homogeneous map that connects positions in two consecutive time-slices.  

\begin{myDef}
\label{def:temporal_edges}
Take an ${N\text{T-DCEG}}$ $\mathbb{C}$ obtained from an infinite tree $\mathcal{T}_\infty$. A directed edge $(w_a,w_b)$ in $\mathbb{C}$ is a \textbf{temporal edge} associated with time-slice~$t$ if and only if there exist two situations $s_a \in w_a$ and $s_b \in w_b$ such that $s_b \in ch(s_a)$, $ch(s_b)\neq\emptyset$ and $s_b \in l(\mathcal{T}_t)$, where $\mathcal{T}_t \subset \mathcal{T}_\infty$. We call a temporal edge associated with time-slices~$t$, $t=N\!-\!1,N,\ldots$, of $\mathbb{C}$ a \textbf{cyclical temporal edge}. Henceforth we will denote the set of cyclical temporal edges by $E_\text{\circled{$\dagger$}}$. We will also define $\mathcal{W}_{Head}$ and $\mathcal{W}_{Tail}$ as the sets of positions of $\mathbb{C}$ that are, respectively, the heads and tails of cyclical temporal edges.
\end{myDef}

These concepts enable us to demonstrate a close connection between ${N\text{T-DCEG}}$~models and Markov Chains. For this purpose, take the state space $\mathcal{X}=\mathcal{W}_{Head}$, if $\mathcal{W}_{\!\infty} = \emptyset$, or $\mathcal{X}=\mathcal{W}_{Head}  \cup \{w_\infty\}$, otherwise. Let $\boldsymbol{\mu}\!=\!(\mu_1,\!\ldots\!,\mu_{|\mathcal{X}|})$ be an initial distribution, where $\mu_i, i=1,\ldots,|\mathcal{X}|$, is the probability of a position~${w_{k_i}}$ in~${\mathcal{W}_{Head}}$ to be reached at the end of $N\!-\!1$ time-slices. In other words, each $\mu_i$, such that $k_i \neq \infty$, is equal to the sum of the occurrence probabilities associated with each ${w_0\text{-to-}w_{k_i}}$ path in an \text{$N$T-DCEG}. This can then be translated into the sum of occurrence probabilities associated with each root-to-$s_j(N\!-\!1), s_j(N-1) \in w_{k_i}$, path in the event tree. We therefore have that
\vspace{-7pt}
\begin{equation}
\label{eq:MC_initial_distribution}
\mu_i
=
	\sum_{s_a(N\!-\!1) \in w_{k_i}}P(\lambda(s_0,s_a))
=
	\sum_{s_a(N\!-\!1) \in w_{k_i}}\prod_{s \in \Psi(s_a)} \pi(\psi(s,s_a)|s),
\vspace{-7pt}
\end{equation}
where $\lambda(s_a,s_b)$ denotes the $s_a$-to-$s_b$ path in the event tree. 

By convention $\mu_1$ is always associated with the position~$w_\infty$, if $w_\infty \in \mathcal{X}$. In this case and if $\mathcal{W}_{\!\infty}^{N\!-\!2} = \emptyset$, then set $\mu_1=0$ since no unit reaches a sink position during the first~$N\!-\!1$ time-slices. Otherwise, $u_1$ is the probability of a unit arriving at a sink position $w_\infty^t$ in $\mathcal{W}_{\!\infty}^{N\!-\!2}$. This is then given by
\vspace{-7pt}
\begin{equation}
\label{eq:MC_initial_distribution_u1}
\mu_1
=
\sum_{\substack{l_a \in w_\infty^t \\ w_\infty^t \in \mathcal{W}_{\!\infty}}}
		P(\lambda(s_0,l_a))
=
\sum_{\substack{l_a \in w_\infty^t \\ w_\infty^t \in \mathcal{W}_{\!\infty}}}
		\prod_{s \in \Psi(l_a)} 
				\pi(\psi(s,l_a)|s).
\vspace{-7pt}
\end{equation} 

Now define $\boldsymbol{M}=[m_{ij}]$ as a transition matrix, where $m_{ij}$ represents the transition probability from a state $w_{k_i}$, $w_{k_i} \in \mathcal{X}$,  to a state $w_{k_j}$, $w_{k_j} \in \mathcal{X}$. Each $m_{ij}$ corresponds to the sum of the probabilities associated with each walk that goes from a position $w_{k_i}$, $k_i \neq \infty$, to a position $w_{k_j}$ in only one time-slice. If $w_{k_j}$ cannot be reached from $w_{k_i}$ in one time-slice,  then $m_{ij}=0$. Also fix $m_{11}=1$, if $k_1=\infty$. Again, every non-null $m_{ij}$, such that $k_i \neq \infty$, can be expressed as the following function of primitive probabilities:
\vspace{-7pt}
\begin{equation}
\label{eq:MC_transition_matrix}
m_{ij}
=
	\!\!\!
	\sum_{s_a(N\!-\!1) \in w_{k_i}} 
				\sum_{v_b(N) \in w_{k_j}} 
						\!\!\!\!\!\! P(\lambda(s_a,v_b))
=
	\!\!\!
	\sum_{s_a(N\!-\!1) \in w_{k_i}} 
				\sum_{v_b(N) \in w_{k_j}}
						\prod_{s \in \Psi(s_a,v_b)} 
								\!\!\!\!\!\! \pi(\psi(s,v_b)|s).
\end{equation}    

Theorem \ref{theo:NTDCEG_MC} below tells us that every $N$T-DCEG can be interpreted as a Markov Chain with a finite state transition diagram defined by a state space $\mathcal{X}$, an initial distribution~$\boldsymbol{\mu}$ and a transition matrix~$\boldsymbol{M}$.  

\begin{myTheorem}
\label{theo:NTDCEG_MC}
There is a map from every NT-DCEG into a finite state-transition diagram.
\end{myTheorem}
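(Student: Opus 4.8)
The plan is to exhibit explicitly the triple $(\mathcal{X},\boldsymbol{\mu},\boldsymbol{M})$ already constructed in the paragraphs preceding the statement, and to verify that it genuinely defines a Markov chain, i.e.\ that $\boldsymbol{\mu}$ is a probability distribution on $\mathcal{X}$ and that $\boldsymbol{M}$ is a stochastic matrix, with the entries given by the primitive-probability formulas in Equations~\ref{eq:MC_initial_distribution}--\ref{eq:MC_transition_matrix}. The state-transition diagram is then the weighted directed graph on vertex set $\mathcal{X}$ with an edge $w_{k_i}\to w_{k_j}$ whenever $m_{ij}>0$, labelled by $m_{ij}$; the map in the statement sends an $N$T-DCEG $\mathbb{C}$ to this diagram. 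Most of the objects are already defined, so the real content is (i) checking well-definedness of $\boldsymbol{M}$ — that $m_{ij}$ does not depend on which representative situation $s_a(N\!-\!1)\in w_{k_i}$ one sums from — and (ii) checking the row-sum and total-mass normalisations.

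First I would argue well-definedness. Because $\mathbb{C}$ is an $N$T-DCEG, its underlying event tree is a $\text{TOG}(\mathcal{T}_{-1},\mathcal{T})$ and its staged tree is time-homogeneous after time $N\!-\!1$; hence every situation $s_a(N\!-\!1)$ lying in a given $(N\!-\!1)$-position $w_{k_i}$ has the same ordered sequence of emanating colours, and more strongly, by Definition~\ref{def:position} combined with the $T$-position constraint of Definition~\ref{def:T_position}, the whole subtree $\mathcal{ST}_\infty(s_a)$ truncated at the end of time-slice $N$ is probabilistically identical for all such $s_a$. Consequently the inner double sum $\sum_{v_b(N)\in w_{k_j}}P(\lambda(s_a,v_b))$ appearing in Equation~\ref{eq:MC_transition_matrix} takes the same value for every representative $s_a(N\!-\!1)\in w_{k_i}$, so $m_{ij}$ is a function of the positions $w_{k_i},w_{k_j}$ alone. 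The same observation shows that the diagram is time-homogeneous: the transition weights out of $w_{k_i}$ at time-slice $N\!-\!1$ coincide with those at every later time-slice, which is exactly what the cyclical temporal edges $E_{\text{\circled{$\dagger$}}}$ of Definition~\ref{def:temporal_edges} encode.

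Next I would verify the normalisations. For $\boldsymbol{\mu}$: each root-to-$s_a(N\!-\!1)$ path probability is a product of primitive probabilities, and summing $\mu_i$ over all $i$ with $k_i\neq\infty$ gives the total probability that a unit is at some position at the end of time-slice $N\!-\!1$, while $\mu_1$ (attached to $w_\infty$) collects, via Equation~\ref{eq:MC_initial_distribution_u1}, exactly the probability that the process has already terminated at a sink $w_\infty^t\in\mathcal{W}_\infty^{N-2}$; since the florets of the event tree partition the probability mass at each situation (Equation~\ref{eq:probability_measure_cilinder} with $\sum_{v\in ch(s)}\pi(v|s)=1$), these complementary pieces sum to $1$. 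For $\boldsymbol{M}$: fixing a representative $s_a(N\!-\!1)\in w_{k_i}$ with $k_i\neq\infty$, the row sum $\sum_j m_{ij}=\sum_j\sum_{v_b(N)\in w_{k_j}}P(\lambda(s_a,v_b))$ runs over all situations/leaves reachable from $s_a$ by the end of time-slice $N$; these are the leaves of the finite subtree $\mathcal{T}(s_a)$ truncated at time-slice $N$, which partition $\Lambda(\mathcal{T}_\infty(s_a))$, and their $P(\lambda(s_a,\cdot))$-masses sum to $1$ by iterated application of $\sum_{v\in ch(s)}\pi(v|s)=1$; absorbing leaves contribute to the $w_\infty$-column, and the convention $m_{11}=1$ makes $w_\infty$ absorbing. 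Finiteness of $\mathcal{X}$ is immediate from Theorem~\ref{the:THST_finite_DCEG}.

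The main obstacle I anticipate is bookkeeping around the sink vertices and the ``pre-cyclic'' time-slices $t<N\!-\!1$: one must be careful that leaves appearing before time-slice $N\!-\!1$ are routed into the $\mathcal{W}_\infty^{N-2}$ part of the initial distribution rather than into the transition matrix, that positions in $\mathcal{W}_{Tail}\setminus\mathcal{W}_{Head}$ (if any) are handled correctly when defining which states actually appear as rows, and that the degenerate case $\mathcal{W}_\infty=\emptyset$ (no $w_\infty$) is consistent with the stated definition $\mathcal{X}=\mathcal{W}_{Head}$. None of this is deep, but it is where a careless argument would slip; I would isolate it as a short lemma that the map ``position at end of time-slice $N\!-\!1$'' $\mapsto$ ``position at end of time-slice $N$'' is well-defined and mass-preserving, and then the Markov-chain claim follows formally.
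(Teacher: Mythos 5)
Your proposal takes essentially the same route as the paper: the paper's proof simply assembles the Markov chain from the triple $(\mathcal{X},\boldsymbol{\mu},\boldsymbol{M})$ already defined in Equations~\ref{eq:MC_initial_distribution}--\ref{eq:MC_transition_matrix} and observes that the resulting state-transition diagram is finite. Your additional verifications (well-definedness of $m_{ij}$ across representatives of a position, stochasticity of $\boldsymbol{M}$, and the normalisation of $\boldsymbol{\mu}$) are correct and in fact supply detail the paper leaves implicit.
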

\begin{proof}
Construct a Markov Chain whose state space is given by $\mathcal{X}$ as defined above. Take the initial distribution as given by Equations~\ref{eq:MC_initial_distribution} and~\ref{eq:MC_initial_distribution_u1} and the transition matrix as given by Equation~\ref{eq:MC_transition_matrix}, respectively. The state-transition diagram of this Markov Chain is then finite. So the result follows.
\end{proof}

This is an important link enabling the corresponding $N$T-DCEG to be represented in a very compact way. In many real-world problems, it may be challenging to directly identify the states of a Markov Chain and to elicit the whole process when domain experts only observe sequence of events. Based on an $N$T-DCEG model we can now construct and learn a Markov Chain that may be synergistically used to gain a deep understand of a dynamic process. 

Focusing only on the transitions between time-slices, the Markov Chain projection provides a framework for domain experts to analyse how the system may develop over time. For example, domain experts can explore the equilibrium state of the Markov Chain and can also obtain the respective rate of convergence to it given the actual state of the process. These analytical results may suggest the necessity of some systemic intervention. In order to perform such an exploration it can be helpful to return to the $N$T-DCEG model and zoom in again over the conditional independences depicted into its corresponding $N$T-DCEG graph. 

Corollary~\ref{cor:ergodic_irreducible_MC} guarantees that a Markov Chain spanned by an $2$T-DCEG based on a {$\text{TOG}(\emptyset,\mathcal{T}$)} Type~A all of whose primitive probabilities are strictly positive has a unique stationary distribution. In contrast, an $2$T-DCEG yielded by {$\text{TOG}(\mathcal{T}_{-1},\mathcal{T}$)} Type~B always has at least one absorbing state because $w_\infty \in \mathcal{X}$.  

\begin{myCorollary}
	\label{cor:ergodic_irreducible_MC}
	Every $2$T-DCEG~$\mathbb{C}$ obtained from a {$\text{TOG}(\mathcal{T}_{-1},\mathcal{T}$)} Type A whose probability associated with each edge is non-null and $\mathcal{T}_{-1}=\emptyset$ has a corresponding Markov process that is ergodic and irreducible.
\end{myCorollary}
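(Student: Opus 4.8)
The plan is to realise the transition matrix $\boldsymbol{M}$ of the Markov chain produced by Theorem~\ref{theo:NTDCEG_MC} as a strictly positive stochastic matrix, and then read off irreducibility and ergodicity from elementary Perron--Frobenius theory (these properties depend only on $\boldsymbol{M}$, not on the initial distribution~$\boldsymbol{\mu}$). As a preliminary I would record that, because the tree object is of Type~A, the infinite event tree $\mathcal{T}_\infty$ has no terminating leaf node, so $\mathcal{W}_{\!\infty}=\emptyset$ and the state space is $\mathcal{X}=\mathcal{W}_{Head}$, which by the theorem is finite; it is also nonempty, since a Type~A process crosses infinitely many time-slice boundaries and hence produces at least one head position. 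Since in addition $\mathcal{T}_{-1}=\emptyset$, every time-slice of the supporting event tree is a fresh copy of $\mathcal{T}$, the event tree is a $\text{PET-}0$, and by Definition~\ref{def:NTDCEG} the staged tree is time-homogeneous after time $N-1=1$.

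The first substantive step is to convert the hypothesis that every edge of $\mathbb{C}$ has non-null probability into a statement about graph reachability: by Equation~\ref{eq:MC_transition_matrix}, $m_{ij}>0$ if and only if there is a directed walk in $\mathbb{C}$ from $w_{k_i}$ to $w_{k_j}$ spanning exactly one time-slice. So it suffices to show that from an arbitrary head position $w$ every head position is reachable in a single time-slice.

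The core of the argument, and the step where I expect the bookkeeping to be most delicate, is to identify $\mathcal{W}_{Head}$ with the set of positions occupied by the leaves of a single copy of $\mathcal{T}$. Fix $w\in\mathcal{W}_{Head}$: it contains a situation $s_b\in l(\mathcal{T}_t)$ with $t\ge 1$, and $s_b$ is a leaf of the copy of $\mathcal{T}$ forming time-slice~$t$, hence corresponds under the copy-isomorphism to a definite leaf of the abstract tree $\mathcal{T}$. Any two boundary situations ending time-slices $\ge 1$ that arise from the same abstract leaf of $\mathcal{T}$ share the same word $\xi(\cdot,1)$ — since $\mathcal{T}_{-1}=\emptyset$ this word is exactly the event-word of the associated root-to-leaf path in $\mathcal{T}$ — and, by time-homogeneity after time~$1$ together with the Type~A periodicity, their coloured unfolding subtrees are isomorphic via a time-slice-preserving bijection, so Definition~\ref{def:position} places them in the same position. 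Hence the map sending an abstract leaf of $\mathcal{T}$ to the position of any of its incarnations at the end of a time-slice $\ge 1$ is well defined, and its image contains every element of $\mathcal{W}_{Head}$ (each such element is witnessed by some such boundary situation). In particular the leaves of the fresh copy of $\mathcal{T}$ that unfolds from $s_b$ in the next time-slice — by the Type~A merging rule of Definition~\ref{def:periodic_event_tree_generated_by_T} — occupy all of $\mathcal{W}_{Head}$; and since in a tree every leaf is reached from the root along its unique path while every edge of $\mathbb{C}$ carries positive probability, every head position, including $w$ itself, is reachable from $w$ within one time-slice.

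Finally I would assemble the conclusion: the previous step gives $m_{ij}>0$ for all $i,j$, so $\boldsymbol{M}$ is a positive (hence primitive) stochastic matrix; therefore its state-transition diagram is strongly connected with a loop at every state, i.e.\ the chain is irreducible and aperiodic, and being finite it is positive recurrent, hence ergodic with a unique stationary distribution attained at a geometric rate. The only genuine work is the third paragraph — tracking positions across time-slices through the time-homogeneity and periodicity correspondences; once $\mathcal{W}_{Head}$ is seen to be exhausted by the leaves of one copy of $\mathcal{T}$, the Markov-chain statements follow at once.
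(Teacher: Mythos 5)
Your proposal is correct and follows essentially the same route as the paper's own proof: both arguments observe that from any position in $\mathcal{W}_{Head}$ a full copy of $\mathcal{T}$ unfolds over the next time-slice, so that every head position is reached in one step with positive probability, whence the transition matrix of the chain from Theorem~\ref{theo:NTDCEG_MC} is strictly positive and the chain is irreducible and ergodic. Your third paragraph simply makes explicit the bookkeeping (that the leaves of one copy of $\mathcal{T}$ exhaust $\mathcal{W}_{Head}$) which the paper compresses into a single sentence invoking periodicity and time-homogeneity.
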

\begin{proof}
	The periodicity yielded by~$\mathcal{T}$ and the time-homogeneity condition guarantee that there is a path in the underlyning staged tree of~$\mathbb{C}$ that leads from every position $w_i \in \mathcal{W}_{Head}$ to any position in $\mathcal{W}_{Head}$ in only one time-slice. Since the primitive probabilities are all positive, every unit in a position $w_i \in \mathcal{W}_{Head}$ has a non-zero probability of returning to the same position $w_i$ or to reach a position $\mathcal{W}_{Head} \backslash \{w_i\}$ in the end of a time-slice. From the definition of the Markov Chain described in the proof of Theorem~\ref{theo:NTDCEG_MC}, it can then be seen that the corresponding Markov Chain is ergodic and irreducible.
\end{proof}

The association between $N$T-DCEG models and Markov Chains is further discussed in the example below.

\begin{myExampleCont1}
\emph{
	Figure \ref{fig:MC_Radicalisation_3var} depicts the state-transition diagram of the Markov Chain corresponding to the ${N\!\text{T-DCEG}}$ showed in Figure \ref{fig:2T-DCEG_Radicalisation_3var}. Using Equations~\ref{eq:MC_initial_distribution} and~\ref{eq:MC_initial_distribution_u1}, we can calculate the initial distribution 
	\vspace{-9pt}
	$${\boldsymbol{\mu}=[0.012,0.784,0.141,0.042,0.007,0.007,0.007]}. \vspace{-23pt}$$
	}

\emph{
	The transition matrix is obtained from Equation~\ref{eq:MC_transition_matrix} and is given by
	}

\vspace{-19pt}
\begin{equation}
\label{eq:transition_matrix_example}
\boldsymbol{M}=
\begin{bmatrix}
& w_{\infty} & w_{10} & w_{11} & w_{12} & w_{13} & w_{14} & w_{15} \\
w_{\infty} & 1.000 & 0.000 & 0.000 & 0.000 & 0.000 & 0.000 & 0.000 \\
w_{10} & 0.057 & 0.657 & 0.181 & 0.067 & 0.006 & 0.008 & 0.024 \\
w_{11} & 0.063 & 0.283 & 0.451 & 0.133 & 0.002 & 0.020 & 0.048 \\
w_{12} & 0.068 & 0.187 & 0.451 & 0.002 & 0.002 & 0.020 & 0.072 \\
w_{13} & 0.154 & 0.200 & 0.057 & 0.029 & 0.392 & 0.112 & 0.056 \\
w_{14} & 0.154 & 0.086 & 0.143 & 0.057 & 0.168 & 0.280 & 0.112 \\    
w_{15} & 0.154 & 0.057 & 0.143 & 0.086 & 0.112 & 0.280 & 0.168
\end{bmatrix}
\vspace{-7pt}
\end{equation}
 
\emph{
	The Markov Chain enables us to present the radicalisation process compactly using just a few positions of the elicited ${N\!\text{T-DCEG}}$. Being based on a tree, an ${N\!\text{T-DCEG}}$ provides domain experts with an intuitive framework not only to represent and estimate a process but also to interpret it using a single random variable whose states over time are given by a finite set of positions. }

\begin{figure}[t]
	\begin{center}
		\includegraphics[scale=0.35,angle=-90,origin=c,trim=0 0 0 -130]{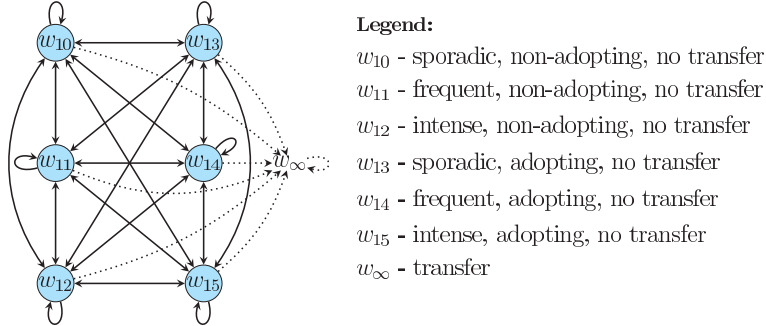}
	\end{center}
	\vspace{-83pt}
	\caption{The state-transition diagram associated with the 2T-DCEG depicted in Figure~\ref{fig:2T-DCEG_Radicalisation_3var} \label{fig:MC_Radicalisation_3var}}
	\vspace{-25pt}
\end{figure}

\emph{
	Here the radicalisation process can be explained using a random variable that has seven states represented by the positions $w_{10},\ldots,w_{15}$ and $w_\infty$. Note that the state-transition diagram tells us that the categories Resilient and Vulnerable can be merged without losing any useful information for the macro-level interpretation of the chosen ${N\!\text{T-DCEG}}$. It also follows directly that all states connected with prison transfers can be represented by only one absorbing state $w_\infty$. In this way, the state-transition diagram provides us with an evocative picture of the overall dynamic development over time. Furthermore, its transition matrix can be obtained by learning the ${N\!\text{T-DCEG}}$, for example, through conjugate Bayes learning~\cite{Barclay.etal.2015}.}

\end{myExampleCont1}

\section{The Relationship between an $N$T-DCEG and a CEG}
\label{sec:CEG_NTDCEG}

Take a DCEG $\mathbb{C}$ based on a staged tree $\mathcal{ST}_\infty$. For every time-slice~$t,$ ${t=0,1,\ldots}$, construct a CEG $\mathbb{C}_t$ spanned by the staged tree~$\mathcal{ST}\!_t$, ${\mathcal{ST}\!_t \subset \mathcal{ST}_\infty}$, using the concept of $\infty$-position. Then for every $t, t=0,1,\ldots$, the set of primitive probabilities 
\vspace{-11pt}
$$\Pi_t=\{\pi(v|s_i);v \in ch(s_i),s_i \in \mathcal{T}_t \subset \mathcal{T}_\infty\} \vspace{-5pt}$$
defines a consistent probability measure over the path $\sigma$-algebra of  $\mathbb{C}_t$ (see~\cite{Smith.Anderson.2008}), where $\Pi_t$ is a subset of~$\Pi$ and $\Pi$ is the set of primitive probabilities of $\mathbb{C}$. The path $\sigma\text{-algebras }$ $\mathcal{F}_t=\sigma\{\Lambda(v_i);v_i\in \mathcal{T}_t\}$ associated with each CEG $\mathbb{C}_t$ constitute a natural filtration of the path-cylinder $\sigma$-algebra $\mathcal{F}=\sigma\{\Lambda(v_i);v_i\in \mathcal{T}_\infty\}$ corresponding to $\mathbb{C}$.
The DCEG probability space can then be equipped with a useful set of CEGs $\mathcal{F}(\mathbb{C})=\{\mathbb{C}_t;t=0,1,\ldots\}$. 

Having the same stage structure, both a DCEG $\mathbb{C}$ and a CEG $\mathbb{C}_T$ depict equivalent conditional independences if the interest lies in the 1-step unfolding of events that may happen from a specific situation at time-slice~~$t$, ${t=0,\ldots,T}$. However, this fact does not hold for analyses that involve a development over two or more steps.
This is because positions in a CEG are defined using \emph{finite} subtrees expressing only the early unfoldings of the process whilst positions in a DCEG are based on \emph{infinite} subtrees. Therefore all situations at time~$t$, ${t=0,\ldots,T}$, merged into a single position in~$\mathbb{C}_T$ will not necessarily be collected by a unique equivalent position in~$\mathbb{C}$.

\begin{figure}[t] 
	\begin{center}
		\includegraphics[scale=1.027,angle=-90,origin=c,trim=5 43 -50 -39]{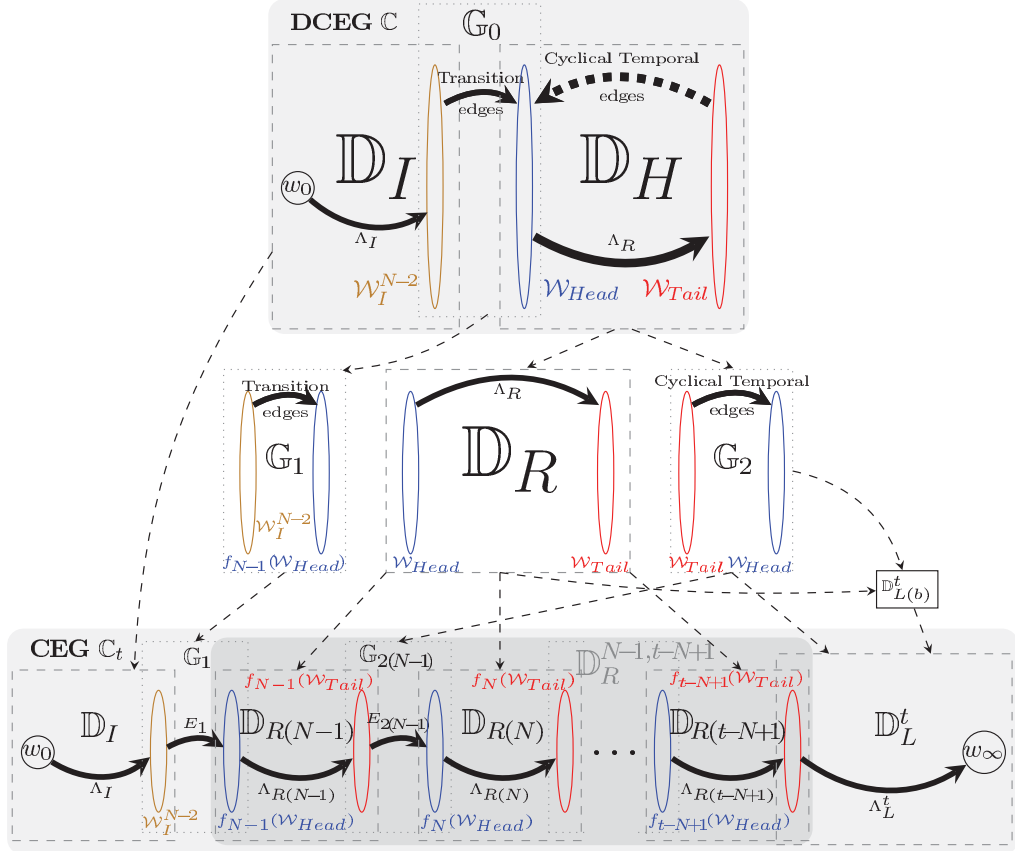}
	\end{center}
	\vspace{-17pt}
	\caption{The process of obtaining a CEG $\mathbb{C}_t, t=2N\!-\!2,2N\!-\!1,\ldots$, from a DCEG~$\mathbb{C}$ supported by a TOG($\emptyset,\mathcal{T}$) type~$A$. Schematic representation of Theorem \ref{theo:C_k}.  \label{fig:NT-DCEG_as_CEG}}
	\vspace{-5pt}
\end{figure}

Fortunately there is a stronger link between an $N\text{T-DCEG}$~$\mathbb{C}=(V,E)$ and a CEG~$\mathbb{C}_t$ in~$\mathcal{F}(\mathbb{C})$. It allows us to express every $\mathbb{C}_t, t\!=\!N\!-\!1,N,\ldots$, using subgraphs of $\mathbb{C}$. This result expressed in Theorem~\ref{theo:C_k} is very important because it enables us to present a methodology at the end of this section to construct an $N$T-DCEG using the tree objects. It also supports the development of methods to read conditional independences in the next section. 

To obtain Theorem~\ref{theo:C_k} we first need to identify these subgraphs and to explain how they can be extracted from~$\mathbb{C}$. Before rigorously introducing this construction, Figure~\ref{fig:NT-DCEG_as_CEG} depicts schematically how we do this. Observe that every $N$T-DCEG~$\mathbb{C}$ has two important subgraphs, $\mathbb{D}_I$ and $\mathbb{D}_H$. The subgraph $\mathbb{D}_I$ initialises the modelled process over the first $N\!\!-\!\!1$ time-slices. The cyclic subgraph $\mathbb{D}_{H}$ represents the time-homogeneous developments of the process and then contains the cyclical temporal edges from time-slice $t$ to $t+1, {t=N\!\!-\!\!1,N,\ldots}$. These two subgraphs are connected by a bipartite graph $\mathbb{G}_0$ whose temporal edges we will call transition edges. Intuitively, Theorem~\ref{theo:C_k} enables us to unfold the infinite time-homogeneous time-slices summarised in $\mathbb{D}_H$ into a finite CEG $\mathbb{C}_t$ using the graphs $\mathbb{D}_R$ and $\mathbb{G}_2$.

To formally introduce these subgraphs, let~$\mathcal{W}_I^t$ and~$\mathcal{W}_J^t$ , ${t=0,\ldots,N\!-\!2}$, be the sets of all positions that are, respectively, the tails and heads of temporal edges associated with time-slice~$t$. Note that $\mathcal{W}_J^{N\!-\!2}=\mathcal{W}_{Head}$. Also let ${\mathcal{W}_{I(\infty)}^t=\mathcal{W}_I^t \cup \mathcal{W}_{\!\infty}^t}$, 
$\mathcal{W}_{I(\infty)} = \mathcal{W}_{Tail} \cup \mathcal{W}_\infty \backslash \mathcal{W}_\infty^{N\!-\!2}$ and $E^- = E \backslash E_\text{\circled{$\dagger$}}$.
Take any directed graph ${\mathbb{G}=(V_\mathbb{G},E_\mathbb{G})}$ and denote by~$a(V_a)$, ${V_a \subseteq V_\mathbb{G}}$, the set of all vertices in~$\mathbb{G}$ that are antecedents of at least one vertex in $V_a$. 
Finally, for every subset of positions~$\mathcal{W}$ of $V$ define the bijective label transformations ${f_t: \mathcal{W} \to \mathcal{W}^t}$, ${t=0,1,\ldots}$, such that: $f_0(w_i)=w_i$;  ${f_t(w_i)=w_i^t}$, for all position ${w_i \in \mathcal{W} \backslash \mathcal{W}_{\!\infty}}$ and $t=1,2,\ldots$; $f_t(w_\infty^T)=w_\infty^T$, if $w_\infty^T \in \mathcal{W}$; and $f_t(w_\infty)=w_\infty$, if $w_\infty \in \mathcal{W}$. We begin defining the graph~$\mathbb{D}_I$ and two associated families of graphs, which are used to construct CEGs based on the first $N\!-\!1$ time-slices at most. 

\begin{myDef}
	\label{def:initial_graph}
	The \textbf{initial graph}~${\mathbb{D}_I}=(V_I,E_I)$ corresponds to the set~$\Lambda_I$ of~$w_0\text{-to-}\mathcal{W}_{I(\infty)}^{N\!-\!2}$~paths in~$\mathbb{C}$, such that
	$V_I=
		\{w \in \mathbb{C};
			w \in a(\mathcal{W}_{I(\infty)}^{N\!-\!2}) 
				\cup \mathcal{W}_{I(\infty)}^{N\!-\!2}\}$
	and 
	$E_I=
		\{e(w_i,w_j)\in E^-;
				w_i,w_j \in V_I\}$.
\end{myDef}

\begin{myDef}
	\label{def:t_initial_graph}
	The \textbf{$t$-initial graph}~$\mathbb{D}_{I(*)}^t=(V_{I(*)}^t,E_{I(*)}^t)$, $t=0,\ldots,N\!-\!2$, is defined by the vertex set $V_{I(*)}^t=\{w \in\ \mathbb{C};w \in a(\mathcal{W}_{I(\infty)}^t) \cup \mathcal{W}_{I(\infty)}^t \cup \mathcal{W}_J^t\}$ 
	and the edge set
	${E_{\!I(*)}^t\!=\!\{e(w_i,w_j) \!\in\! E^-;
				w_i \!\in\! a(\mathcal{W}_{I(\infty)}^t) \cup \mathcal{W}_{I}^t, w_j \!\in\! V_{I(*)}^t\}}$.
\end{myDef}

\begin{myDef}
	\label{def:transformed_t_initial_graph}
	The \textbf{transformed $t$-initial graph} $\mathbb{D}_{I(\infty)}^t=(V_{I(\infty)}^t,E_{I(\infty)}^t)$, $t=0,\ldots,N\!-\!2$, is obtained from $\mathbb{D}_{I(*)}^t$ by merging its vertices without children into a single node~$w_\infty$, such that
	${V_{\!I(\infty)}^t \!\!=\!
		\{w \in \mathbb{C};w \!\in\! a(\mathcal{W}_{I(\infty)}^t) \cup \mathcal{W}_I^t \} \cup \{w_\infty\}}$,
	$E_{\!I(\infty)}^t \!\!=\! E_{I(\infty)}^{t(a)} \cup E_{I(\infty)}^{t(b)}$,
	${E_{\!I(\infty)}^{t(a)} =
		\{e(w_i,w_j) \in E^- ;w_i,w_j \in V_{I(\infty)}^t \setminus \{w_\infty\}\}}$~and
	${E_{\!I(\infty)}^{t(b)} \!=\!
		\{e(w_i,w_\infty); e(w_i,w_j) \!\in\! E^-, w_i \in V_{\!I(\infty)}^t \!\!\!\setminus\!\! \{w_\infty\}, w_j \in \mathcal{W}_{\infty}^t \cup \mathcal{W}_J^t}\}$.
\end{myDef}

The isomorphic graphs $\mathbb{G}_r, r=0,1$, provide the link between the time-slices $N-2$ and $N-1$. They then connect the initial graph~$\mathbb{D}_I$ and the cyclic graph~$\mathbb{D}_H$.  

\begin{myDef}
	\label{def:transition_initial_graph}
	The \textbf{initial $r$-link graph} $\mathbb{G}_r=(V_r,E_{r}), r=0,1$, corresponds to the vertex set
	$V_{r}=\mathcal{W}_I^{N\!-\!2} \cup f_{(N-1)*r}(\mathcal{W}_{Head})$ and the edge set
	$E_r=\{e(w_i,f_{(N-1)*r}(w_j)); e(w_i,w_j) \in E^-,{ w_i \in \mathcal{W}_I^{N\!-\!2}}, {w_j \in \mathcal{W}_{Head}} \}$.
\end{myDef}

We now formally present the cyclic graph $\mathbb{D}_H$ and some families of derived graphs. This enables us to add the time-homogeneous structure to CEGs extending over $N-1$ time-slices.

\begin{myDef} 
	\label{def:cyclic_graph}
	The \textbf{time-homogeneous} $\mathbb{D}_H=(V_H,E_H)$ is a cyclic subgraph of~$\mathbb{C}$, such that $V_H=V \backslash V_I$ and ${E_H\!=\!\{e(w_i,w_j) \in E \backslash (E_I \cup E_0)\}}$.
\end{myDef}

\begin{myDef}
	\label{def:repeating_graph}
	The \textbf{repeating graph} ${\mathbb{D}_R=(V_R,E_R)}$, where $V_r=V_H$ and $E_r=E_H \backslash E_\text{\circled{$\dagger$}}$, is an acyclic subgraph of $\mathbb{D}_H$ that is made up of the set~$\Lambda_R$ of paths that unfold from a position in $\mathcal{W}_{Head}$ and arrive at a position at~$\mathcal{W}_{I(\infty)}$.
\end{myDef}

\begin{myDef}
	\label{def:t_reepating_graph}
	The \textbf{$t$-repeating graph} $\mathbb{D}_{\!R(t)}\!=\!(V_{\!R(t)},E_{\!R(t)})$, ${t\!=\!N\!\!-\!\!1,N,\ldots}$, is obtained from $\mathbb{D}_R$ when its vertices are relabelled by the transformation~$f_t$ as follows:
	$\!V_{\!R(t)}\!\!=\!\!\{f_t(w);w \!\in\! V_R \}$ and ${E_{\!R(t)}\!\!=\!\!\{e(f_t(w_a),f_t(w_b));e(\!w_a,w_b) \!\in\! E_R\}}$. Let $\Lambda_{R(t)}$ be the set of $f_t(\mathcal{W}_{Head})\text{-to-}f_t(\mathcal{W}_{I(\infty)})$ paths in~$\mathbb{D}_{\!R(t)}$.
\end{myDef}

The graph $\mathbb{D}_{R(t)}$ is isomorphic to $\mathbb{D}_{R}$ and represents $\mathbb{D}_{R}$ at time $t$. To construct the graph~$\mathbb{D}_R^{t_a,t_b}$ associated with the time-homegenous process between time-slices~$t_a$ and~$t_b$, $t_b>t_a\geq N\!-\!1$, it is necessary to connect together the graphs $D_{R(t)}$, $t=t_a,t_a\!+\!1,\ldots,t_b$. For this propose, we define the graphs~$\mathbb{G}_{2(t)}$ and a particular union operation between two graphs below. The graphs~$\mathbb{G}_{2(t)}$ are isomorphic to the graph~$\mathbb{G}_2$ constituted by the temporal edges of~$\mathbb{C}$. Since the edge set $E_{2(t)}$ is spanned by the set of cyclical temporal edges of $\mathbb{C}$, the graph $\mathbb{G}_{2(t)}$ then represents the dependence structure between time-slices $t$ and $t+1$, ${t\!=\!N\!\!-\!\!1,N,\ldots}$.

\begin{myDef}
	\label{def:time_homogeneous_link_graph}
	The \textbf{time-homogeneous link graph} ${\mathbb{G}_{2}=(V_{2},E_{2})}$ is a subgraph of~$\mathbb{C}$ such that ${V_{2}=\mathcal{W}_{Tail} \cup \mathcal{W}_{Head}}$ and
	$E_{2}= E_\text{\circled{$\dagger$}}$.
\end{myDef}

\begin{myDef}
	\label{def:time_homogeneous_t_link_graph}
	The \textbf{time-homogeneous $t$-link graph} ${\mathbb{G}_{2(t)}=(V_{2(t)},E_{2(t)})}$, ${t\!=\!N\!\!-\!\!1,N,\ldots}$, is defined by the vertex set ${V_{2(t)}=f_t(\mathcal{W}_{Tail}) \cup f_{t+1}(\mathcal{W}_{Head})}$ and the edge set
	$E_{2(t)}=\{e(w_i^t,w_j^{t+1}); e(w_i,w_j) \in E_\text{\circled{$\dagger$}}\}$.
\end{myDef}

\begin{myDef}
\label{def:graph_union_operation}
Take two graphs $\mathbb{G}_a=(V_a,E_a)$ and $\mathbb{G}_b=(V_b,E_b)$, where a vertex $v$ with label $l_v$ and an edge $e(v_1,v_2)$ with label $l_e$ are, respectively, defined by a pair $(v,l_v)$ and a triple $(v_1,v_2,l_e)$. A \textbf{union graph} of ${\mathbb{G}_a=(V_a,E_a)}$ and $\mathbb{G}_b=(V_b,E_b)$ is given by $\mathbb{G}=(V,E)=\mathbb{G}_a \oplus \mathbb{G}_b$, where $V=V_a \cup V_b$ and $E_a \cup E_b$.
\end{myDef}

\begin{myDef}
	\label{def:connected_repating_graph}
	The \textbf{connected repeating graph}~$\mathbb{D}_R^{t_a,t_b}$,  ${t_a,t_b\!=\!N\!\!-\!\!1,N,\ldots}$ and $t_a < t_b$, is obtained from the equation
	\vspace{-7pt}
	\begin{equation*}
	\label{eq:D_R}
	\mathbb{D}_R^{t_a,t_b}=	\mathbb{D}_{R(t_a)} \oplus \mathbb{G}_{2(t_a)} \oplus
	\mathbb{D}_{R(t_a+1)} \oplus \mathbb{G}_{2(t_a+1)}
	\oplus \ldots \oplus
	\mathbb{D}_{R(t_b-1)} \oplus \mathbb{G}_{2(t_b-1)}  \oplus \mathbb{D}_{R(t_b)}.
	\vspace{-7pt}
	\end{equation*}
	Henceforth fix $\mathbb{D}_R^{N\!-\!1,t}=\emptyset$, ${t=0,\ldots,N\!-\!2}$, and let $\mathbb{D}_R^{N\!-\!1,N\!-\!1}=\mathbb{D}_{R(N\!-\!1)}$.
\end{myDef}

Every time-slice $t$, ${t\!=\!N\!\!-\!\!1,N,\ldots}$, of a CEG $\mathbb{C}_{t}$ in~$\mathcal{F}(\mathbb{C})$, ${t\!=\!N\!\!-\!\!1,N,\ldots}$, has a similar stage structure to the one depicted in $\mathbb{D}_R$ because of the time-homogeneity of $\mathbb{C}$. However the number of positions associated with the last $\kappa$ time-slices can be smaller than the number of positions in $\mathbb{D}_R$, where: $\kappa=\min(t\!-\!N\!+\!\eta\!+\!1,N\!-\!\eta)$, ${t\!=\!N\!\!-\!\!1,N,\ldots}$; and $\eta(\mathbb{C})$, or simply $\eta$, is equal to~$1$, if $\mathcal{T}_{\!-1}=\emptyset$, and~$0$, otherwise. This happens because the probabilistic and graphical map identifying two situations by the same vertex in $\mathbb{C}_t$ only holds over a finite tree. To represent these last $\kappa$~time-slices, we next introduce two families of closure graphs and a vertex contraction operator~$\Phi$ for a coloured graph. This operator $\Phi$ enables us to introduce the topological simplifications in the set of vertices associated with the last $\kappa$~time-slices of a CEG $\mathbb{C}_t$ in $\mathcal{F}(\mathbb{C})$ that inherits the coloured graphical structure of subgraphs obtained from a DCEG~$\mathbb{C}$. By doing this, it obtains the position structure associated with the finite staged tree of~$\mathbb{C}_{t}$.  Let $\Lambda_u(v)$ be the set of direct paths that unfolds from a vertex $v$ in a directed acyclic graph.

\begin{myDef}
	\label{def:t_closure_graph}
	Take the following sets of graphs based on an $N\text{T-DCEG}$~$\mathbb{C}$:
	\begin{enumerate}[nosep]
		\item ${\mathbb{D}_{l(a)}^t=\mathbb{D}_R^{N\!-\!1,t} \oplus \mathbb{G}_{2(t)}}, {t=N\!-\!1,\ldots,2N\!-\!\eta\!-\!2}$; and
		\item ${\mathbb{D}_{l(b)}^t=\mathbb{D}_R^{t-N+1+\eta,t} \oplus \mathbb{G}_{2(t)}}$, ${t=2N-\!\eta\!-\!1,2N\!-\!\eta,\ldots}$.
	\end{enumerate} 
	For each ${\mathbb{D}_{l(i)}^t=(V_{l(i)}^t,E_{l(i)}^t)}$, ${i=a,b}$, the \textbf{$t$-closure graph}  $\mathbb{D}_{L(i)}^t$ is then constructed by merging the set of vertices
	$\mathcal{W}_{l(i)}^t=\{w \in V_{l(i)}^t; w \in f_{t+1}(\mathcal{W}_{Head})\}$
	into a single a node and relabelling it as $w_\infty$. If a terminating vertex $w_\infty$ already exists in $V_{l(i)}^t$, it is necessary only to merge the set $\mathcal{W}_{l(i)}^t$ into~$w_\infty$.  
\end{myDef}

\begin{myDef}
\label{def:vertex_contraction_operator}
Take a coloured directed acyclic graph $\mathbb{G}=(V,E)$, where each vertex has a numbered label and is associated with a time-slice~$t$, $t=-1,0,1,\ldots$. The \textbf{vertex contraction operator}~$\boldsymbol{\Phi}$ merges every two vertices~$v_a$ and~$v_b$ in~$V$ into a single vertex~$v_c$, $c=\min\{a,b\} $, if and only if they are coloured the same, they are associated with the same time-slice and there exists a bijection
\vspace{-5pt}
\begin{equation}
\label{eq:bijection_vertex}
\phi_v(v_a,v_b):\Lambda_u(v_a) \to \Lambda_u(v_b),
\vspace{-5pt}
\end{equation}
such that the ordered sequence of edge labels and edge colours in a path $\lambda \in \Lambda_u(v_a)$ equals the ordered sequence of edge labels and edge colours in the path $\lambda'=\phi_v(v_a,v_b)(\lambda),\lambda' \in \Lambda_u(v_b)$.
\end{myDef}

It is straightforward to see that 
$\mathbb{C}=
    \mathbb{D}_I \oplus \mathbb{G}_0 \oplus \mathbb{D}_H =
    \mathbb{D}_I \oplus \mathbb{G}_0 \oplus \mathbb{D}_{R}
        \oplus \mathbb{G}_2$.
Theorem \ref{theo:C_k} now asserts that every $\mathbb{C}_t$ in $\mathcal{F}(\mathbb{C})$, $t \!=\! 2N\!-\!2,2N\!-\!1,\ldots$, can be decomposed in four graphs $\mathbb{D}_I$, $\mathbb{G}_1$, $\mathbb{D}_R^{N\!-\!1,t\!-\!N\!+\!\eta}$ and $\mathbb{D}_L^{t}$; see also Figure~\ref{fig:NT-DCEG_as_CEG}. The graphs~$\mathbb{D}_I$ and~$\mathbb{G}_1$ correspond to the initialisation of our model. The graphs~$\mathbb{D}_R^{N\!-\!1,t-N+\eta}$ and~$\mathbb{D}_L^{t}$ are associated with the N-Markov time-homogeneity condition. These later graphs derived from~$\mathbb{D}_H =
\mathbb{D}_R \oplus \mathbb{G}_2$. In Figure~\ref{fig:NT-DCEG_as_CEG}, $\Lambda_L^t$ denotes the set of $f_{t\!-\!N\!+\!1}(\mathcal{W}_{Tail})$-to-$w_\infty$ paths in~$\mathbb{D}_{L}^t$. 

\begin{myTheorem}
\label{theo:C_k}
Take an $N\text{T-DCEG}$~$\mathbb{C}, N= 2,3,\ldots$. Then every CEG~$\mathbb{C}_t$ in~$\mathcal{F}(\mathbb{C})$ can be written as
\vspace{-5pt}
\begin{equation}
\label{eq:C_t}
\mathbb{C}_t=
\left\{
\begin{array}{ll}
	\Phi(\mathbb{D}_{I(\infty)}^t),
		& \text{if } t= 0,\ldots, N-2, \\
	\Phi(\mathbb{D}_I \oplus \mathbb{G}_1 \oplus \mathbb{D}_{L(a)}^{t})
 		& \text{if } t= N-1,\ldots, 2N-\!\eta\!-\!2,\\
	\mathbb{D}_I \oplus \mathbb{G}_1 \oplus \mathbb{D}_R^{N\!-\!1,t-N+\eta}
		\oplus \mathbb{D}_L^t
		 & \text{if } {t=2N-\!\eta\!-\!1,2N\!-\!\eta,\ldots}.
\end{array} \right.
\end{equation}
where $\mathbb{D}_L^t\!=\!\Phi(\mathbb{G}_{2(t-N+{\eta})} \oplus \mathbb{D}_{L(b)}^{t})$. 
\end{myTheorem}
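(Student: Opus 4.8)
\section*{Proof proposal for Theorem~\ref{theo:C_k}}

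The plan is to prove the three cases of Equation~\ref{eq:C_t} by tracking, for each time-slice~$t$, the correspondence between the finite staged tree~$\mathcal{ST}_t$ that spans~$\mathbb{C}_t$ and the paths of the graph on the right-hand side, and then checking that the $\infty$-position structure of~$\mathbb{C}_t$ is exactly the vertex identification performed on that graph. The starting point is the decomposition $\mathbb{C}=\mathbb{D}_I\oplus\mathbb{G}_0\oplus\mathbb{D}_H=\mathbb{D}_I\oplus\mathbb{G}_0\oplus\mathbb{D}_R\oplus\mathbb{G}_2$ noted just before the theorem, together with the recursive construction of the underlying TOG($\mathcal{T}_{-1},\mathcal{T}$) in Equation~\ref{eq:recursive_structure_tree}: since $\mathcal{ST}_t$ is obtained from $\mathcal{ST}_{N-2}$ by unfolding one homogeneous (suitably staged) copy of~$\mathcal{T}$ for each of the time-slices $N-1,\ldots,t$, the CEG~$\mathbb{C}_t$ must ``unroll'' the single cyclic block~$\mathbb{D}_H$ of~$\mathbb{C}$ into the $t-N+2$ linked copies $\mathbb{D}_{R(N-1)},\mathbb{G}_{2(N-1)},\ldots,\mathbb{D}_{R(t)}$. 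First I would fix the isomorphisms: that $\mathbb{D}_{R(\tau)}$ and $\mathbb{G}_{2(\tau)}$ are relabelled copies of $\mathbb{D}_R$ and $\mathbb{G}_2$ is immediate from Definitions~\ref{def:t_reepating_graph} and~\ref{def:time_homogeneous_t_link_graph}, and that $\mathbb{D}_I$ and $\mathbb{G}_1$ reproduce the first $N-1$ time-slices of $\mathcal{ST}_t$ together with the link into time-slice $N-1$ follows from Definitions~\ref{def:initial_graph} and~\ref{def:transition_initial_graph}, because an $(N-1)$-position never merges situations across time-slices before $N-1$.

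The three cases differ only in how much of this unrolled structure must still be corrected by the vertex-contraction operator~$\Phi$. The correction is needed precisely because a position in a CEG is read off a \emph{finite} subtree whereas a position (and hence a vertex of~$\mathbb{C}$) is read off an \emph{infinite} subtree: two situations that a DCEG keeps separate may be identified in~$\mathbb{C}_t$ when their coloured subtrees only disagree beyond the horizon~$t$. For $t=0,\ldots,N-2$ the whole of $\mathcal{ST}_t$ lies in the pre-homogeneous regime, so I would show that the uncontracted graph is exactly $\mathbb{D}_{I(\infty)}^t$ --- Definition~\ref{def:transformed_t_initial_graph} keeps the antecedents of the tails and heads of the time-slice-$t$ temporal edges, diverts leaves at intermediate time-slices to $w_\infty^t\in\mathcal{W}_{\!\infty}^t$ and merges the situations with no continuation in $\mathcal{ST}_t$ into a single $w_\infty$, matching the leaf-collapsing and sink-diverting prescriptions of Definition~\ref{def:NTDCEG} for the first $t+1$ time-slices --- and that applying~$\Phi$ then performs, via the bijection of Equation~\ref{eq:bijection_vertex}, exactly the merges demanded by the $\infty$-position equivalence of $\mathcal{ST}_t$. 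For $N-1\le t\le 2N-\eta-2$ one additionally appends the homogeneous block, but here $t-(N-1)<N-\eta$, so no position in that block has enough downstream tree (fewer than $N-\eta$ further time-slices) to be resolved the way it is in~$\mathbb{C}$; hence the entire block, including the link out, is the closure graph $\mathbb{D}_{L(a)}^t=\mathbb{D}_R^{N-1,t}\oplus\mathbb{G}_{2(t)}$ of Definition~\ref{def:t_closure_graph}, and the whole of $\mathbb{D}_I\oplus\mathbb{G}_1\oplus\mathbb{D}_{L(a)}^t$ is passed to~$\Phi$. For $t\ge 2N-\eta-1$ the homogeneous block is long enough to split into a genuinely periodic middle $\mathbb{D}_R^{N-1,t-N+\eta}$, whose vertices already carry their correct $\infty$-positions and therefore need no contraction, followed by a tail of the last $N-\eta$ time-slices, $\mathbb{D}_L^t=\Phi(\mathbb{G}_{2(t-N+\eta)}\oplus\mathbb{D}_{L(b)}^t)$, which is still horizon-affected and must be contracted.

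The technical heart of the argument, and the step I expect to be the main obstacle, is the \emph{stabilisation lemma} underlying the split in the last case: that for every situation $s$ at a time-slice $\tau\le t-N+\eta$, the $\infty$-position of $s$ in $\mathbb{C}_t$ coincides with its $(N-1)$-position in $\mathbb{C}$, i.e.\ a lookahead of $N-\eta$ homogeneous time-slices already recovers the infinite-lookahead position structure. One direction is easy --- isomorphic infinite coloured subtrees have isomorphic finite truncations, so the same $(N-1)$-position implies the same $\infty$-position for \emph{every}~$t$. For the converse I would argue by a pumping/periodicity argument: by the $N$-Markov time-homogeneity after time $N-1$, the coloured subtree rooted at any homogeneous situation is generated by a finite ``transition rule'' indexed by $\xi(\cdot,N-1)$, so if two such subtrees agree on their first $N-\eta$ time-slices then they agree after every further application of that rule and hence forever; the role of $\eta$ is to absorb the extra warm-up time-slice needed when $\mathcal{T}_{-1}=\emptyset$. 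Once this lemma is in place, counting time-slices shows that $\mathbb{D}_R^{N-1,t-N+\eta}$ is immune to $\Phi$ while everything downstream of it (and everything in the two smaller-$t$ cases) is not; a final check that the $\oplus$-gluings take place along the shared interface vertices $\mathcal{W}_I^{N-2}$, $f_{\bullet}(\mathcal{W}_{Head})$ and $f_{\bullet}(\mathcal{W}_{Tail})$ --- so that no edge is created or destroyed by the unions and the composite carries the coloured structure of~$\mathbb{C}$ --- then completes the verification of Equation~\ref{eq:C_t}.
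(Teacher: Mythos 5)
Your proposal is correct and follows essentially the same route as the paper: the paper's proof likewise reduces everything to your ``stabilisation lemma'' --- that the position structure of $\mathbb{C}_T$ agrees with that of $\mathbb{C}$ on time-slices $t\le T-N+\eta$ --- and establishes it by the same two-direction argument (finite truncations of isomorphic infinite coloured subtrees are isomorphic; conversely, time-homogeneity and periodicity propagate agreement over the finite look-ahead window to the whole infinite subtree, with a dedicated lemma handling situations that descend from distinct leaves of $\mathcal{T}_{-1}$). One small correction: the extra look-ahead slice is needed when $\mathcal{T}_{-1}\neq\emptyset$ (where $\eta=0$, so the contracted tail spans $N$ rather than $N-1$ time-slices), not when $\mathcal{T}_{-1}=\emptyset$ --- its purpose is to rule out spurious mergers of situations lying in different branches of the time-invariant tree, which is precisely the content of the paper's Lemma~\ref{lem:C_k}.
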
  
\begin{proof}
See \ref{app:C_k}.
\end{proof}

In contrast to a CEG, an $N\text{T-DCEG}$ provides us with a very expressive and summary framework for representing conditional statements in a dynamic environment. This is because all the  subgraphs $\mathbb{D}_R^{N\!-\!1,t\!-\!N\!+\!{\eta}}$ and $\mathbb{D}_L^{t}$ are summarised in the single subgraph~$\mathbb{D}_H$. Being based on an infinite tree, an $N\text{T-DCEG}$~$\mathbb{C}$ also avoids introducing unnecessary refinements of the position structure that its CEGs in $\mathcal{F}(\mathbb{C})$ might be forced to express. This happens because in a CEG~$\mathbb{C}_t$, $\mathbb{C}_t \in \mathcal{F}(\mathbb{C})$,  a position corresponds to a set of situations which always share a finite coloured subtree rather than an infinite one. The example below illustrates the concepts discussed in this section.

\begin{myExampleCont1}
\emph{
Recall the 2T-DCEG $\mathbb{C}$ depicted in Figure~\ref{fig:2T-DCEG_Radicalisation_3var}. Figure~\ref{fig:2T-DCEG_as_CEG} below shows how to construct the CEG~$\mathbb{C}_2$ using the different subgraphs derived from $\mathbb{C}$. As might be expected for a 2T-DCEG with no time-invariant information ($\mathcal{T}_{\!-1}=\emptyset$), the graph ${\mathbb{D}_I \oplus \mathbb{G}_1 \oplus \mathbb{D}_{R(1)}}$ is similar to $\mathbb{C}$ (Figure~\ref{fig:2T-DCEG_Radicalisation_3var}) except for the absence of cyclical temporal edges and the addition of superscripts $1$ to the vertices of~$\mathbb{D}_{R(1)}$.  The graphs $\mathbb{D}_I$ and $\mathbb{D}_{R(1)}$ are based, respectively, on the event tree $\mathcal{T}_0$ and the forest ${\mathcal{F}o_1=\{\mathcal{T}_1(s_{13+i}); i=0,\ldots,8\}}$ (Figure \ref{fig:ObjectOriented_InfiniteEventTree_Radicalisation_3var}). } 

\emph{The graph~$\mathbb{D}_R$ is topologically identical to the graph~$\mathbb{D}_{R(1)}$ but with the vertex superscript removed. The connective subgraph $\mathbb{G}_1$ is defined by the set of positions ${V_1\!=\!\{w_4,\ldots,w_9,w_{10}^1,\ldots,w_{15}^1\}}$ and the set of transition edges ${E_1\!=\!\{(w_4,w_{10}^1),\ldots,(w_9,w_{15}^1)\}}$. The subgraph~$\mathbb{D}_R^{1,1}$ is made up of only one repetition, $\mathbb{D}_{R(1)}$, of the subgraph~$\mathbb{D}_R$. So no bipartite graph~$\mathbb{G}_{2(t)}$ needs to be depicted in Figure~\ref{fig:2T-DCEG_as_CEG}. The subgraph~$\mathbb{D}_L^2$ can be directly obtained from $\mathbb{D}_R$ by relabelling its vertices and by merging the set of positions $\{w_{19},w_{20},w_{21}\}$, $\{w_{22},w_{23},w_{24}\}$ and $\{w_{25},w_{26},w_{27}\}$ of $\mathbb{D}_R$ into, respectively, the positions $w_{19}^2$, $w_{22}^2$ and $w_{25}^2$ of $\mathbb{D}_L^2$.  These positions are gathered into $\mathbb{D}_L^2$ because they have isomorphic unfolding developments over a single transition of a time-slice.}

\begin{figure}[t] 
	\begin{center}
		\includegraphics[scale=0.69,angle=-90,origin=c,trim=50 13 30 -90]{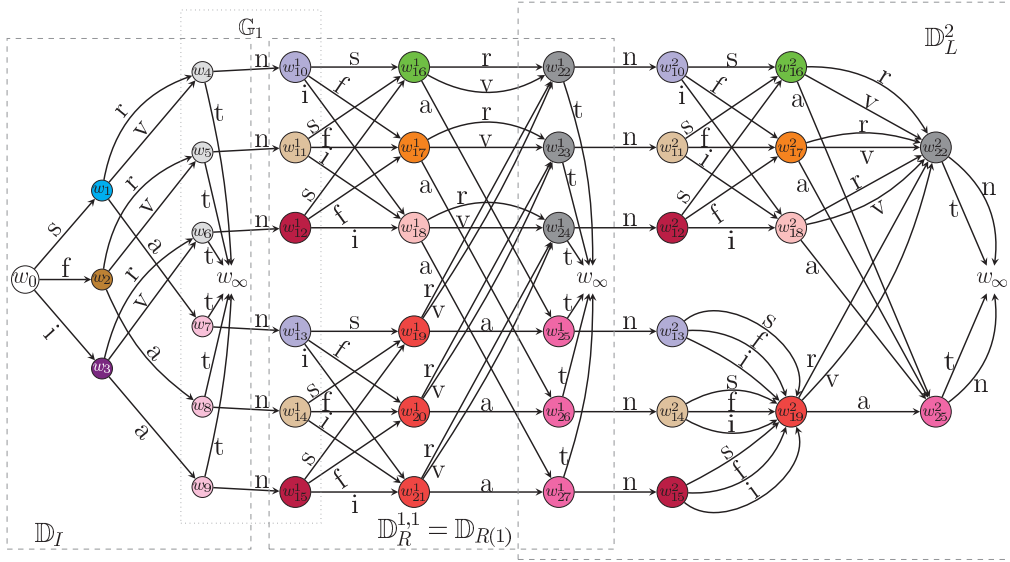}
	\end{center}
	\vspace{-75pt}
	\caption{The CEG $\mathbb{C}_2$ associated with the 2T-DCEG depicted in Figure \ref{fig:2T-DCEG_Radicalisation_3var} \label{fig:2T-DCEG_as_CEG}}
	\vspace{-5pt}
\end{figure}

\end{myExampleCont1} 

Theorem~\ref{theo:C_k} tell us that in order to build an $N\text{T-DCEG}$~$\mathbb{C}=(V,E)$ supported by an infinite staged tree~$\mathcal{ST}_{\!\!\!\infty}$ it is necessary only to construct its corresponding finite staged tree~$\mathcal{ST}_{\!\!\!2N-\eta-1}$. This happens because the subgraph~${\mathbb{D}_I \oplus \mathbb{G}_1 \oplus \mathbb{D}_R^{N\!-\!1,N\!-\!1}}$ of~$\mathbb{C}_{2N-\eta-1}$ supported by~$\mathcal{ST}_{\!\!\!2N{-\eta-1}}$ is isomorphic to the subgraph~$\mathbb{D}_I \oplus \mathbb{G}_0 \oplus \mathbb{D}_R$ of~$\mathbb{C}$. Remember that ${\mathbb{C}= \mathbb{D}_I \oplus \mathbb{G}_0 \oplus \mathbb{D}_R \oplus \mathbb{G}_2}$. The subgraph~$\mathbb{D}_I \oplus \mathbb{G}_0 \oplus \mathbb{D}_R$ completely defines the vertex set~$V$ and the edge set~${E^-}$ of~$\mathbb{C}$. The graph~$\mathbb{G}_2$ just adds the set~$E_\text{\circled{$\dagger$}}$ of cyclical temporal edges to~$\mathbb{C}$.

This result enables us to propose a method to build an $N$T-DCEG model. Algorithm~\ref{alg:NTDCEG_algorithm} describes how to implement it computationally. Initially, domain experts need to provide the event tree~$\mathcal{T}_{\!-1}$ corresponding to the time-invariant information and the event tree~$\mathcal{T}$ associated with each time-slice of the observed process. From this elicitation it is possible to construct a TOG$(\mathcal{T}_{\!-1},\mathcal{T},N\!-\!1)$. Domain experts can then describe the stage structure of the first $N\!-\!1$ time-slices using this as a framework to obtain~$\Delta(\mathcal{ST}_{\!\!\!N\!-\!1})$.

\begin{algorithm}[htbp]
  \DontPrintSemicolon
  
  \KwIn{Finite Event Trees~$\mathcal{T}_{-1}$ and~$\mathcal{T}$}
  
  \KwOut{$N$T-DCEG model}
  
  $\Delta(\mathcal{T}_{N\!-\!1}) \gets \text{TOG}(\mathcal{T}_{-1},\mathcal{T}_{N\!-\!1},N\!-\!1)$.\;
  
  Colour $\Delta(\mathcal{T}_{N\!-\!1})$ to obtain~$\Delta(\mathcal{ST}_{\!\!\!N\!-\!1})$.\;
  
  Construct the tree object~$\Delta(\mathcal{ST}_{\!\!\!2N\!-\eta-1})$ from~$\Delta(\mathcal{ST}_{\!\!\!N\!-\!1})$ using recursively equation~\ref{eq:recursive_structure_tree} based on $\Gamma_t=\Gamma_*$, $\Upsilon(\mathcal{ST}_t)$ and $h_t$, $t=N\!-\!1,\ldots,2N\!-\eta\!-\!2$, defined below.\;  
  
  Identify the position structure~$W$ corresponding to~$\Delta(\mathcal{ST}_{\!\!\!N\!-\!1})$.\;
  
  Obtain the graph~$\mathbb{C}_*$ (Definition~\ref{def:star_graph}). \; 
  	
  Obtain the $N$T-DCEG~$\mathbb{C}$ by merging the nodes of~$\mathbb{C}_*$ according to~$W$.\;
  
  Elicit the conditional probabilities $P(\mathbb{C})$ of $\mathbb{C}$.\;
  
  \Return{$\mathbb{C}$ and $P(\mathbb{C})$}\;
  
\caption{$N$T-DCEG Algorithm \label{alg:NTDCEG_algorithm}}
\end{algorithm}              

Let $\Gamma_*$ be the set of objects~$\Delta(\mathcal{ST}_{\!\!\!N\!-\!1}(s_i))$, where $s_i$ is a situation corresponding to a leaf~$l_k$ of~$\mathcal{ST}_{\!\!\!N\!-\!2}$.
Note that the time-homogeneity after time~$N\!-\!1$ and the periodicity spanned by a TOG$(\mathcal{T}_{\!-1},\mathcal{T})$ makes one of this object unfold from each leaf node of~$\Delta(\mathcal{ST}_{t})$, $t=N\!-\!1,\ldots,2N{\!-\eta\!-\!2}$, to obtain recursively $\Delta(ST_{t+1})$. It then follows that the object~$\Delta(\mathcal{ST}_{\!\!\!N\!-\!1})$ provides us with all elements to construct~$\Delta{(\mathcal{ST}_{\!\!\!2N{-\eta-1}})}$ using equation~\ref{eq:recursive_structure_tree}. For this purpose, for all~$t$, $t=N\!-\!1,\ldots,2N{\!-\eta\!-\!2}$, take $\Upsilon(\mathcal{ST}_t)=\{\Upsilon_k\}$, where $\Upsilon_k$ is the set of leaves~$l_j$ of~$\Delta(\mathcal{ST}_{t})$ such that the concatenation of events~$\xi(l_j,N-1)$ is equal to~$\xi(s_i,N-1)$ and $s_i$~corresponds to the leaf node~$l_k$ of~$\mathcal{ST}_{\!\!\!N\!-\!2}$. Also define $h_t: \Upsilon(\mathcal{ST}_t) \to \Gamma_*$, where
$h_t(\Upsilon_k)=\Delta(\mathcal{ST}_{\!\!\!N\!-\!1}(s_i))$
such that $s_i$~corresponds to the leaf node~$l_k$ of~$\mathcal{ST}_{\!\!\!N\!-\!2}$. We can then use the staged tree~$\mathcal{ST}_{\!\!\!2N\!{-\eta-1}}$ to obtain the position structure~$W$ associated with the first $N\!-\!1$ time-slices.

Now return to the staged tree depicted by~$\Delta(\mathcal{ST}_{\!\!\!N\!-\!1})$ and obtain the graph~$\mathbb{C}_*$ (Definition~\ref{def:star_graph}). This graphical transformation implicitly defines the cyclic temporal edges of the $N$T-DCEG~$\mathbb{C}$. In fact, it is straightforward to verify that the situations~$s_a$ and $s_b$ of~$\mathcal{ST}_{\!\!\!\infty}$ corresponding, respectively, to~$s_i$ and~$l_j$ of~$\mathcal{ST}_{\!\!\!N\!-\!1}$ in Definition~\ref{def:star_graph} are in the same position. This is because the periodicity and time-homogeneity is assumed.  

\begin{myDef} 
	\label{def:star_graph}
	A \textbf{graph}~$\mathbb{C}_*$ is obtained from a staged tree~$\mathcal{ST}_{\!\!\!N\!-\!1}$ when each of its leaf~$l_j$ is merged to the situation~$s_i$ such that \vspace{-7pt}$$\xi(l_j,N\!-\!1)=\xi(s_i,N\!-\!1),\vspace{-7pt}$$
	 where $s_i$ corresponds to a leaf~$l_k$ of~$\mathcal{ST}_{\!\!\!N\!-\!2}$.  
\end{myDef}

To obtain the $N$T-DCEG~$\mathbb{C}$ it is then necessary only to combine together the vertices of~$\mathbb{C}_*$ according to the position structure~$W$.
Finally, domain experts needs to elicit the conditional probability distributions based on~$\mathbb{C}$. Note that as many context-specific conditional independence are depicted by~$\mathbb{C}$ less effort will be required from domain experts in comparison to this quantitative elicitation using a DBN or the staged tree itself. This justifies to construct first the stage structure not only in terms of time but also in terms of reliability since domain knowledge is more robust as regard the qualitative structure than its quantification.

\section{Reading Conditional Independence in an $N$T-DCEG}
\label{sec:conditional_independence}

Domain experts often describe a process based on sequences of events that characterise situations where a unit can be at a particular time.  As illustrated in~Examples~\ref{ex:Radicalisation_Static_3variables} and~\ref{ex:Radicalisation_Dynamic_3variables}, in these cases they do not immediately reason using random variables. Therefore, it is important to develop a sound methodology of reading conditional independences from the topology of an $N$T-DCEG graph using a set of random variables identified only after the model elicitation has taken place. In the future, this will be even more important when automatic model selection algorithms based on data become available and provide analysts with an $N$T-DCEG model for exploration and interpretation.
In~\cite{Collazo.2017,Collazo.Smith.2018}
	 we develop some techniques to formally construct random variables based on the topology of an $N$T-DCEG graph. Here our focus is on reading the existing conditional independences at time~$t$ given that some past events are known.

It is obvious that the conditional independences embedded into~$\mathbb{C}$ at time-slices~$t,t=0,\ldots, N\!-\!1$, can be directly read from its subgraph~$\mathbb{D}_I$. Reading the conditional independence at the subsequent time-slices requires us to develop a specific tool. For this purpose, we will first obtain a result associated with the family of CEGs~$\mathcal{F}(\mathbb{C})$ connecting a time-slice~$t$, ${t=N,N\!+\!1,\ldots}$, with time-slice~$N\!-\!1$. We will subsequently translate this result to the ${N\text{T-DCEG}}$ topology.

Let $w_i(t)$ be the set of situations that happen at time~$t$ and are in the position~$w_i$ of a DCEG/CEG . Also let $\Xi_c(w(t),N)=\{\xi(s,N);s \in w(t)\}$ denote a set of all sequences of events $\xi(s,N),s \in w(t)$, that happen along each walk from the root position $w_0$ to $w(t)$ whose events from time $0$ to $t-N$ are excluded. Theorem \ref{the:legend} guarantees that for every time-slice~$t$, $t=N, N\!+\!1,\ldots$,  there exists a sufficient large time-slice~$T$ such that in~$\mathbb{C}_T$ every position~$w_a^*$ at time-slice~$t$ that has a parent at previous time-slice~$t\!-\!1$ corresponds to a position~$w_b^*$ at time-slice~$N\!-\!1$. Moreover, the set of sequences of events that happened in the last~${N\!-\!1}$ time-slices preceding~$w_a^*$ and~$w_b^*$ are the same.

\begin{myTheorem}
Take an NT-DCEG $\mathbb{C}=(V,E)$ and define the set of positions $\mathcal{W}_{Head}$ according to Definition~\ref{def:temporal_edges}. In a CEG ${\mathbb{C}_T=(V_T,E_T)}$, ${\mathbb{C}_T \in \mathcal{F}(\mathbb{C})}$ and~${T = 2N\!-\!{\eta}, 2N\!-\!{\eta\!+\!1},\ldots}$, for every position~$w_a^*(t)$ in the set~$f_t(\mathcal{W}_{Head})$, ${f_t(\mathcal{W}_{Head}) \subseteq V_T}$ and~${t=N,\ldots,T\!-\!N\!+\!{\eta}}$, we have that
\vspace{-7pt}
\begin{equation}
\label{eq:legend}
\Xi_c(w_a^*(t),N-1)=\Xi_c(w_b^*(N-1),N-1),
\vspace{-7pt}
\end{equation}
where $w_b^*(N\!-\!1)= f_{N-1}(f_{t}^{-1}(w_a^*(t))) \in V_T$.
\label{the:legend}
\vspace{0pt}
\end{myTheorem}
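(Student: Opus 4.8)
The plan is to transfer the statement, which concerns the finite CEG $\mathbb{C}_T$, back to the infinite staged tree underlying the DCEG $\mathbb{C}$, and there exploit periodicity and time-homogeneity directly.

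\emph{Step 1 (reduction to the DCEG).} First I would invoke Theorem~\ref{theo:C_k}: for $T\ge 2N-\eta$ we have $\mathbb{C}_T=\mathbb{D}_I\oplus\mathbb{G}_1\oplus\mathbb{D}_R^{N-1,T-N+\eta}\oplus\mathbb{D}_L^T$, and for $N\le t\le T-N+\eta$ the positions in $f_t(\mathcal{W}_{Head})$ are vertices of the \emph{uncontracted} block $\mathbb{D}_R^{N-1,T-N+\eta}$ (the operator $\Phi$ only acts when forming $\mathbb{D}_L^T$). Since $\mathbb{D}_R^{N-1,T-N+\eta}$ is assembled from copies of the subgraph $\mathbb{D}_R$ of $\mathbb{C}$, the vertex $w_a^*(t)=f_t(w_a)$ of $\mathbb{C}_T$ aggregates exactly the situations of $\mathcal{ST}_T$ at time-slice $t$ that lie in the position $w_a$ of $\mathbb{C}$, and likewise $w_b^*(N-1)=f_{N-1}(w_a)$ aggregates the situations at time-slice $N-1$ in $w_a$. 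Hence $\Xi_c(w_a^*(t),N-1)=\{\xi(s,N-1):s\text{ at time-slice }t,\ s\in w_a\}$, and the theorem reduces to a statement purely about the infinite staged tree of $\mathbb{C}$: that this set is the same for $t$ and for $N-1$.

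\emph{Step 2 (the key lemma).} The crux is: if $s$ is a boundary situation at a time-slice $t\ge N-1$, $s'$ a boundary situation at time-slice $N-1$, and $\xi(s,N-1)=\xi(s',N-1)$, then $s$ and $s'$ lie in the same position of $\mathbb{C}$. I would build the coloured-subtree isomorphism of Definition~\ref{def:position} recursively. Because the supporting tree is a $\mathrm{TOG}(\mathcal{T}_{-1},\mathcal{T})$, every time-slice is an attached copy of the same finite tree $\mathcal{T}$, so $\mathcal{T}_\infty(s)$ and $\mathcal{T}_\infty(s')$ are isomorphic with matching event labels and time-slice indices shifted by $t-(N-1)$ — exactly the global and local conditions of Definition~\ref{def:periodic_event_tree} — while the $\mathcal{T}_{-1}$-prefix carried inside $\xi(\cdot,N-1)$ pins down the common $\mathcal{T}_{-1}$-leaf and gives the time-invariant condition. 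For the colours: time-homogeneity of the staged tree (Definition~\ref{def:TH-STT}) forces boundary situations with equal $\xi(\cdot,N-1)$ at time-slices $\ge N-1$ to be in the same stage, hence identically coloured florets; following any event out of each, the two children again have equal $\xi(\cdot,N-1)$ (append the new event, and — since the intra-slice structure is the common $\mathcal{T}$ — roll off the oldest slice in lockstep at the same point of the event sequence), so the argument propagates down the whole tree and all colourings agree. Thus $s$ and $s'$ are in the same position.

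\emph{Step 3 (set equality).} Using the lemma the two sets match. For ``$\subseteq$'': an element $\xi=\xi(s,N-1)$ with $s$ at time-slice $t$ in $w_a$ is the $\mathcal{T}_{-1}$-prefix followed by a non-terminating $(N-1)$-slice event sequence $\sigma$; by periodicity $\sigma$ is realisable from the root over time-slices $0,\dots,N-2$ (the set of $(N-1)$-slice sequences is the same at every boundary, each slice being a copy of $\mathcal{T}$), and the $\mathcal{T}_{-1}$-prefix is a root-to-leaf path of $\mathcal{T}_{-1}$, so some boundary situation $s'$ at time-slice $N-1$ has $\xi(s',N-1)=\xi$; by Step 2, $s'\in w_a$. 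For ``$\supseteq$'': given $s'\in w_a$ at time-slice $N-1$ realising $\xi$, pick a non-terminal boundary situation $\tilde s$ at the start of time-slice $t-N+1$ with the same $\mathcal{T}_{-1}$-prefix as $s'$ (one exists since along each $\mathcal{T}_{-1}$-type non-terminal boundary situations persist at every time-slice), follow $\sigma$ from $\tilde s$ to $s$ at time-slice $t$ with $\xi(s,N-1)=\xi$, and invoke Step 2 to get $s\in w_a$. This is the asserted identity.

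\emph{Main obstacle.} I expect the delicate point to be the recursion in Step 2: one must verify that ``$\xi(\cdot,N-1)$ stays equal down both continuations'' genuinely holds — that the old-slice roll-off in the definition of $\xi$ occurs at the same place along the two paths, which relies on each time-slice being the \emph{same} $\mathcal{T}$ (including the type~B termination pattern) — and one must justify upgrading ``equal primitive probabilities'', which Definition~\ref{def:TH-STT} literally supplies, to ``same stage/colour'', needed so that the isomorphism is of \emph{coloured} trees and hence certifies a position rather than merely a stage. The bookkeeping in Step 1 that pins down the exact situation-content of $f_t(w_a)$ from the decomposition in Theorem~\ref{theo:C_k} is routine but should also be carried out carefully.
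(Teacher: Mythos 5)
Your proposal is correct and takes essentially the same route as the paper's proof, which likewise reduces the claim to the infinite staged tree and rests on the two facts you isolate: any admissible $(N-1)$-slice history is realisable at time-slice $N-1$, and periodicity plus time-homogeneity force boundary situations with equal $\xi(\cdot,N-1)$ into the same position of $\mathbb{C}$. The only differences are presentational --- the paper argues by contradiction on one set containment where you prove both inclusions directly --- and the obstacles you flag (the roll-off bookkeeping and upgrading equal primitive probabilities to equal stage colouring) are points the paper's own proof also passes over without comment.
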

\begin{proof}
See \ref{app:legend}.
\end{proof}

It then follows that we can directly interpret the conditional independences at time~$t$, ${t= N, N\!+\!1,\ldots}$, using~$\mathbb{C}$ as we do at time-slice~$N\!-\!1$.} This is possible for three reasons. First, since every process represented by an $N\text{T-DCEG}$ is $N\text{-Markov}$, we only need information over the last $N-1$ time-slices. Second, Theorem \ref{theo:C_k} assures us that the conditional independences at a time~$t,t= N\!-\!1,N,\ldots$, are depicted by a subgraph $\mathbb{D}_{R(t)}$ that is isomorphic to a subgraph $\mathbb{D}_R$. Third, Theorem~\ref{the:legend} asserts that every position at the beginning of time-slices~${N\!-\!1}$ and~$t$, ${t= N, N\!+\!1,\ldots}$, have the same information about the past events in the last $N\!-\!1$ time-slices. Note that this information is completely represented in~$\mathbb{D}_I$. The example below illustrates how to use this methodology for reading conditional independence statements from the topology of an $N$T-DCEG.

\begin{myExampleCont1} 
	\emph{
		Return to Example~\ref{ex:Radicalisation_Dynamic_3variables}. As for CEG graph in Example~\ref{ex:Radicalisation_Static_3variables}, the construction of the $N$T-DCEG graph (Figure~\ref{fig:2T-DCEG_Radicalisation_3var}) now helps us to identify the variables~$N(t)$, $R(t)$ and~$T(t)$ that takes values over each time~$t$, $t=0,1,\ldots$. Using the subgraph~$\mathbb{D}_I$ associated with the initial time-slice~of the $N$T-DCEG graph (Figure~\ref{fig:2T-DCEG_Radicalisation_3var}) as a legend to analyse the subsequent time-slices and exploring the fact that the positions~$w_{19},w_{20}$ and~$w_{21}$ are coloured the same, the following context-specific conditional independence stands out: variable~$R(t\!+\!1)$ is conditionally independent of variable~$N(t\!+\!1)$, $t=0,1,\ldots$, given that the variables~$R(t)$ and~$T(t)$ assume, respectively, values equal to~$a$ and~$n$. So to summarise,\vspace{-9pt}
		$$R(t\!+\!1) \independent N(t\!+\!1) | (R(t)=a,T(t)=n), \text{ for } t=0,1,\ldots. \vspace{-9pt}$$
		Analogous reasoning leads us to read the following conditional independences:
		\begin{itemize}[nosep]
			\item $T(0) \independent N(0) | R(0)$ and $T(0) \independent (N(0),R(0)) | R(0) \neq a$.
			\item $N(t\!+\!1) \independent R(t) | (N(t), T(t)=n)$, for $t=0,1,\ldots$.
			\item $R(t\!+\!1) \independent N(t) | (R(t), T(t)=n, N(t\!+\!1))$, for $t=0,1,\ldots$.
			\item $R(t\!+\!1) \independent N(t) | (R(t)=a, T(t)=n)$, for $t=0,1,\ldots$.
			\item $R(t\!+\!1) \!\independent\! (N(t),R(t)) | (R(t) \!\neq\! a, T(t)\!=\!n,N(t\!+\!1))$, for~${t=0,1,\ldots}$.	
			\item $T(t\!+\!1) \!\independent\! (N(t),R(t),N(t\!+\!1)) | (T(t)\!=\!n,R(t\!+\!1))$, for ${t=0,1,\ldots}$.
			\item $T(t\!+\!1) \!\independent\! (N(t),R(t),N(t\!+\!1),R(t\!+\!1) | (R(t+1) \neq a,T(t)=n)$, for~${t=0,1,\ldots}$.
		\end{itemize}
	}
	
	\emph{Based on the discussion presented in the example of Section~\ref{subsec:NT-DCEG}, we can also construct a different set of variables~$N$, $R^*$ and~$T$, where $R^*$ is a categorical variable indicating if an inmates is adopting radicalisation. Replacing~$R$ by~$R^*$, we can obtain the same conditional statements outlined above.}
\end{myExampleCont1}

The analysis of how a process can unfold $s$ time steps ahead from time~$t$ given a particular set of past events~$\mathcal{E}$ needs a little more care because of the cyclical temporal edges. Observe that the set~$\mathcal{E}$ corresponds to a set of positions~${\mathcal{W}_\mathcal{E}}$ at the beginning of time~$t$, i.e. every position in~${\mathcal{W}_\mathcal{E}}$ has at least one parent in time~$t-1$,  $t=0,1,\ldots$, or ${\mathcal{W}_\mathcal{E}}=\{w_0\}$. If interest lies in a time-slice~${t\!+\!s}$ that happens within the first~$N\!-\!1$ time-slices (${t\!+\!s \! = \{-1,0,\ldots, \! N\!-\!1\}}$), the analysis using an ${N\text{T-DCEG}}$ is simplified by discarding the walks that do not unfold from~$\mathcal{W}_\mathcal{E}$.  The same procedure also applies if the focus is on the present time~$t$, ${t=N\!-\!1,N,\ldots}$. In this case, we have that $s=0$ and ${\mathcal{W}_\mathcal{E} \! \subseteq \! \mathcal{W}_{Head}}$. 

When $s$ and $t+s$ are, respectively, greater than $0$ and $N$, to explore how a process might unfold over $s$ time steps after the actual time $t$ given a particular set of past events $\mathcal{E}$ needs more attention.  However the task can be easily simplified if the transition matrix $\boldsymbol{M}$ associated with the Markov Chain projection of the elicited $N$T-DCEG (Section \ref{subsec:NT-DCEG}) is used.

This assumes that $t$ is greater than $N-2$. We are then able to identify from $\mathcal{E}$ the set~$\mathcal{W}_{\mathcal{E}(s)}$ of positions~$w$, $w \in \mathcal{W}_{Head}$, a unit may be in at the beginning of time $t+s$. Let $\boldsymbol{w}(t)$ and $p(\boldsymbol{w}(t))$  be, respectively, a binary vector that represents this location information and its corresponding probability vector at the beginning of time~$t$. Then
\begin{equation}
\label{eq:step_ahead_repeating}
p(\boldsymbol{w}(t+s))=p(\boldsymbol{w}(t)) \times \boldsymbol{M}^{{s}}.
\end{equation}      

Now based on the vector $p(\boldsymbol{w}(t+s))$ we can define~$\mathcal{W}_{\mathcal{E}(s)}$, $\mathcal{W}_{\mathcal{E}(s)} \! \subseteq \! \mathcal{W}_{Head}$, and then use the same framework described for $s=0$ to analyse what might happen at time-slice $t+s$. Note that when $t \in \{-1,0,\ldots,N\!-\!2\}$ and $t+s \in \{N,N\!+\!1,\ldots\}$, before applying Equation \ref{eq:step_ahead_repeating} we first need to project our current information at time $t$ into the future time-slice $N\!-\!1$. We therefore need to use the transitions depicted in the initial subgraph $\mathbb{D}_I$ in order to find the set of positions that a unit can be at time-slice $N\!-\!1$ based on its possible positions at time $t$. In this case, it follows that  
\begin{equation}
\label{eq:step_ahead_initial}
p(\boldsymbol{w}(t+s))=
	p(\boldsymbol{w}(t)) \times
		\boldsymbol{M}_t \times
		\boldsymbol{M}^{t+s-N+{1}},
\end{equation}
where $\boldsymbol{M}_t, t = -1,0,\ldots,N\!-\!2$, is a transition matrix associated with the positions in $\mathbb{D}_I$ from the beginning of time-slice $t$ to the beginning of time-slice $N\!-\!1$.

\begin{myExampleCont1}
	\emph{
		Assume that in his most recent period $t$ in prison an inmate adopting radicalisation kept intense social contacts with extremist recruiters. We are concerned about what might happen to him at the next time step were he to stay in the same prison. In this case, we have that the set of past events $\mathcal{E}$ corresponds to the set of events~${E(t)=\{i,a,n\}}$
		at time-slice $t$, and our focus is on the developments that might happen at time-slice $t+1$. Using the 2T-DCEG elicited in Figure~\ref{fig:2T-DCEG_Radicalisation_3var} as representative of this process, the possible future developments associated with the event set~$\mathcal{E}$, where $\mathcal{W}_\mathcal{E}=\{w_{15}\}$, is highlighted in Figure~\ref{fig:2T_intense_ready_no} below. Two points stand out. First, our model implies that observing the social contacts of the target inmate at time ${t\!+\!1}$ will not provide any additional information about his degree of radicalisation or his probability to remain in prison within this time interval.}
	
	\begin{figure}[t]
		\begin{center}
			\includegraphics[scale=0.9,angle=-90,origin=c,trim=25 19 0 -135]
			{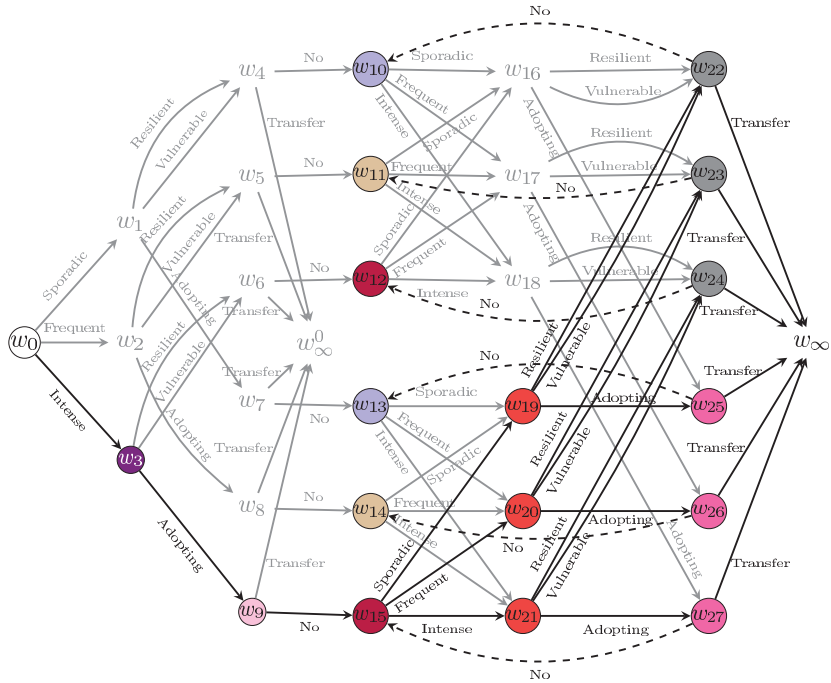}
		\end{center}
		\vspace{-30pt}
		\caption{The 2T-DCEG associated with Example \ref{ex:Radicalisation_Dynamic_3variables} when a radical prisoner maintains intense social contacts with other radical inmates and is not transferred.  \label{fig:2T_intense_ready_no}}
		\vspace{-10pt}
	\end{figure}
	
	\emph{
		Second, assuming the transition matrix $\boldsymbol{M}$ given in equation~\ref{eq:transition_matrix_example}, we then have
		$p(\boldsymbol{w}(t+2))
			=
				p(\boldsymbol{w}(t+1)) \times	\boldsymbol{M}
			=
				(0.15,0.06,0.14,0.09,0.11,0.28,0.17)$,
		where $p(\boldsymbol{w}(t+1))=(0,0,0,0,0,0,1)$.
		Therefore,  at the end of the time-slice $t\!+\!1$ the prisoner adopting radicalisation could arrive in any possible position in the set ${\mathcal{W}_{\mathcal{E}(1)}=\{w_{\infty},w_i;i\!\!=\!\!10,\ldots,15\}}$.
		}
	
	\emph{		
		This indicates that the prison managers might lose track of him.  For example, this prisoner could intentionally reshape his social networks at time-slice $t+1$ to disguise his extremist ideology. The consequence would be that at the beginning of time-slice $t+2$ he could be in position~$w_{10}$ or~$w_{13}$ with probability~$6\%$ and $11\%$, respectively. In this case, he could also be in position~$w_{11}$ or~$w_{14}$ with even higher probabilities ($14\%$ and $28\%$, respectively). Note that inmates at positions $w_{10}$ and $w_{13}$ have the same social pattern as well as at positions $w_{11}$ and $w_{14}$. On the other hand the inmate only has a $17\%$ chance of being identified in the same position~$w_{15}$ at the begining of time~$t\!+\!2$. Moreover, classifying an inmate as adopting radicalisation is a challenging task and error prone. At time $t+2$ these facts might misguidedly prompt prison managers to downgrade the inmate's classification as adopting radicalisation given at time $t$ and so reduce the monitoring mechanisms over him. So the reasoning the 2T-DCEG provokes is useful: it might stimulate some pro-active response immediately to deradicalise the inmate at time $t+1$ or at least to monitor him closely for a long time.} 
	
\end{myExampleCont1}  

\section{Conclusions}
\label{sec:Conclusion}

An ${N\text{T-DCEG}}$ is able to encode many asymmetric and context-specific independence structures. We have shown that these can be read directly using the algorithmic tools we developed in Section \ref{sec:conditional_independence}. In analogy with the interrogation methods used for a BN, the deductions from a DCEG model can be fed back to domain experts for verification or criticism. This process will continue until the hypothesised model is requisite \cite{Smith.2010,Phillips.1984,Phillips.2010}; i.e until no obvious inadequacies in the implications of the model could be found. In this way the plausibility of the qualitative implications of a hypothesised model can be examined \emph{before} any costly quantitative population of the graphical probability model takes place.

We have shown that this new dynamic class of models is compact. It is therefore sufficient to provide us with a framework for fast propagation of evidence and for model selection. Adopting a Bayesian approach \cite{Barclay.etal.2015}, for example, we can directly extend the propagation algorithm \cite{Thwaites.etal.2008} and model selection methods \cite{Freeman.Smith.2011a,Barclay.etal.2013a,Cowell.Smith.2014} that exists for CEGs. In particular a heuristic strategy for model search using non-local priors looks promising for model spaces based on event trees \cite{Collazo.Smith.2016}. Such algorithms will be reported in a forthcoming work.

In the future we plan to extend our object-recursive approach and the \emph{tree} objects developed in Section \ref{subsec:Event_Tree} to define a continuous time DCEG ($\text{CT-DCEG}$). This new family of DCEGs will enable us to systematically construct models primarily designed to describe how and when events might happen during irregular time-step transitions. Such CT-DCEG models should then extend the continuous time BN~\cite{Nodelman.etal.2002, Nodelman.etal.2003} and further explore the link between a general DCEG with holding times and semi-Markov processes~\cite{Barclay.etal.2015}. 

Finally, we will demonstrate in a later paper that the process-driven objects defined here can also be used to define an Object-Oriented CEG/DCEG and so provide a generalisation of Object-Oriented BNs \cite{Koller.Pfeffer.1997,Bangso.Wullemin.2000}. We believe that this development will facilitate the knowledge engineering process using event trees and the reusability of computational codes, particularly in large and complex real-world applications.

\newpage
\appendix
\section{Proof of Theorem \ref{the:periodicity_finite_DCEG}}
\label{app:theorem_periodicity_finite_DCEG}

\noindent
\emph{
	If a DCEG is finite, then its supporting event tree is periodic after some time~T.
	\vspace{5pt}} 

Assume that a finite DCEG is supported by an event tree that it is non-periodic for every $t, t=0,1,\ldots$. By Lemma~\ref{lem:PET} below, it will then follow that for an infinite subsequence of time-slices~$T_i, i=1,2,\ldots$, there must be at least one situation~$s_{a_i}(T_i)$, such that no situation~$s_b(t)$, $t=0,\ldots,{T_i\!-\!1}$, satisfies simultaneously the local and global conditions required by the bijection~$\phi(s_a,s_b)$ in Equation~\ref{eq:bijection_periodicity_tree}. 

So, by definition~\ref{def:position}, every situation $s_{a_i}(T_i), i=1,\ldots$, must be in a different position~$w_{c_i}$. Therefore the number of positions in any staged tree supported by this event tree is infinite. This contradicts the hypothesis. It follows that the supporting event tree of a finite DCEG must therefore be PET-$T$ for some~$T$.

\begin{myLemma}
	\label{lem:PET}
	Take an infinite event tree~$\mathcal{T}_{\!\infty}$. If the global and local conditions required by the bijection~$\phi(s_a,s_b)$ in Definition~\ref{def:periodic_event_tree} is satisfied for some time~$T'$, then there must exist a time~$T$, $T \!\in\! \{T',T'\!+\!1,\ldots\}$, such that $\mathcal{T}_{\!\infty}$ is PET-$T$. 
\end{myLemma}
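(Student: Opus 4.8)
The plan is to show that the hypothesis forces there to be only \emph{finitely many} classes of situations under the equivalence ``having futures matched by a global-and-local bijection'', and then to exploit this finiteness, together with the tree structure, to replace an arbitrary match by one that \emph{also} satisfies the time-invariant condition of Definition~\ref{def:periodic_event_tree}.

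First I would set up the equivalence. Write $s_a\sim s_b$ when there is a bijection $\phi(s_a,s_b):\Lambda(\mathcal{T}_\infty(s_a))\to\Lambda(\mathcal{T}_\infty(s_b))$ satisfying the global and local conditions of Definition~\ref{def:periodic_event_tree}. Since distinct edges leaving a situation carry distinct labels, a path is determined by its event sequence, so such a $\phi$ respects common prefixes and is induced by a label-preserving, time-slice-shifting isomorphism $\mathcal{T}_\infty(s_a)\cong\mathcal{T}_\infty(s_b)$; hence $\sim$ is an equivalence relation, and the $\sim$-class of a situation determines its whole future unfolding up to such isomorphism---in particular whether it lies on a time-slice boundary and which $\sim$-classes arise one time-slice into its future. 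Because each time-slice contains only finitely many events, $\mathcal{T}_{T'}$ is a finite tree, so only finitely many situations occur at time-slices $\le T'$; by the hypothesis every situation at a later time-slice is $\sim$ to one of these, so there are only finitely many $\sim$-classes in all, say $n$ of them.

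Next I would handle the time-invariant part. If $\mathcal{T}_{-1}=\emptyset$ the time-invariant condition is vacuous, so the hypothesis is exactly the statement that $\mathcal{T}_\infty$ is $\text{PET-}T'$ and we may take $T=T'$. Otherwise let $\ell_1,\dots,\ell_m$ be the leaves of $\mathcal{T}_{-1}$; each situation at a non-negative time-slice descends from a unique $\ell_j$. Form the finite digraph on $\sim$-classes with an arc $C\to C'$ whenever a boundary situation of class $C$ has, one time-slice later, a boundary situation of class $C'$ in its future. The boundary-situation classes occurring inside $\mathcal{T}_\infty(\ell_j)$ are precisely those reachable from $[\ell_j]$ in this digraph, and in a digraph on $\le n$ vertices everything reachable is reached within $n-1$ arcs; so every such class already occurs at a time-slice $\le n-1$ inside $\mathcal{T}_\infty(\ell_j)$. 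A short extension using ``classes one step inside a time-slice'' covers non-boundary situations too, giving: every $\sim$-class realised anywhere in $\mathcal{T}_\infty(\ell_j)$ is realised at some time-slice $\le n-1$ inside $\mathcal{T}_\infty(\ell_j)$, with the bound $n-1$ independent of $j$.

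Finally I would take $T=\max(T',n-1)$. Given $s_a(t_a)$ with $t_a\ge T+1$, let $\ell_j$ be its $\mathcal{T}_{-1}$-leaf; the previous step yields $s_b\sim s_a$ inside $\mathcal{T}_\infty(\ell_j)$ at a time-slice $t_b\le n-1\le T<t_a$, and $s_a\sim s_b$ furnishes a bijection meeting the global and local conditions while $s_a,s_b$ descend from the same leaf of $\mathcal{T}_{-1}$, so the time-invariant condition holds as well; hence $\mathcal{T}_\infty$ is $\text{PET-}T$ with $T\ge T'$, as required. I expect the main obstacle to be making ``the $\sim$-class determines one-time-slice-later behaviour'' precise enough to define the class-transition digraph cleanly---in particular getting the time-slice shift in the local condition to line up, and treating non-boundary situations---but the load-bearing idea is just that a finite class-set forces every class appearing in a branch to appear \emph{early} in that branch, which is exactly what makes the time-invariant condition satisfiable.
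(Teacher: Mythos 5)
Your argument is correct in substance, and it takes a genuinely different route from the paper's. The paper proves the lemma by contradiction: assuming $\mathcal{T}_\infty$ is not PET-$T$ for any $T\ge T'$, it extracts inside a single branch $\mathcal{T}_\infty(s_{l_i})$ an infinite sequence of situations each of whose global-and-local witnesses would have to be a \emph{fresh} element of the finite vertex set of $\mathcal{T}_{T'-1}(s_{l_m})$, which is impossible; the only finiteness exploited is that of $\mathcal{T}_{T'-1}$. You instead argue directly: you package ``global-and-local matchable'' as an equivalence relation $\sim$, note that the hypothesis plus finite branching forces finitely many classes, and show that every class realised in a branch $\mathcal{T}_\infty(\ell_j)$ is already realised \emph{early in that same branch}, so a late situation can take as witness an early situation of its own class in its own branch, which makes the time-invariant condition automatic. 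Both proofs secretly rest on the same two facts --- finiteness of the witness pool at times $\le T'$, and the observation that a class, once it appears in a branch, can thereafter self-witness there --- but your packaging buys an explicit bound $T=\max(T',n-1)$ and a direct construction rather than a subsequence extraction. Two remarks. First, the class-transition digraph is more machinery than you need: since there are finitely many classes and finitely many leaves of $\mathcal{T}_{-1}$, you could simply let $T$ be the maximum, over all pairs of a leaf $\ell_j$ and a class realised at a non-negative time-slice in $\mathcal{T}_\infty(\ell_j)$, of the earliest such time-slice; this sidesteps the boundary/non-boundary case split and the off-by-one bookkeeping you flag as the main obstacle (and if you keep the digraph, note that a class equal to $[\ell_j]$ itself may need a shortest \emph{nonempty} walk, of length at most $n$ rather than $n-1$). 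Second, both you and the paper tacitly use that a $\sim$-class determines the future unfolding up to a label-preserving, time-shifting isomorphism --- you to make the digraph well defined, the paper to conclude that a previously used witness still serves --- and your explicit appeal to distinct sibling edge labels is the right justification, consistent with the paper's definition of $\mathbb{X}(s_i)$ as a set of labels.
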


\begin{proof}
	If $\mathcal{T}_{\!-1}=\emptyset$, then $\mathcal{T}_{\!\infty}$ is PET-$T'$. Otherwise, hypothesise that $\mathcal{T}_{\!\infty}$ is not a PET-$T$ for any~$T$, $T \!\in\! \{T',T'\!+\!1,\ldots\}$. Recall that each situation $s_{l_i}(-1)$ corresponding to a leaf of~$\mathcal{T}_{\!-1}$ defines a type of units observed in the process. Hence, for some situation $s_{l_i}$, there must exit an infinite sequence of situations~$s_{a_j}(T_j)$, $j=1,2,\ldots$, such that:  the sequence of $T_j$, $j=1,2,\ldots$, is monotonic increasing; $s_{a_j} \in \mathcal{T}_{\infty}(s_{l_i})$, $j=1,2,\ldots$; and every $s_{a_j}$ does not satisfy the time-invariant condition required by the bijection in Equation~\ref{eq:bijection_periodicity_tree}. By assumption, for every~$s_{a_j}(T_j)$ there must be a situation~$s_{b_j}(t)$, $t=0,\ldots,T'\!-\!1$, such that the bijection~$\phi(s_{a_j},s_{b_j})$ in Definition~\ref{def:periodic_event_tree} satisfies the local and global conditions and $s_{b_j} \in \mathcal{T}_{T'\!-\!1}(s_{l_m})$, $m \neq i$. By our hypothesise, there must exist an infinite subsequence~$(s_{c_k})_k$ of~$(s_{a_j})_j$ such that $s_{b_k} \in V_{T'\!-\!1}(s_{l_m}) \backslash V_{T'\!-\!1}(s_{l_i}) \cup \{s_{b_K}; K=1,\ldots,k\!-\!1\}$, where: $m \neq i$; $V_{T'\!-\!1}(s_{l_m})$ is the vertex set of~$\mathcal{T}_{T'\!-\!1}(s_{l_m})$; and $(s_{b_k})_k$ is an infinite subsequence of $(s_{b_j})_j$. If this was not true, then the set of situations $s_{b_j}$ would be fix after some time~$T$, $T \!\in\! \{T',T'\!+\!1,\ldots\}$. Hence $\mathcal{T}_{\!\infty}$ would be PET-$T$. However, since $V_{T'\!-\!1}(s_{l_m})$ is finite there is not such subsequence~$(s_{c_k})_k$. The result then follows by contraction.
\end{proof}

\section{Proof of Theorem \ref{the:finite_DCEG}}
\label{app:theorem_finite_DCEG}

\noindent
\emph{
	A DCEG supported by a staged tree $\mathcal{ST}_{\!\infty}$ is finite if and only if, for some time-slice~$t_b$, every situation~$s_b$ in~$l(\mathcal{ST}_{\!t_b})$ is in the same position as a situation~$s_a$ in~$\mathcal{ST}_{\!t_a}$, $t_a=0,\ldots,{t_b}$.
	\vspace{3pt}}

Assume that a finite DCEG is supported by a staged tree~$\mathcal{ST}_{\!\infty}$ such that no time-slice~$t_b$ satisfies the condition described in Theorem~\ref{the:finite_DCEG}. It will then follow that for every time-slice~$t_b, t_b=0,1,\ldots$, there must be at least one situation~$s_b(t_b\!+\!1)$ in~$l(\mathcal{ST}_{\!t_b})$ that is in position $w_{t_b}$, such that $w_{t_b}$ does not merge any situation~$s_a(t_a)$, $t_a=0,\ldots,{t_b}$. Therefore the number of positions in this staged tree is infinite. This contradicts the hypothesis. It follows that the supporting staged tree of a finite DCEG must satisfy the condition described in Theorem~\ref{the:finite_DCEG}.

Conversely, assume that for some time-slice~$t_b$, every situation~$s_b$ in~$l(\mathcal{ST}_{\!t_b})$ is in the same position as a situation~$s_a$ in~$\mathcal{ST}_{\!t_a}$, $t_a=0,\ldots,{t_b}$. Now suppose by absurd that the corresponding DCEG supported by $\mathcal{ST}_{\!\infty}$ is infinite; i.e., $\mathcal{ST}_{\!\infty}$ has an infinite number of positions. Let $t_c$, be the first time-slice after~$t_b$ such that a situation $s_c(t_c)$ is in a position~$w$ that does not merge any situation in the previous time-slices. Since we hypothesised that $\mathcal{ST}_{\!\infty}$ has an infinite number of positions, there must exist such position~$w$. Also let $a(s_i,t)$ be the antecedent situation of a situation $s_i$ such that $a(s_i,t)$~in~$l(\mathcal{ST}_t)$. Now take $\mathcal{ST}_{\!\infty}(a(s_c,t_b))$. It then follows from the assumed condition that $\mathcal{ST}_{\!\infty}(a(s_c,t_b))$ is graphically isomorphic with $\mathcal{ST}_{\!\infty}(s_a)$, for some situation~$s_a$ in~$\mathcal{ST}_{\!t_a}$. So, there is a situation $s_d(t_d)$, such that $t_d \in \{0,\ldots,t_c\!-\!1\}$, and $s_c$ and $s_d$ are in the same position~$w$. But this is a contraction since $s_c$ is the early situation in position~$w$. Therefore the DCEG must be finite.

\section{Proof of Theorem \ref{the:THST_finite_DCEG}}
\label{app:THST_finite_DCEG}

\noindent
\emph{
	Every time-homogeneous staged tree after time $T$ has an associated DCEG with a finite graph.
	\vspace{3pt}}

In any event tree, let $\xi(s_a,s_b)$ be the sequence of events that happen along the finite path between the situations $s_a$ and $s_b$, where $s_b$ descends from~$s_a$. Also let $a(s_i,t)$ be the antecedent situation of a situation $s_i$ such that ${a(s_i,t)} \in {l(\mathcal{T}_t)}$. By assumption, the event tree is a PET-$T$. So, for every situation $s_i$, ${s_i \in l(\mathcal{T}_{2T+1}})$, there is a situation~$s(t)$, ${t =0,1,\ldots,T}$, such that there exists a bijection between the infinite event trees $\mathcal{T}_\infty(a(s_i,T))$ and $\mathcal{T}_\infty(s(t))$ satisfying the conditions given in Definition~\ref{def:periodic_event_tree}. It then follows that there is a situation ${s_*(t^*)}$, ${t^*=t+T+1}$, such that: ${s_*(t^*) \in \mathcal{T}_\infty(s(t))}$;
${s_*(t^*)\in l(\mathcal{T}_{t^*-1})}$; ${\xi(s(t),s_*(t^*))\!=\!{\xi(a(s_i,T),s_i)}}$;
the infinite event trees $\mathcal{T}_\infty(s_i)$ and $\mathcal{T}_\infty(s_*(t^*))$ are graphical isomorphic in the sense of Equation~\ref{eq:bijection_periodicity_tree}. Time-homogeneity after time $T$ then tell us that there is also a probabilistic isomorphism between the primitive probabilities associated with $\mathcal{T}_\infty(s_i)$ and $\mathcal{T}_\infty(s_*(t^*))$. So, $s_i$ and $s_*(t^*)$, ${t^* \in \{1,\ldots,2T+1\}}$, are in the same position. DCEG must therefore be finite by Theorem~\ref{the:finite_DCEG}. 

\section{Proof of Theorem~\ref{theo:C_k}}
\label{app:C_k}

\noindent
\emph{
	Take an NT-DCEG $\mathbb{C}, N= 2,3,\ldots$. Then every CEG~$\mathbb{C}_t$ in~$\mathcal{F}(\mathbb{C})$ can be written as
	\vspace{-5pt}
	\begin{equation*}
	\label{eq:C_t}
	\mathbb{C}_t=
	\left\{
	\begin{array}{ll}
	\Phi(\mathbb{D}_{I(\infty)}^t),
	& \text{if } t= 0,\ldots, N-2, \\
	\Phi(\mathbb{D}_I \oplus \mathbb{G}_1 \oplus \mathbb{D}_{L(a)}^{t})
	& \text{if } t= N-1,\ldots, 2N{-\!\eta\!-\!2},\\
	\mathbb{D}_I \oplus \mathbb{G}_1 \oplus \mathbb{D}_R^{N\!-\!1,t-N+{\eta}}
	\oplus \mathbb{D}_L^t
	& \text{if } {t=2N{-\!\eta\!-\!1},2N\!-{\!\eta},\ldots}.
	\end{array} \right.
	\end{equation*}
	where $\mathbb{D}_L^t\!=\!\Phi(\mathbb{G}_{2(t-N+{\eta})} \oplus \mathbb{D}_{L(b)}^{t})$. 
	\vspace{3pt}}

Denote by $U_t$ the stage structure of each time-slice $t$ of an $N\text{T-DCEG}$~$\mathbb{C}$. By definition, a CEG~$\mathbb{C}_T$ in~$\mathcal{F}(\mathbb{C})$ has the same stage structure $U_t$ for every time-slice~$t$, ${t=-1,0,\ldots,T}$. Let $W_t^C$, and~$W_t^{C_T}$, ${t=-1,0,\ldots,T}$, be, respectively, the position structure of $\mathbb{C}$ and $\mathbb{C}_T$ associate with time-slice $t$. We therefore need to prove that $W_t^{C_T}=W_t^C$, when ${t\in \{-1,0,\ldots,T\!-\!N\!+\!{\eta}\}}$ and
${T\in\{2N\!-\!{\eta}\!-\!1,2N\!-\!{\eta},\ldots\}}$. If this is true, the result then follows by the definition of the subgraphs introduced in Section~\ref{sec:CEG_NTDCEG}.

To do this, take two situations $s_a$ and $s_b$ at time-slice $t$,  ${t\!=\!-\!1,\ldots,T\!\!-\!\!N\!\!+\!\!\eta}$, such that $s_a$ and $s_b$ are, respectively, in two different positions~$w_{a}$ and~$w_{b}$ in~$W_t^C$ but at the same stage $u_j \in U_t$. Note that we only need to consider situation at the same time-slice because $\mathcal{F}(\mathbb{C})$ is a family of CEGs that are constructed using the concept of $\infty$-position. Hypothesise that  $s_a$ and $s_b$ are in the same position~$w_c$ in~$W_t^{C_T}$. It then follows that the staged subtrees $\mathcal{ST}(s_a)$ and $\mathcal{ST}(s_b)$ of~$\mathcal{ST}_{\!\!\infty}$ must extend until the end of time-slice~$T$. By the definition of a TOG, this actually implies that  $\mathcal{ST}(s_a)$ and $\mathcal{ST}(s_b)$ must be infinite. 

Let $l_t(\lambda)$ be the first situation that happens at time-slice~$t$ in a path~$\lambda$.
Because of time-homogeneity and graphical periodicity conjugate with Lemma~\ref{lem:C_k} if $s_a$ and $s_b$  descend, respectively, from distinct leaf nodes $l_a$ and $l_b$ of a non-empty $\mathcal{T}_{\!-1}$, there is a probabilistic and graphical isomorphism between the infinite staged trees $\mathcal{ST}_{\!\!\infty}(l_\nu(\lambda_a))$ and $\mathcal{ST}_{\!\!\infty}(l_\nu(\lambda_b))$. Thus the situations $s_a$ and $s_b$ must be in a same position in the $N\text{T-DCEG}$~$\mathbb{C}$ which contradicts our initial hypothesis. So, if two situations at time $t,t=0,\ldots,T\!-\!N\!+\!1$, are in different positions in $\mathbb{C}$ then they must be in different positions in $\mathbb{C}_T$.

Observe now that if two situations at time $t, {t\!=\!-1,\ldots,T\!-\!N\!+\!1}$, are in a same position in $\mathbb{C}$ then they must also be in a same position $\mathbb{C}_T$. This happens because if the colourful trees unfolding from these two situations are isomorphic then their corresponding finite colourful subtrees associated with $\mathbb{C}_T$ are also isomorphic. This completes the proof.

\begin{myLemma}
	\label{lem:C_k} 
	Take an $N$T-DCEG supported by a TOG($\mathcal{T}_{\!-1},\mathcal{T}$), such that $\mathcal{T}_{\!-1} \neq \emptyset$, and a CEG~$\mathbb{C}_T$, ${T\in\{2N\!-\!1,2N,\ldots\}}$, in~$\mathcal{F}(\mathbb{C})$. Consider two situations $s_a(t)$ and $s_b(t)$, ${t\in \{-1,0,\ldots,T\!-\!N\}}$,  that descend, respectively, from distinct leaf nodes $l_a$ and $l_b$ of $\mathcal{T}_{\!-1}$. If these two situations are in the same position in $\mathbb{C}_T$, units unfolding from~$l_a$ and~$l_b$ in~$\mathbb{C}$ follow identical time-homogeneous processes at every time-slice~$t$, $t=N\!-\!1,N,\ldots$.
\end{myLemma}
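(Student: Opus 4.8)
The plan is to unpack the hypothesis ``$s_a$ and $s_b$ lie in the same position of $\mathbb{C}_T$'' into a concrete colour-preserving isomorphism of finite subtrees, to read off from it an equality of ``one time-slice'' coloured developments for the two time-invariant types, and then to bootstrap this to an isomorphism of the full infinite developments, using that the bound $t\le T-N$ leaves exactly enough recorded history to pin down the colouring in the homogeneous regime. First I would note that, because the CEGs in $\mathcal{F}(\mathbb{C})$ are built with the $\infty$-position concept, the hypothesis means precisely that $s_a$ and $s_b$ sit at the same time-slice $t$ and that there is a bijection $\psi:\mathcal{ST}_{T}(s_a)\to\mathcal{ST}_{T}(s_b)$ of the finite staged subtrees preserving edge labels, vertex and edge colours, and the time-slice index attached to each event (Definition~\ref{def:position}, read over the finite tree $\mathcal{ST}_{T}$).

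Next I would isolate a ``memory function''. From the two defining features of a TOG$(\mathcal{T}_{-1},\mathcal{T})$ --- every time-slice unfolds as a fresh copy of a class-determined subtree of $\mathcal{T}$, and the staged tree is time-homogeneous after time $N-1$ (Definitions~\ref{def:periodic_event_tree_generated_by_T} and~\ref{def:TH-STT}) --- it follows that for any situation at the start of a time-slice $\ge N-1$ the coloured development over that one time-slice is a function $F(c,h)$ only of the $\mathcal{T}_{-1}$-type $c$ and of the sequence $h$ of events in the preceding $m\le N-1$ complete time-slices. The target then reduces to proving $F(l_a,h)=F(l_b,h)$ for every $h$ reachable in the homogeneous regime, the reachable set being the same for both types since the identical tree $\mathcal{T}$ is plugged in below $l_a$ and below $l_b$.

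To harvest this from $\psi$, observe that because $t\le T-N$ and $m\le N-1$, each situation $q_a$ at the start of time-slice $t+N$ in $\mathcal{ST}_{T}(s_a)$ has its whole $m$-time-slice history recorded inside $\mathcal{ST}_{T}(s_a)$ and its time-slice-$(t+N)$ development recorded too; $\psi$ carries $q_a$ and that development to the matching data at $\psi(q_a)$, whence $F(l_a,h)=F(l_b,h)$ for the history $h$ of $q_a$. The point is that these histories exhaust the reachable set: the blocks occupying time-slices $t+1,\dots,t+N-1$ are independent fresh copies of $\mathcal{T}$ (Type~A) or range over all $\Upsilon$-compatible chains of such copies (Type~B), so every reachable $m$-block history already occurs at time-slice $t+N$ within $\mathcal{ST}_{T}$. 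Hence $F(l_a,\cdot)=F(l_b,\cdot)$ on all reachable histories; since the two developments from time-slice $N-1$ on are then governed by the same memory function over the same (periodic) event structure, building the correspondence time-slice by time-slice gives the required probabilistic and graphical identification of the $l_a$- and $l_b$-developments of $\mathcal{ST}_{\infty}$ at every time-slice $N-1,N,\dots$.

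The delicate step, and the one I expect to absorb the real work, is the claim that the recorded part of $\mathcal{ST}_{T}$ already displays \emph{every} reachable $m$-time-slice history, so that one isomorphism $\psi$ determines the colouring everywhere rather than merely matching the two trees to some finite depth. Here the slack $t\le T-N = T-(m+1)$ is exactly what is needed for a memory of $m=N-1$ time-slices, and one must separately handle the early regime (when $t$ is below $N-1$, so $s_a,s_b$ have not yet entered the homogeneous part), the two TOG types, and the terminating branches in Type~B; checking that ``reachable history'' is shift-closed is precisely what converts the pointwise equalities of $F$ into a genuine isomorphism of the infinite coloured trees.
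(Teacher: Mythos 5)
Your proposal is correct and follows essentially the same route as the paper's proof: both extract from the same-position hypothesis a colour- and time-slice-preserving isomorphism of the finite staged subtrees, use the slack $t\le T-N$ to see a full time-homogeneous time-slice together with its preceding $(N-1)$-slice history inside the recorded tree, and then invoke time-homogeneity and the periodicity of the TOG$(\mathcal{T}_{-1},\mathcal{T})$ to propagate the identification to every time-slice $N-1,N,\ldots$. Your explicit ``memory function'' and the verification that the histories exhibited at time-slice $t+N$ exhaust all reachable ones simply make precise the step the paper compresses into a single appeal to time-homogeneity and graphical periodicity.
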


\begin{proof}
	Being in the same position $w_c$ guarantees that there is a graph and probabilistic isomorphism between the staged subtrees $\mathcal{ST}_{\!\!\nu}(s_a)$ and $\mathcal{ST}_{\!\!\nu}(s_b)$, where $\nu=\max(N,t\!+\!N)$. So, we can then map every $s_a$-to-$l_i$ path~$\lambda_a$, $l_i \in l(ST_\nu(s_a))$,  to a $s_b$-to-$l_j$ path~$\lambda_b$, $l_j \in l(ST_\nu(s_b))$. Let $l_t(\lambda)$ be the first situation that happens at time-slice~$t$ in a path~$\lambda$. 
		
Time-homogeneity after time~${N\!-\!1}$ and the graphical periodicity enforced by the TOG($\mathcal{T}_{\!-1},\mathcal{T}$) then guarantees that the finite staged trees $\mathcal{ST}_{\!\!\nu}(l_{\nu-1}(\lambda_a))$ and $\mathcal{ST}_{\!\!\nu}(l_{\nu-1}(\lambda_b))$ are probabilistically and graphically isomorphic. Note that $\nu$ is greater or equal to~$N$. Therefore, if $s_a$ and $s_b$ descend, respectively, from different leaf nodes $l_a$ and $l_b$ of $\mathcal{T}_{\!-1}$, the time-homogeneous processes corresponding to units that unfold from~$l_a$ and~$l_b$ are identical for every time-slice~$t$, $t=N\!-\!1,N,\ldots$.
\end{proof}

\section{Proof of Theorem \ref{the:legend}}
\label{app:legend}

\noindent
\emph{
	Take an NT-DCEG $\mathbb{C}=(V,E)$ and define the set of positions $\mathcal{W}_{Head}$ in~$V$, according to Definition~\ref{def:temporal_edges}. In a CEG ${\mathbb{C}_T\!=\!(V_T,\!E_T)}$, ${T \!=\! 2N\!\!-\!{\eta}, 2N\!\!-{\eta\!+\!1},\ldots}$ and~${\mathbb{C}_T \!\in\! \mathcal{F}(\mathbb{C})}$, for every position~$w_a^*(t)$ in the set~$f_t(\mathcal{W}_{Head})$, ${f_t(\mathcal{W}_{Head}) \subseteq V_T}$ and~${t=N,\ldots,T\!-\!N\!+\!{\eta}}$, we have that
	\vspace{-7pt}
	\begin{equation*}
	\Xi_c(w_a^*(t),N\!-\!1)=\Xi_c(w_b^*(N\!-\!1),N\!-\!1),
	\vspace{-7pt}
	\end{equation*}
	where $w_b^*(N\!-\!1)= f_{N\!-\!1}(f_{t}^{-1}(w_a^*(t))) \in V_T$.
	\vspace{5pt}}

Suppose that this result does not hold. Then, according to Theorem~\ref{theo:C_k} a CEG~$\mathbb{C}_T$ contains at least one position~$w_a^*(t)$ in~$f_t(\mathcal{W}_{Head})$, $t=N, \ldots, T\!-\!N\!+\!{\eta}$, such that
${\Xi_c(\!w_a^*(t),\!N\!-\!1) - \Xi_c(\!w_b^*(\!N\!-\!1),\!N\!-\!1)=\Xi_\delta=\{\xi_i(w_a^*(t),\!N\!-\!1)\} \!\neq\! \emptyset}$ 
and $w_b^*(N\!-\!1)\!=\! f_{N\!-\!1}(f_{t}^{-1}(w_a^*(t)))$. Note that in the $N$T-DCEG~$\mathbb{C}$ the positions ${w_a=f_{t}^{-1}(w_a^*(t))}$ and ${w_b=f_{N\!-\!1}^{-1}(w_b^*(N\!-\!1))}$ are the same: $w_a \equiv w_b$.
For every sequence of events~$\xi_i$ in~$\Xi_\delta$ there is therefore a position~$w_c^*(N\!-\!1)$ in~$V_T$, such that ${\xi_i \in \Xi_c(w_c^*(N\!-\!1),N\!-\!1)}$. So, in the infinite event tree associated with $\mathbb{C}$ there are at least two situations $s_a$ and~$s_c$, such that
${s_a \in w_a}$, ${s_c \!\in\! w_c=f_{N\!-\!1}^{-1}(w_c^*(N\!-\!1))}$, and both situations descend from the same leaf node of $\mathcal{T}_{-1}$ if $\mathcal{T}_{-1} \neq \emptyset$. Note that $w_c \in V$. Because of the periodicity and time-homogeneity after time $N\!-\!1$, it follows that there is an isomorphism between the staged subtrees that unfold from situations $s_a$ and $s_c$. Therefore, both situations are in the same position. Thus we have that $w_a \equiv w_c$ in $\mathbb{C}$. This implies that $w_b \equiv w_c$ and so $w_b^*(N\!-\!1) \equiv w_c^*(N\!-\!1)$. This means that $\{\xi_i(w(t),N\!-\!1)\}$ is empty. The result then follows by contradiction.

\section*{Acknowledgements}
\label{sec:Acknowledgement}
Rodrigo A. Collazo was supported by the Brazilian Navy and CNPq-Brazil [grant number 229058/2013-2]. Jim Q. Smith was supported by the Alan Turing Institute and funded by EPSRC [grant number EP/K039628/1].

\fontsize{11pt}{12pt}\selectfont 
\bibliographystyle{elsarticle-num} 
\bibliography{Bibliography_v1}

\end{document}